\title{Unifying PAC and Regret: Uniform PAC Bounds for Episodic Reinforcement Learning}
\author{
  Christoph Dann\\
  Machine Learning Department\\
  Carnegie-Mellon University\\
  \texttt{cdann@cdann.net} \\
\And
Tor Lattimore\thanks{Tor Lattimore is now at DeepMind, London}\\
  \texttt{tor.lattimore@gmail.com} \\
\And
    Emma Brunskill\\
    Computer Science Department\\
    Stanford University\\
    \texttt{ebrun@cs.stanford.edu}
}
\newcommand{\E}{\mathbb E}
\newtheorem{thm}{Theorem}
\newtheorem{defn}{Definition}
\newtheorem{cor}{Corollary}
\newtheorem{lem}{Lemma}
\setlist[itemize]{leftmargin=1cm}
\setlist[enumerate]{leftmargin=1cm}
\definecolor{DarkRed}{rgb}{0.75,0,0}
\definecolor{DarkGreen}{rgb}{0,0.5,0}
\definecolor{DarkPurple}{rgb}{0.5,0,0.5}
\definecolor{Dark}{rgb}{0.5,0.5,0}
\definecolor{DarkBlue}{rgb}{0,0,0.7}
\DeclareMathOperator*{\argmax}{arg\,max}
\newcommand{\prob}{\mathbb P}
\newcommand{\Ex}{\mathbb E}
\newcommand{\indicator}[1]{\mathbb I\{ #1 \} }
\newcommand{\statespace}{\mathcal S}
\newcommand{\actionspace}{\mathcal A}
\newcommand{\saspace}{\statespace \times \actionspace}
\newcommand{\numS}{S}
\newcommand{\numA}{A}
\newcommand{\wmin}{w_{\min}}
\newcommand{\wminc}{w'_{\min}}
\newcommand{\range}{\operatorname{rng}}
\newcommand{\polylog}{\operatorname{polylog}}
\newcommand{\reals}{\mathbb R}
\newcommand{\set}[1]{\left\{#1\right\}}
\newcommand{\llnp}{\operatorname{llnp}}
\newcommand{\defeq}{:=}
\newcommand{\algname}{UBEV\xspace}
\let\epsilon\varepsilon
\begin{document} 
\maketitle
\begin{abstract} 
Statistical performance bounds for reinforcement learning (RL) algorithms can be critical for high-stakes applications like healthcare.
This paper introduces a new framework for theoretically measuring the performance of such algorithms called \emph{Uniform-PAC}, which is
a strengthening of the classical Probably Approximately Correct (PAC) framework. In contrast to the PAC framework, the uniform version may be used
to derive high probability regret guarantees and so forms a bridge between the two setups that has been missing in the literature.
We demonstrate the benefits of the new framework for finite-state episodic MDPs with a
new algorithm that is Uniform-PAC and simultaneously achieves optimal regret \textit{and} PAC guarantees except for a factor of the horizon.

\end{abstract} 
\section{Introduction}
The recent empirical successes of deep reinforcement learning (RL) are tremendously exciting, but the performance of these approaches still varies significantly
across domains, each of which requires the user to solve a new tuning problem \citep{franccois2015discount}.
Ultimately we would like reinforcement learning algorithms that simultaneously perform well empirically and have strong theoretical guarantees. 
Such algorithms are especially important for high stakes domains like health care, education and customer service, where non-expert users demand excellent outcomes.

We propose a new framework for measuring the performance of reinforcement learning algorithms called Uniform-PAC. 
Briefly, an algorithm is Uniform-PAC if with high probability it simultaneously for all $\epsilon > 0$ selects an $\epsilon$-optimal policy
on all episodes except for a number that scales polynomially with $1/\epsilon$. 
Algorithms that are Uniform-PAC converge to an optimal policy
with high probability and immediately yield both PAC and high probability
regret bounds, which makes them superior to algorithms that come with only PAC or regret guarantees.
Indeed,
\begin{enumerate}
\item[(a)] Neither PAC nor regret guarantees imply convergence to optimal policies with high probability;
\item[(b)] $(\epsilon,\delta)$-PAC algorithms may be $\epsilon/2$-suboptimal in every episode;
\item[(c)] Algorithms with small regret may be maximally suboptimal infinitely often.
\end{enumerate}
Uniform-PAC algorithms suffer none of these drawbacks.
One could hope that existing algorithms with PAC or regret guarantees might be Uniform-PAC already, with only the analysis missing.
Unfortunately this is not the case and
modification is required to adapt these approaches to satisfy the new performance metric. 
The key insight for obtaining Uniform-PAC guarantees is to
leverage time-uniform concentration bounds such as the finite-time versions of the law of iterated logarithm, which
obviates the need for horizon-dependent confidence levels. 

We provide a new optimistic algorithm for episodic RL called UBEV 
that is Uniform PAC.  Unlike its predecessors, UBEV uses confidence intervals
based on the law of iterated logarithm (LIL) which hold uniformly over time.
They allow us to more tightly control the probability of failure events in which
the algorithm behaves poorly. Our 
analysis is nearly optimal according to the traditional metrics, with a linear dependence on the state
space for the PAC setting and square root dependence for the regret.
Therefore UBEV is a Uniform PAC algorithm with PAC bounds and high probability
regret bounds that are near optimal in the dependence on the length of the
episodes (horizon) and optimal in the state and action spaces cardinality as well as
the number of episodes. To our knowledge UBEV is the first algorithm with both
near-optimal PAC and regret guarantees.

\paragraph{Notation and setup.}
We consider episodic fixed-horizon MDPs with time-dependent dynamics, which can be formalized as
a tuple $M = (\statespace, \actionspace, p_R, P, p_0, H)$. The statespace
$\statespace$ and the actionspace $\actionspace$ are finite sets with cardinality $\numS$ and $\numA$. The 
agent interacts with the MDP in episodes of $H$ time steps each. 
At the beginning of each time-step $t \in [H]$ the agent observes a state $s_t$ and chooses an action $a_t$ based on a
policy $\pi$ that may depend on the within-episode time step ($a_t = \pi(s_t, t)$).
The next state is sampled from the $t$th transition kernel $s_{t+1} \sim
P(\cdot | s_t, a_t, t)$ and the initial state from $s_1 \sim p_0$. The agent then receives a reward drawn from a distribution $p_R(s_t, a_t, t)$ which can depend on $s_t, a_t$ and  $t$ 
with mean $r(s_{t}, a_t , t)$ determined by the reward function. The reward distribution $p_R$ is supported on $[0,1]$.\footnote{The reward may be allowed to depend on the next-state with no further effort in the proofs. The boundedness assumption
could be replaced by the assumption of subgaussian noise with known subgaussian
parameter.}
The value function from time step $t$ for policy $\pi$ is defined as
\begin{align*}
&V^\pi_{t}(s) 
\defeq \Ex\left[ \sum_{i=t}^H r(s_i, a_i, i) \bigg| s_t = s\right] = \sum_{s' \in \statespace}P(s'|s, \pi(s, t), t)  V^\pi_{t+1}(s') +r(s, \pi(s, t), t)\,.
\end{align*}
and the optimal value function is denoted by $V^{\star}_t$.  
In any fixed episode, the quality of a policy $\pi$ is
evaluated by the \emph{total expected reward} or \emph{return}
\begin{align*}
    \rho^\pi \defeq \Ex\left[ \sum_{i=t}^H r(s_i, a_i, i) \big| \pi \right] = p_0^\top V_1^{\pi}\,,
\end{align*}
which is compared to the \emph{optimal return} $\rho^{\star} = p_0^\top V_1^{\star}$. For this notation $p_0$ and the value functions $V^\star_t$, $V_1^{\pi}$ are interpreted as vectors of length $\numS$.
If an algorithm follows policy $\pi_k$ in episode $k$, then the
optimality gap in episode $k$ is $\Delta_k \defeq \rho^{\star} - \rho^{\pi_k}$ which is bounded by $\Delta_{\max} = \max_{\pi} \rho^{\star} - \rho^{\pi} \leq H$.
We let $N_\epsilon \defeq \sum_{k=1}^\infty \indicator{\Delta_k > \epsilon}$ be the number of $\epsilon$-errors and $R(T)$ be the regret after $T$ episodes: $R(T) \defeq \sum_{k=1}^T \Delta_k$.
Note that $T$ is the number of episodes and not total time steps (which is $HT$ after $T$ episodes) and $k$ is an episode index while $t$ usually denotes time indices within an episode.
The $\tilde O$ notation is similar to the usual $O$-notation but suppresses additional polylog-factors, that is $g(x) = \tilde O(f(x))$ iff there is a polynomial $p$ such that $g(x) = O(f(x) p(\log(x)))$.

\section{Uniform PAC and Existing Learning Frameworks}

We briefly summarize the most common performance measures used in the literature.
\begin{itemize}
\item \textit{$(\epsilon, \delta)$-PAC:} There exists a polynomial function $F_\textrm{PAC}(S, A, H, 1/\epsilon, \log(1/\delta))$ such that
\begin{align*}
\prob\left(N_\epsilon > F_{\textrm{PAC}}(S, A, H, 1/\epsilon, \log(1/\delta))\right) \leq \delta\,. 
\end{align*}
\item \textit{Expected Regret:} There exists a function $F_{\textrm{ER}}(S, A, H, T)$ such that $\E[R(T)] \leq F_{\textrm{ER}}(S, A, H, T)$.
\item \textit{High Probability Regret:} There exists a function $F_{\textrm{HPR}}(S, A, H, T, \log(1/\delta))$ such that 
\begin{align*}
\prob\left(R(T) > F_{\textrm{HPR}}(S, A, H, T, \log(1/\delta))\right) \leq \delta\,.
\end{align*}
\item \textit{Uniform High Probability Regret:} There exists a function $F_{\textrm{UHPR}}(S, A, H, T, \log(1/\delta))$ such that
\begin{align*}
\prob\left(\text{exists } T : R(T) > F_{\textrm{UHPR}}(S, A, H, T, \log(1/\delta))\right) \leq \delta\,.
\end{align*}
\end{itemize}
In all definitions the function $F$ should be polynomial in all arguments. For
notational conciseness we often omit some of the parameters of $F$ where the
context is clear.  The different performance guarantees are widely used (e.g.
PAC: \citep{Lattimore2012,Dann2015,Jiang2016,Strehl2009}, (uniform)
high-probability regret: \citep{Jaksch2010,Agarwal2014, Srinivas2010}; expected
regret: \citep{Audibert2009, Auer2000, Bubeck2012, AuerOrtner2005}).
Due to space constraints, we will not discuss Bayesian-style
performance guarantees that only hold in expectation with respect to a distribution over problem instances.
We will shortly discuss the limitations of the frameworks listed above, but first
formally define the Uniform-PAC criteria

\begin{defn}[Uniform-PAC]
An algorithm is Uniform-PAC for $\delta > 0$ if 
\begin{align}
\prob\left(\text{exists } \epsilon > 0 : \, N_\epsilon > F_{\textrm{UPAC}}\left(S, A, H, 1/\epsilon, \log(1/\delta)\right) \right) \leq \delta\,,
\end{align}
where $F_{\textrm{UPAC}}$ is polynomial in all arguments.
\end{defn}

All the performance metrics are functions of the distribution of the sequence of errors over the episodes $(\Delta_k)_{k \in \mathbb N}$.
Regret bounds are the integral of this sequence up to time $T$, which is a random variable. The expected regret is just the expectation of the integral, while
the high-probability regret is a quantile. PAC bounds are the quantile of the size of the superlevel set for a fixed level $\epsilon$.
Uniform-PAC bounds are like PAC bounds, but hold for all $\epsilon$ simultaneously.

\paragraph{Limitations of regret.}
Since regret guarantees only bound the integral of $\Delta_k$ over $k$, it does not
distinguish between making a few severe mistakes and many small mistakes. In fact, since regret bounds
provably grow with the number of episodes $T$, an algorithm that achieves optimal regret may still make infinitely many mistakes (of arbitrary quality, see proof of Theorem~\ref{thm:existingconv} below).
This is highly undesirable in high-stakes scenarios. For example in drug treatment optimization in healthcare, 
we would like to distinguish between infrequent severe complications 
(few large $\Delta_k$) and frequent minor side effects  
(many small $\Delta_k$). In fact, even with an optimal regret bound,
we could still serve infinitely patients with the worst possible treatment.

\paragraph{Limitations of PAC.}
PAC bounds limit the number of mistakes for a given accuracy level $\epsilon$, but is otherwise non-restrictive. That means an algorithm 
with $\Delta_k > \epsilon / 2$ for all $k$ almost surely might still be $(\epsilon,\delta)$-PAC. Worse, many algorithms designed to 
be $(\epsilon, \delta)$-PAC actually exhibit this behavior because they explicitly halt learning once an $\epsilon$-optimal policy has been found. The less widely used TCE (total cost of exploration) bounds \citep{Pazis2016} and KWIK guarantees~\citep{Li2010} suffer from the same issueand for conciseness are not discussed in detail.

\paragraph{Advantages of Uniform-PAC.}
The new criterion overcomes the limitations of PAC and regret guarantees by measuring the number of $\epsilon$-errors at every level simultaneously.
By definition, algorithms that are Uniform-PAC for a $\delta$ are $(\epsilon, \delta)$-PAC for all $\epsilon > 0$.
We will soon see that an algorithm with a non-trivial Uniform-PAC guarantee also has small regret with high probability. Furthermore, 
there is no loss in the reduction so that an algorithm with optimal Uniform-PAC guarantees also has optimal regret, at least in the episodic RL setting. 
In this sense Uniform-PAC is the missing bridge between regret and PAC. 
Finally, for algorithms based on confidence bounds, Uniform-PAC guarantees are usually obtained without much additional work by replacing standard concentration
bounds with versions that hold uniformly over episodes (e.g. using the law of the iterated logarithms).
In this sense we think Uniform-PAC is the new `gold-standard' of theoretical guarantees for RL algorithms.

\subsection{Relationships between Performance Guarantees}
Existing theoretical analyses usually focus exclusively on either the regret or PAC framework.
Besides occasional heuristic translations, Proposition~4 in \citep{Strehl2008} and Corollary~3 in \citep{Jaksch2010} are the only results relating a notion of PAC and regret, we are aware of. Yet the guarantees there are not widely used\footnote{The average per-step regret in \citep{Jaksch2010} is superficially a PAC bound, but does not hold over infinitely many time-steps and exhibits the limitations of a conventional regret bound. The translation to average loss in \citep{Strehl2008} comes at additional costs due to the discounted infinite horizon setting.} unlike the definitions given above which we now formally relate to each other. A simplified overview of the relations discussed below is shown in Figure~\ref{fig:relations}.

\begin{figure}
    \begin{center}
    \includegraphics[width=.8\textwidth]{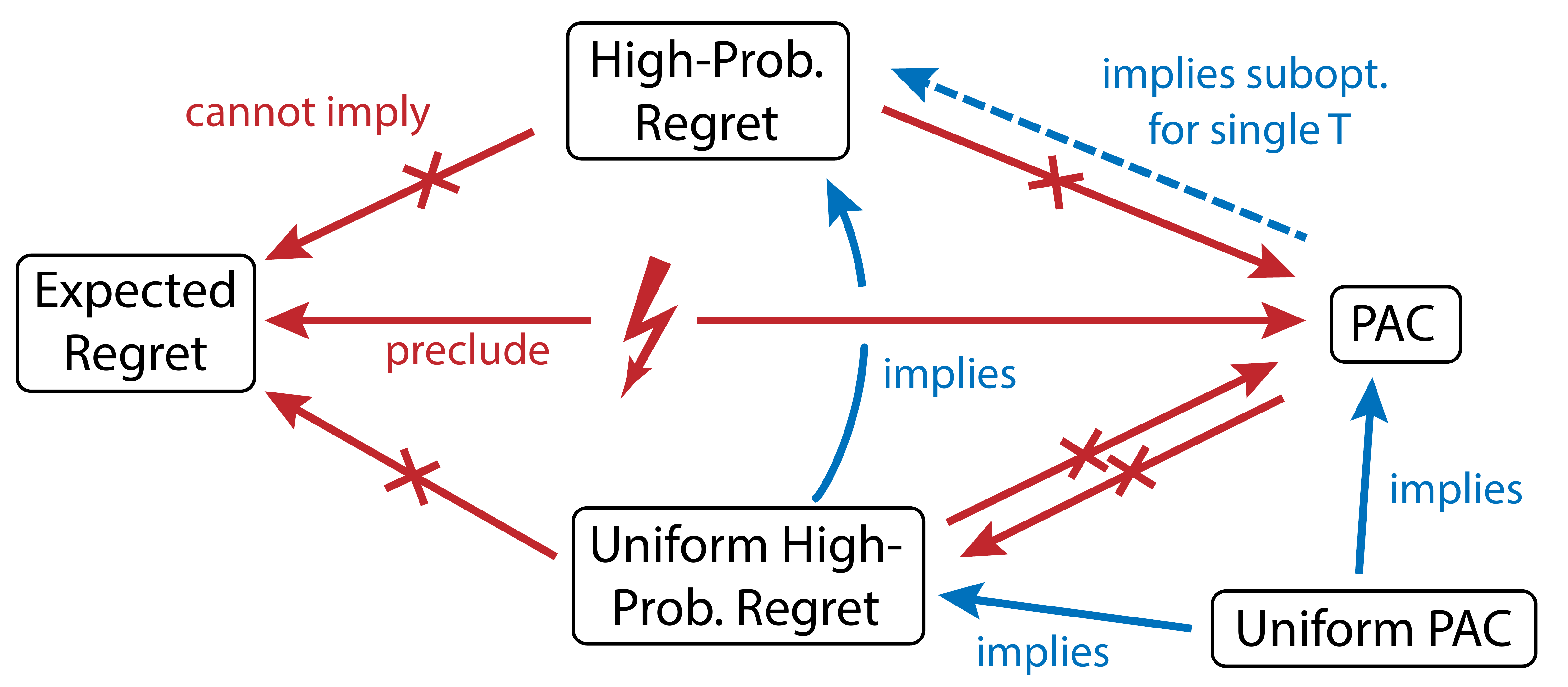}
\end{center}
\caption{Visual summary of relationship among the different learning frameworks: Expected regret (ER) and PAC preclude each other while the other crossed arrows represent only a \emph{does-not-implies} relationship. Blue arrows represent \emph{imply} relationships. For details see the theorem statements.}
    \label{fig:relations}
\end{figure}

\begin{thm}
    \label{thm:noerandpac}
No algorithm can achieve 
\begin{itemize}[itemsep=0pt]
    \item a sub-linear expected regret bound for all $T$ and 
    \item a finite $(\epsilon, \delta)$-PAC bound for a small enough $\epsilon$
 \end{itemize}
 simultaneously 
for all two-armed multi-armed bandits with Bernoulli reward distributions. This implies that such guarantees also cannot be satisfied simultaneously for all episodic MDPs.
\end{thm}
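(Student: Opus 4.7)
The plan is a two-instance information-theoretic argument by contradiction. Suppose some algorithm $\mathcal A$ is simultaneously $(\epsilon,\delta)$-PAC with finite sample complexity $F=F(\epsilon,\delta)$ and has sublinear expected regret on every two-armed Bernoulli bandit. I would consider the asymmetric pair
\[
I_+ = (\operatorname{Ber}(1/2),\,\operatorname{Ber}(1/2-2\epsilon)),\qquad
I_- = (\operatorname{Ber}(1/2),\,\operatorname{Ber}(1/2+2\epsilon)),
\]
which agree on arm $1$, have opposite optimal arms, and share optimality gap $2\epsilon>\epsilon$, so the PAC guarantee applies on both: $\prob_{I_+}(N_2(\infty)>F)\le\delta$ and $\prob_{I_-}(N_1(\infty)>F)\le\delta$.

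The next step is to cash in PAC for an $\mathcal F_T$-measurable separating event. For any $T>2F$ the event $A_T=\{N_2(T)\le F\}$ satisfies $\prob_{I_+}(A_T)\ge 1-\delta$ (directly from PAC on $I_+$) and $\prob_{I_-}(A_T)\le\delta$ (on $I_-$ the PAC good event forces $N_2(T)\ge T-F>F$). Hence $\|P_{I_+,T}-P_{I_-,T}\|_{\mathrm{TV}}\ge 1-2\delta$, and Pinsker's inequality yields $KL(P_{I_+,T}\,\|\,P_{I_-,T})\ge 2(1-2\delta)^2$. Because arm $1$ has an identical reward distribution under both instances, the chain rule collapses the KL to
\[
KL(P_{I_+,T}\,\|\,P_{I_-,T}) = \Ex_{I_+}[N_2(T)]\cdot KL(\operatorname{Ber}(1/2-2\epsilon)\,\|\,\operatorname{Ber}(1/2+2\epsilon)) = \Theta(\epsilon^2)\,\Ex_{I_+}[N_2(T)].
\]

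Finally I would invoke sublinear expected regret on $I_+$: since the only way to incur regret is to pull arm $2$, $\Ex_{I_+}[R(T)] = 2\epsilon\,\Ex_{I_+}[N_2(T)] = o(T)$, so $\Ex_{I_+}[N_2(T)] = o(T)/\epsilon$. Combining this matching upper bound with the Pinsker lower bound gives $2(1-2\delta)^2 \le \Theta(\epsilon)\cdot o(T)$, which fails for $\epsilon$ chosen small enough and $T$ in the appropriate window, delivering the contradiction. The implication for episodic MDPs is immediate by identifying each bandit instance with a single-state, two-action, horizon-$H=1$ MDP. The main obstacle will be certifying that the admissible window for $T$---above $2F(\epsilon,\delta)$ yet small enough that $\Theta(\epsilon)\cdot o(T)$ stays below $(1-2\delta)^2$---is non-empty; this is handled by applying the same Pinsker inequality at $T=2F$ to recover the classical sample-complexity bound $F=\Omega(1/\epsilon^2)$, and then pushing $\epsilon$ small relative to $\delta$ so that the $o(T)$ term has not yet caught up with the Pinsker threshold at that scale.
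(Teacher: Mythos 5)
There is a genuine gap in the final step, and it is not repairable by the fix you sketch. Your Pinsker/chain-rule argument correctly yields, for any $T>2F$, the lower bound $\Ex_{I_+}[N_2(T)]\ge 2(1-2\delta)^2/\mathrm{kl}=\Omega\bigl((1-2\delta)^2/\epsilon^2\bigr)$, and the regret assumption yields the upper bound $\Ex_{I_+}[N_2(T)]\le F_{\mathrm{ER}}(T)/(2\epsilon)$. But these two bounds are mutually consistent: to force a contradiction you need some admissible $T$ with $T>2F(\epsilon,\delta)$ \emph{and} $F_{\mathrm{ER}}(T)\lesssim (1-2\delta)^2/\epsilon$, and nothing guarantees such a window exists. $F_{\mathrm{ER}}$ is only known to be $o(T)$ and $F(\epsilon,\delta)$ is an arbitrary polynomial in $1/\epsilon$; already for $F_{\mathrm{ER}}(T)=\sqrt{T}$ or $T/\ln T$ and $F(\epsilon,\delta)=\epsilon^{-2}$ one has $F_{\mathrm{ER}}(2F)\gg 1/\epsilon$, so the window is empty. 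Your proposed repair goes in the wrong direction: recovering $F=\Omega(1/\epsilon^2)$ is a \emph{lower} bound on $F$, whereas you would need an upper bound on $F_{\mathrm{ER}}(2F(\epsilon,\delta))$, and shrinking $\epsilon$ only inflates $F$ and hence $F_{\mathrm{ER}}(2F)$. Conceptually, no averaged KL-at-a-single-horizon argument can prove this theorem: distinguishing the two instances costs only $\Theta(1/\epsilon)$ regret, which a sublinear-regret algorithm happily pays; the true incompatibility is an infinite-horizon phenomenon (PAC forces the algorithm to stop pulling the ``suboptimal'' arm forever with probability $\ge 1-\delta$), not a finite-time exploration/regret trade-off.

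The paper's proof captures exactly this via an event-wise change of measure rather than Pinsker. On the instance where arm $2$ is suboptimal, the PAC bound gives an event $\{n_2\le N\}$ (over the \emph{infinite} run) of probability $\beta\ge 1-\delta$. On this event the likelihood ratio between the two instances is bounded below pointwise by $(1-2\alpha)^N$, since only the at most $N$ arm-$2$ reward observations contribute non-unit factors; hence under the alternative instance, where arm $2$ is optimal, the same event still has probability at least $(1-2\alpha)^N\beta>0$, and on it the optimal arm is played only finitely often, so $\Ex R(T)=\Omega(T)$ for all large $T$ --- contradicting sublinear expected regret. If you keep your two-instance setup but replace the TV/Pinsker step with this pointwise likelihood-ratio bound restricted to the low-exploration event (or, equivalently, a change-of-measure inequality applied to that event rather than to the whole law), the argument closes; as it stands, the contradiction is not established.
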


A full proof is in Appendix~\ref{sec:noerandpac}, but the intuition is simple. Suppose a two-armed Bernoulli bandit has mean rewards $\nicefrac{1}{2}+\epsilon$ and $\nicefrac{1}{2}$ respectively and the second arm is chosen at most $F < \infty$ 
times with probability at least $1 - \delta$, then one can easily show that in an alternative bandit with mean rewards $\nicefrac{1}{2}+\epsilon$ and $\nicefrac{1}{2}+2\epsilon$ there is a non-zero
probability that the second arm is played finitely often and in this bandit the expected regret will be linear.
Therefore, sub-linear expected regret is only possible if each arm is pulled infinitely often almost surely.

\begin{thm}
The following statements hold for performance guarantees in episodic MDPs: 
\begin{enumerate}
\item[(a)]
    If an algorithm satisfies a $(\epsilon, \delta)$-PAC bound with $F_{\textrm{PAC}} = \Theta(1 /
    \epsilon^2)$ then it satisfies for a specific $T=\Theta(\epsilon^{-3})$ a $F_{\textrm{HPR}} = \Theta(T^{2/3})$ bound.
    Further, there is an MDP and algorithm that satisfies the $(\epsilon, \delta)$-PAC bound $F_{\textrm{PAC}} = \Theta(1 / \epsilon^2)$ on that MDP and has regret $R(T) = \Omega(T^{2/3})$ on that MDP for any $T$.
    That means a $(\epsilon, \delta)$-PAC bound with $F_{\textrm{PAC}} = \Theta(1 /
    \epsilon^2)$ can only be converted to a high-probability regret bound with
    $F_{\textrm{HPR}} = \Omega(T^{2/3})$. 
\item[(b)] For any chosen $\epsilon, \delta > 0$ and $F_{\textrm{PAC}}$, there is an MDP and algorithm that satisfies the $(\epsilon, \delta)$-PAC bound $F_{\textrm{PAC}}$ on that MDP and has regret $R(T) = \Omega(T)$ on that MDP. That means 
    a $(\epsilon, \delta)$-PAC bound cannot be converted to a sub-linear uniform high-probability regret bound.
\item[(c)] For any  $F_{\textrm{UHPR}}(T, \delta)$ with  $F_{\textrm{UHPR}}(T, \delta) \rightarrow \infty$  as $T \rightarrow \infty$, there is an algorithm that satisfies that uniform high-probability regret bound on some MDP but makes infinitely many mistakes for any sufficiently small accuracy level $\epsilon > 0$ for that MDP. Therefore, a high-probability regret bound (uniform or not) cannot be converted to a finite $(\epsilon, \delta)$-PAC bound.
\item[(d)] For any $F_{\textrm{UHPR}}(T, \delta)$ there is an algorithm that satisfies that uniform high-probability regret bound on some MDP but suffers expected regret $\Ex R(T) = \Omega(T)$ on that MDP.
    \end{enumerate}
    \label{thm:existingconv}
\end{thm}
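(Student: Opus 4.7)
For part (a), the upper direction is a standard balancing argument: since $\Delta_k \leq H$ always and $\Delta_k \leq \epsilon$ on all but $N_\epsilon$ episodes, with probability $1 - \delta$,
\[
R(T) \leq \epsilon T + H \cdot N_\epsilon \leq \epsilon T + cH/\epsilon^2,
\]
which is optimized at $\epsilon \propto T^{-1/3}$ to yield $R(T) = O(T^{2/3})$ at $T = \Theta(\epsilon^{-3})$. For the matching lower bound I would exhibit an explicit MDP--algorithm pair, for instance a hard two-armed Bernoulli bandit together with an explore-then-commit style algorithm whose exploration schedule is calibrated so that the $(\epsilon,\delta)$-PAC bound is saturated at $\Theta(1/\epsilon^2)$ while the regret is forced to be $\Omega(T^{2/3})$, showing that the $T^{2/3}$ rate of the conversion cannot be improved.

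Part (b) is a one-line counterexample: take an MDP with two policies, one optimal and one that is exactly $\epsilon/2$-suboptimal, and let the algorithm always play the suboptimal policy. Then $\Delta_k = \epsilon/2 < \epsilon$ in every episode, so $N_\epsilon = 0$ and any $(\epsilon,\delta)$-PAC bound $F_{\textrm{PAC}}$ holds vacuously, while $R(T) = (\epsilon/2) T = \Omega(T)$.

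For part (c), start with a trivial base algorithm on a simple MDP (e.g., an oracle on a two-armed bandit) for which $R(T) = 0$ and hence any $F_{\textrm{UHPR}}$ is trivially satisfied. Since $F_{\textrm{UHPR}}(T,\delta) \to \infty$, choose a sparse increasing sequence of episode indices $k_1 < k_2 < \cdots$ growing fast enough that $j \cdot \Delta_{\max} \leq F_{\textrm{UHPR}}(k_j, \delta)$ for all $j$, and modify the algorithm to play the worst policy at each $k_j$. The accumulated sabotage regret at time $T$ is at most $F_{\textrm{UHPR}}(T, \delta)$, so the uniform regret bound is preserved, while $N_\epsilon = \infty$ for every $\epsilon < \Delta_{\max}$, precluding any finite PAC bound at these levels. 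Part (d) uses a different trick: flip a single Bernoulli$(\delta)$ coin at initialization and run a base algorithm achieving $F_{\textrm{UHPR}}$ on the probability-$(1-\delta)$ branch, while playing the worst policy forever on the probability-$\delta$ branch; the uniform regret bound is preserved since the ``bad'' branch has probability exactly $\delta$, yet $\Ex R(T) \geq \delta \cdot \Delta_{\max} \cdot T = \Omega(T)$.

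The main obstacle is the lower-bound half of part (a): the other three parts reduce to essentially one-line counterexamples, but here one must construct an algorithm whose PAC and regret curves simultaneously sit at the boundary of the balancing tradeoff, which requires carefully designing the exploration schedule and the hard instance so that neither bound can be improved.
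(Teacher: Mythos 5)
Your parts (b), (c) and (d) are fine: (b) is even simpler than the paper's construction (the paper plays the worst policy for the first $N$ episodes and then an exactly-$\epsilon$-suboptimal one, but your always-$\epsilon/2$-suboptimal policy works just as well), and (c), (d) are essentially the paper's own constructions (sparse sabotage episodes calibrated to the growth of $F_{\textrm{UHPR}}$, and a single $\delta$-biased coin deciding between an always-optimal and an always-suboptimal branch; in (d) just make sure the ``good'' branch is the zero-regret oracle so the total failure probability stays at $\delta$). The genuine gap is the lower-bound half of part (a), which you explicitly leave as a sketch. The missing idea is that no statistical hardness argument, hard bandit instance, or carefully tuned explore-then-commit schedule is needed at all: the theorem only asks for the existence of \emph{some} MDP--algorithm pair, and the algorithm may be adversarially designed to waste exactly the budget the PAC definition permits. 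The paper's construction is: on an event $E$ of probability $\delta$ play the worst policy (gap $H$) forever; on $E^C$ play the worst policy for the first $C/\epsilon^2$ episodes and thereafter a policy whose gap is \emph{exactly} $\epsilon$. Since $N_\epsilon$ counts only episodes with $\Delta_k>\epsilon$ strictly, the PAC bound $\Theta(1/\epsilon^2)$ is satisfied, while for $T\ge C/\epsilon^2$ the regret is $T\epsilon + C(H-\epsilon)/\epsilon^2$, whose ratio to $T^{2/3}$ is minimized at $T=\Theta(\epsilon^{-3})$ with a positive value, giving $R(T)=\Omega(T^{2/3})$ for all $T$.

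Moreover, the route you sketch would, as stated, not deliver the claim: a genuine explore-then-commit algorithm on a hard two-armed instance commits (with high probability) to the optimal arm, so on the high-probability event its regret plateaus after the exploration phase and $R(T)=\Omega(T^{2/3})$ fails for large $T$; and if instead it commits to an arm with gap larger than $\epsilon$ with probability exceeding $\delta$, the $(\epsilon,\delta)$-PAC hypothesis is violated. The only way to sustain growing regret while respecting the PAC bound is precisely the loophole above --- persistent suboptimality at level exactly $\epsilon$ (or below) after the exploration phase --- which is the paper's construction and removes the ``main obstacle'' you identify. Your upper-bound half of (a), $R(T)\le \epsilon T + H\,F_{\textrm{PAC}}(\epsilon)$ on the high-probability event, evaluated at $T=\Theta(\epsilon^{-3})$, is the intended argument and is fine.
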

For most interesting RL problems including episodic MDPs the worst-case expected regret grows with $O(\sqrt{T})$.
The theorem shows that establishing an optimal high probability regret bound does not imply any finite PAC bound.
While PAC bounds may be converted to regret bounds, the resulting bounds are necessarily severely suboptimal with a rate of $T^{2/3}$. 
The next theorem formalises the claim that Uniform-PAC is stronger than both the PAC and high-probability regret criteria.

\begin{thm}
    \label{thm:unipac_properties}
    Suppose an algorithm is Uniform-PAC for some $\delta$ with $F_{\textrm{UPAC}} = \tilde O(C_1/\epsilon + C_2/\epsilon^2)$ where $C_1,C_2 > 0$ are constant in $\epsilon$, but may
    depend on other quantities such as $S$, $A$, $H$, $\log(1/\delta)$, then the algorithm
    \begin{enumerate}[itemsep=0mm]
        \item[(a)] converges to optimal policies with high probability:  $\prob(\lim_{k\to\infty} \Delta_k = 0) \geq 1 - \delta$.

        \item[(b)] is $(\epsilon, \delta)$-PAC with bound $F_{\textrm{PAC}} = F_{\textrm{UPAC}}$ for all $\epsilon$.
        \item[(c)] enjoys a high-probability regret at level $\delta$ with $F_{\textrm{UHPR}} = \tilde O(\sqrt{C_2 T} + \max\{C_1, C_2\})$.
    \end{enumerate}
\end{thm}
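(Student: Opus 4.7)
The plan is to work throughout on the single high-probability event
\[
E \defeq \{\,N_\epsilon \leq F_{\textrm{UPAC}}(\epsilon) \text{ for every } \epsilon>0\,\},
\]
which, by the Uniform-PAC assumption, satisfies $\prob(E) \geq 1-\delta$. All three conclusions will be shown to hold deterministically on $E$, which is why the same failure probability $\delta$ appears in each part.

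For part (a), I would argue that on $E$, for each fixed $\epsilon>0$ the indicator $\indicator{\Delta_k > \epsilon}$ can be nonzero only finitely many times (at most $F_{\textrm{UPAC}}(\epsilon)$), so $\limsup_k \Delta_k \leq \epsilon$; letting $\epsilon\downarrow 0$ along a countable sequence gives $\lim_{k\to\infty}\Delta_k = 0$. Part (b) is immediate: for any single $\epsilon$, $E$ implies $N_\epsilon \leq F_{\textrm{UPAC}}(\epsilon)$, so the Uniform-PAC bound is itself a valid $(\epsilon,\delta)$-PAC bound.

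The substantive step is part (c), where the Uniform-PAC guarantee is converted into a regret bound by a dyadic peeling argument. Pick a threshold $\epsilon^* \in (0,H]$ to be chosen later, and split
\[
R(T) = \sum_{k=1}^T \Delta_k\,\indicator{\Delta_k \leq \epsilon^*} + \sum_{k=1}^T \Delta_k\,\indicator{\Delta_k > \epsilon^*}.
\]
The first sum is at most $T\epsilon^*$. For the second, partition the range $(\epsilon^*, H]$ into dyadic shells $(\epsilon^* 2^i, \epsilon^* 2^{i+1}]$ for $i=0,1,\dots,I$ with $I = \lceil \log_2(H/\epsilon^*)\rceil$. On $E$ the number of episodes falling in shell $i$ is at most $N_{\epsilon^* 2^i} \leq F_{\textrm{UPAC}}(\epsilon^* 2^i)$, and each such episode contributes at most $\epsilon^* 2^{i+1}$ to the regret. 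Substituting $F_{\textrm{UPAC}}(\epsilon) = \tilde O(C_1/\epsilon + C_2/\epsilon^2)$ and summing the resulting geometric series gives
\[
\sum_{i=0}^{I} \epsilon^* 2^{i+1}\cdot \tilde O\!\left(\frac{C_1}{\epsilon^* 2^i} + \frac{C_2}{(\epsilon^* 2^i)^2}\right) = \tilde O\!\left(C_1 + \frac{C_2}{\epsilon^*}\right),
\]
so $R(T) \leq \tilde O\bigl(T\epsilon^* + C_1 + C_2/\epsilon^*\bigr)$. Optimising by setting $\epsilon^* = \min\{H,\sqrt{C_2/T}\}$ balances the first and last terms and yields $R(T) = \tilde O(\sqrt{C_2 T} + C_1)$, which is absorbed into the claimed $\tilde O(\sqrt{C_2 T} + \max\{C_1,C_2\})$ (the $C_2$ inside the max covers the degenerate regime $T < C_2/H^2$ where the clipping of $\epsilon^*$ at $H$ is active). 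Since the entire calculation is valid for every $T$ simultaneously on the event $E$, this is in fact a uniform high-probability regret bound. The only mildly delicate point is getting the dyadic bookkeeping right so that the $\log$-factor from $I = O(\log(H/\epsilon^*))$ is swept into $\tilde O(\cdot)$ and does not multiply the $C_2/\epsilon^*$ term; the rest is routine optimisation.
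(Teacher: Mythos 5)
Your proposal is correct, and parts (a) and (b) coincide with the paper's argument (pointwise limit argument on the high-probability event, and the trivial specialisation of the uniform bound to a single $\epsilon$). For part (c) the underlying idea is the same layer-cake conversion the paper uses in its Uniform-PAC-to-regret theorem, but your execution differs in two ways. The paper bounds $R(T) \leq T\epsilon_{\min} + \int_{\epsilon_{\min}}^H g(\epsilon)\,d\epsilon$, where $g$ is the mistake bound including its explicit $(\ln(C_3/\epsilon))^k$ factors, and chooses $\epsilon_{\min}$ implicitly by (approximately) solving $g(\epsilon_{\min}) = T$, i.e.\ via the root of $C_1 y + C_2 y^2 = T$; this forces a fair amount of bookkeeping to verify that the implicit $\epsilon_{\min}$ is admissible and to bound $T\epsilon_{\min}$, the $C_1\ln(H/\epsilon_{\min})$ term and the $C_2/\epsilon_{\min}$ term separately. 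You instead discretise the tail into dyadic shells, bound the episodes in each shell by $N_{\epsilon^*2^i} \leq F_{\textrm{UPAC}}(\epsilon^*2^i)$, and pick the threshold $\epsilon^* = \min\{H, \sqrt{C_2/T}\}$ explicitly, which makes the optimisation transparent and pushes all logarithmic factors (the shell count and the polylogs hidden in $\tilde O$) into the $\tilde O(\cdot)$; your observation that the $\max\{C_1,C_2\}$ term absorbs the regime $T < C_2/H^2$ where the clipping at $H$ binds matches the role of the additive constant in the paper's statement. The paper's integral route buys an explicit, non-asymptotic polylog expression (useful because Theorem~\ref{thm:unipacupper} feeds concrete constants into it), whereas your peeling argument is shorter and avoids the implicit threshold; both are valid, and your use of the over-counting $N_{\epsilon^*2^i}$ (which counts mistakes over all episodes, not just the first $T$) is harmless since it only weakens an upper bound, while the fact that everything holds deterministically on $E$ for every $T$ indeed gives the uniform high-probability form.
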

Observe that stronger uniform PAC bounds lead to stronger regret bounds and for
RL in episodic MDPs, an optimal uniform-PAC bound implies a uniform regret
bound.  To our knowledge, there are no existing approaches with PAC or regret
guarantees that are Uniform-PAC.  PAC methods such as MBIE, MoRMax,
UCRL-$\gamma$, UCFH, Delayed Q-Learning or Median-PAC all depend on advance
knowledge of $\epsilon$ and eventually stop improving their policies. Even when
disabling the stopping condition, these methods are not uniform-PAC as their
confidence bounds only hold for finitely many episodes and are eventually
violated according to the law of iterated logarithms.  Existing algorithms with
uniform high-probability regret bounds such as UCRL2 or UCBVI~\citep{Azar2017}
also do not satisfy uniform-PAC bounds since they use upper confidence bounds
with width $\sqrt{\log(T) / n}$ where $T$ is the number of observed episodes
and $n$ is the number of observations for a specific state and action. The
presence of $\log(T)$ causes the algorithm to try each action in each state
infinitely often.  One might begin to wonder if uniform-PAC is too good to be
true. Can \textit{any} algorithm meet the requirements?  We demonstrate in
Section~\ref{sec:analysis} that the answer is yes by showing that \algname has
meaningful Uniform-PAC bounds.  A key technique that allows us to prove these
bounds is the use of finite-time law of iterated logarithm confidence bounds
which decrease at rate $\sqrt{(\log \log n)/n}$.

\section{The \algname Algorithm}
\label{sec:algo}

The pseudo-code for the proposed \algname algorithm is given in Algorithm~\ref{alg:fhalg}. In each
episode it follows an optimistic policy $\pi_k$ that is computed by backwards
induction using a carefully chosen confidence interval on the transition
probabilities in each state. In line~\ref{lin:Qcomp} an optimistic estimate of
the Q-function for the current state-action-time triple is computed using the
empirical estimates of the expected next state value $\hat V_{\textrm{next}} \in \reals$ (given that the values at the next time are $\tilde V_{t+1}$) and expected
immediate reward $\hat r$ plus confidence bounds $(H-t) \phi$ and $\phi$.
We show in  Lemma~\ref{lem:optplaninterpret} in the appendix that the policy update in 
Lines~\ref{lin:optplan1}--\ref{lin:optplan2} finds an optimal solution to 
    $\max_{P', r', V', \pi'}  \,\, \Ex_{s \sim p_0} [V'_{1}(s)]$
subject to the constraints that for all $s \in \statespace, a \in \actionspace, t \in [H]$,
\begin{align} 
    & V'_{t}(s) = r(s, \pi'(s,t), t) + P'(s,\pi'(s,t), t)^\top V'_{t+1} \qquad\textrm{(Bellman Equation)}
    \label{eqn:prob1}
    \\
    & V'_{H+1} = 0, \quad P'(s,a,t) \in  \Delta_{\numS},\quad r'(s,a,t) \in [0,1]\\
    & |[(P' - \hat P_k)(s, a, t)]^\top V'_{t+1}| \leq \phi(s,a,t) (H-t)\\
    & |r'(s, a , t) - \hat r_k(s, a, t)| \leq \phi(s,a,t)
    \label{eqn:prob2}
\end{align}
where $(P' - \hat P_k)(s,a,t)$ is short for $P'(s,a,t) - \hat P_k(s,a,t) = P'(\cdot | s,a,t) - \hat P_k(\cdot | s,a,t)$ and 
\begin{align*}
    \phi(s,a,t) = \sqrt{\frac{2 \ln\ln \max\{e, n(s,a,t)\} + \ln(18 \numS \numA H /
    \delta)}{n(s,a,t)}} = O \left( \sqrt{\frac{\ln (\numS \numA H \ln (n(s,a,t)) / \delta)}{n(s,a,t)}} \right)
\end{align*}
is the width of a confidence bound with $e = \exp(1)$ and $\hat P_k(s' | s,a,t) = \frac{m(s', s,
a, t)}{n(s,a,t)}$ are the empirical transition probabilities and $\hat
r_k(s,a,t) = l(s,a,t) / n(s,a,t)$ the empirical immediate rewards (both at the beginning of the $k$th episode). 
Our algorithm is conceptually similar to other algorithms based on the optimism principle
such as MBIE \citep{Strehl2009}, UCFH \citep{Dann2015}, UCRL2 \citep{Jaksch2010} or
UCRL-$\gamma$ \citep{Lattimore2012} but there are several key differences:
\begin{itemize}
\item 
Instead of using confidence intervals over the transition kernel by itself, we incorporate the value function directly into the concentration analysis.
Ultimately this saves a factor of $S$ in the sample complexity, but the price is a more difficult analysis. Previously MoRMax~\citep{Szita2010} also used the idea of directly bounding the transition and value function, but in a very different algorithm that required discarding data and had a less tight bound. A similar technique has been used by \citet{Azar2017}.
\item Many algorithms update their policy less and 
    less frequently (usually when the number of samples doubles), and only finitely often in total. 
    Instead, we update the policy after every episode, which means that \algname immediately leverages new observations.
\item Confidence bounds in existing algorithms that keep improving the policy (e.g. \citet{Jaksch2010, Azar2017}) scale at a rate 
    $\sqrt{\log(k)/n}$ where $k$ is the number of episodes played so far and $n$ is the number of times the specific ($s,a,t$) has been observed. As the results of a brief empirical comparison in Figure~\ref{fig:expresults} indicate, this leads to slow learning (compare UCBVI\_1 and \algname's performance which differ essentially only by their use of different rate bounds). Instead the width of \algname's confidence bounds $\phi$ scales at rate $\sqrt{\ln \ln (\max\{e, n\})/n} \approx \sqrt{(\log \log n) / n}$ which is the best achievable rate and results in significantly faster learning.
\end{itemize}

\IncMargin{1.5em}
\begin{algorithm}[t]
\SetKwInOut{Inputa}{Input}
\Inputa{failure tolerance $\delta \in (0,1]$}
        $n(s,a, t) = l(s,a,t) = m(s', s, a, t) = 0; \quad \tilde V_{H+1}(s')\defeq 0 \quad \forall 
s, s' \in \statespace, a \in \actionspace, t \in [H]$\\
\For{$k=1, 2, 3, \dots$}{
    \tcc{Optimistic planning}
    \For{$t=H$ \KwTo $1$}
    {
    \label{lin:optplan1}
        \For{$s \in \statespace$}
        {
            \For{$a \in \actionspace$}
            {
                $\phi \defeq \sqrt{\frac{2 \ln \ln( \max\{e, n(s,a,t)\}) + \ln(18 \numS \numA H / \delta)}{n(s,a,t)}}$
                \label{lin:confbound}\tcp{confidence bound}
    $\hat r \defeq \frac{l(s,a,t)}{n(s,a,t)}; \quad
    \hat V_{\textrm{next}} \defeq \frac{m(\cdot, s, a, t)^\top \tilde V_{t+1}}{n(s,a,t)}$ \tcp{empirical estimates}
                $Q(a) \defeq \min\left\{1, \hat r + \phi \right\}
                + \min\left\{ \max \tilde V_{t+1},
                \hat V_{\textrm{next}} + (H - t) \phi \right\}$\label{lin:Qcomp}
             }
 $\pi_k(s, t) \defeq \argmax_{a} Q(a),\quad \tilde V_t(s) \defeq Q(\pi_k(s, t)) $
    \label{lin:optplan2}
}}
    \tcc{Execute policy for one episode}
    $s_1 \sim p_0$\; 
    \For{$t=1$ \KwTo $H$}
    {
        $a_t \defeq \pi_{k}(s_{t}, t), \,\,r_t \sim p_R(s_t, a_t, t)$ and $s_{t+1} \sim P(s_{t},a_t, t)$\\
        $n(s_t,a_t, t)\!+\!\!+; 
        \quad m(s_{t+1},s_t, a_t, t)\!+\!\!+;
        \quad l(s_t, a_t, t) +\!\!= r_t$ \tcp{update statistics}
    }
}

\caption{\algname ({\bf U}pper {\bf B}ounding the {\bf E}xpected Next State {\bf V}alue) Algorithm}
\label{alg:fhalg}
\end{algorithm}
\DecMargin{1.5em}
\begin{figure}[t]
    \begin{center}
\includegraphics[width=\columnwidth]{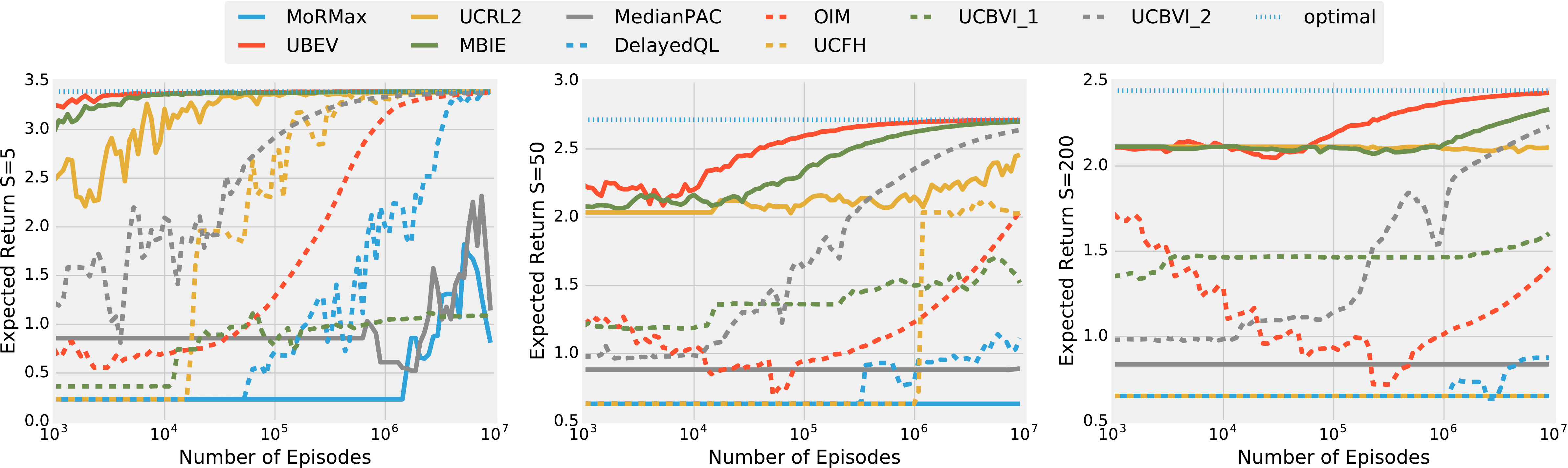}
    \end{center}
    \caption{Empirical comparison of optimism-based algorithms with frequentist regret or PAC bounds on a randomly generated MDP with $3$ actions, time horizon $10$ and $S=5, 50, 200$ states. All algorithms are run with parameters that satisfy their bound requirements. A detailed description of the experimental setup including a link to the source code can be found in Appendix~\ref{sec:expdetails}.}
    \label{fig:expresults}
\end{figure}

\section{Uniform PAC Analysis}
\label{sec:analysis}

We now discuss the Uniform-PAC analysis of \algname which results in the following Uniform-PAC and regret guarantee.
\begin{thm}
    Let $\pi_k$ be the policy of \algname in the $k$th episode.
    Then with probability at least $1 - \delta$ for all $\epsilon > 0$ jointly the number of episodes $k$ where the expected return from 
    the start state is not $\epsilon$-optimal 
    (that is $\Delta_k > \epsilon$) is at most
    \begin{align*}
      O\left(\frac{SAH^4}{\epsilon^2} \min\left\{1\! +\! \epsilon S^2A, S\right\} \polylog\!\left(\!A,S,H,\frac{1}{\epsilon},\frac{1}{\delta}\right)\!\!\right)\,.
    \end{align*}
    Therefore, with probability at least $1-\delta$ \algname converges to optimal policies and for all episodes $T$ has regret
    \begin{align*}
        R(T) = O\left(H^2(\sqrt{SAT} + S^3 A^2) \polylog(S,A,H,T)\right)\,.
    \end{align*}
    \label{thm:unipacupper}
\end{thm}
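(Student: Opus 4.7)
The plan is to follow the standard optimism-based PAC template while replacing $\sqrt{\log(T)/n}$-style widths by the LIL-type $\phi$ so that a \emph{single} high-probability good event underwrites a bound that holds jointly for all accuracy levels $\epsilon$.

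First, I would define a failure event $F$ as the union, over every $(s,a,t)$ triple and every sample count $n\geq 1$, of the sub-events $|\hat r_k(s,a,t)-r(s,a,t)|>\phi$, $|(\hat P_k-P)(s,a,t)^\top V^\star_{t+1}|>(H-t)\phi$, and an analogous event in which $V^\star_{t+1}$ is replaced by the data-dependent optimistic vector $\tilde V_{t+1}$ (handled either by a covering over $\{0,\ldots,H\}^S$-valued value vectors, or by first bounding against $V^\star_{t+1}$ and then controlling $\|\tilde V_{t+1}-V^\star_{t+1}\|_\infty$). Because $\phi$ is a finite-time LIL width that carries only $\ln(18SAH/\delta)+\ln\ln n$ rather than $\ln T$, each scalar sub-event holds for \emph{all} $n$ simultaneously with probability $\geq 1-\delta/(18SAH)$, and a union bound over the $O(SAH)$ triples gives $\prob(F)\leq \delta$.

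Second, on $F^c$ I would establish optimism by backward induction on $t$: Line~\ref{lin:Qcomp} clips the empirical Bellman backup by $\phi$ and $(H-t)\phi$, so combining the induction hypothesis $\tilde V_{t+1}\geq V^\star_{t+1}$ with the concentration event yields $\tilde V_t(s)\geq V^\star_t(s)$, whence $\Delta_k\leq p_0^\top(\tilde V_1-V^{\pi_k}_1)$. Telescoping along the sampled trajectory by subtracting the true Bellman equation for $V^{\pi_k}$ from the optimistic update and unrolling gives
\begin{align*}
\Delta_k \;\lesssim\; \sum_{t=1}^H H\,\phi\bigl(s_{k,t},a_{k,t},t\bigr) \;+\; M_k,
\end{align*}
where $M_k$ is a bounded martingale-difference remainder controlled by Azuma--Hoeffding.

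Third, I would apply a pigeonhole argument uniformly in $\epsilon$. Summing the previous display over the $N_\epsilon$ episodes with $\Delta_k>\epsilon$ gives $\epsilon N_\epsilon \leq H\sum_{k:\Delta_k>\epsilon}\sum_t \phi(s_{k,t},a_{k,t},t) + \tilde O(\sqrt{N_\epsilon H})$. Using $\phi(s,a,t)=\tilde O(1/\sqrt{n(s,a,t)})$, the identity $\sum_{n=1}^{N}n^{-1/2}=O(\sqrt N)$, and Cauchy--Schwarz over the $SAH$ triples whose visit counts total at most $N_\epsilon H$ yields a self-bounding inequality which solves to $N_\epsilon=\tilde O(S^2AH^4/\epsilon^2)$, corresponding to the $S$ branch of the $\min$. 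The $1+\epsilon S^2A$ branch comes from replacing the $\ell_1$-based transition bound by the \emph{direct} scalar concentration of $(\hat P_k-P)^\top V^\star_{t+1}$, which saves a factor of $\sqrt S$ in the leading term at the cost of a lower-order $\tilde O(S^3A^2H^4/\epsilon)$ correction from propagating $\|\tilde V-V^\star\|_\infty$; the stated bound is the minimum of the two analyses. The regret bound follows by invoking Theorem~\ref{thm:unipac_properties}(c) with $C_1=\tilde O(S^3A^2H^4)$ and $C_2=\tilde O(SAH^4)$.

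The main obstacle will be the concentration step for the transition-value term when the relevant vector is the data-dependent optimistic $\tilde V_{t+1}$ rather than the fixed $V^\star_{t+1}$: the clean LIL bound only applies to the latter. I expect to close this gap by first controlling $(\hat P_k-P)(s,a,t)^\top V^\star_{t+1}$ and then bounding $\|\tilde V_{t+1}-V^\star_{t+1}\|_\infty$ by a secondary induction summing the bonuses, which is precisely what generates the $S^3A^2$ correction inside the $\min$. The rest is careful accounting, but the structural move that turns a classical PAC argument into a Uniform-PAC one is insisting that every concentration inequality hold uniformly over $n$ via LIL rather than up to a fixed horizon.
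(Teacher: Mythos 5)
Your overall architecture (optimism with LIL widths, a pigeonhole over counts, and the conversion to regret via Theorem~\ref{thm:unipac_properties}(c)) matches the paper's, but your central decomposition is the regret-style one: you telescope along the \emph{sampled} trajectory and absorb the difference between $\Delta_k$ (an expectation under $\pi_k$) and the sampled bonus sum into a martingale remainder $M_k$ ``controlled by Azuma--Hoeffding.'' This is exactly where the Uniform-PAC claim leaks. For each $\epsilon$ you must control $\sum_{k \in B_\epsilon} M_k$ where $B_\epsilon = \{k : \Delta_k > \epsilon\}$ is a random, unbounded, previsible selection; the bound has to hold simultaneously for every prefix of $B_\epsilon$ (its cardinality is the unknown $N_\epsilon$ you are solving for) and for all $\epsilon$ at once. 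A fixed-length Azuma inequality gives none of this, and uniformizing it is precisely the issue you yourself flag for the confidence widths but then drop here. The paper avoids trajectory noise altogether: Lemma~\ref{lem:valuediff} expresses $\Delta_k$ through the expected visitation weights $w_{tk}(s,a)$, the counting events $F^N_k, F^{CN}_k$ in Appendix~\ref{sec:failureevents} (absent from your failure event) convert accumulated weights into counts on nice/friendly episodes, and the rate Lemmas~\ref{lem:mainratelemma} and~\ref{lem:ratelemmacond} do the pigeonhole on the weighted sums, so that on $F^c$ the per-$\epsilon$ argument is deterministic and ``for all $\epsilon$ jointly'' comes for free. Your route is repairable (e.g., a time-uniform martingale bound in the spirit of Lemma~\ref{lem:uniformhoeffding}, plus the observation that $\Delta_k$ takes only finitely many values so only finitely many selections matter), but as written this step is a genuine gap rather than ``careful accounting.''

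The second gap is the refined branch of the $\min$. Controlling the data-dependent term by first using the fixed-vector bound on $(\hat P_k - P)^\top V^\star_{t+1}$ and then a sup-norm bound on $\tilde V_{t+1} - V^\star_{t+1}$ ``by summing the bonuses'' would fail: the supremum ranges over states that $\pi_k$ rarely visits, whose bonuses never shrink, so $\|\tilde V_{t+1} - V^\star_{t+1}\|_\infty$ remains of order $H$ on infinitely many episodes and cannot produce a lower-order correction; the covering alternative has log-covering number of order $S$ and merely reproduces the $\sqrt S$ of the $\ell_1$ branch. The paper instead bounds $\sum_{s,a} w_{tk}(s,a)\,|(\hat P_k - P)(s,a,t)^\top(\tilde V^{\pi_k}_{t+1} - V^\star_{t+1})|$ using the per-component Bernstein event $F^P_k$ and Cauchy--Schwarz against $P(s,a,t)^\top(\tilde V^{\pi_k}_{t+1} - V^{\pi_k}_{t+1})^2$, which is then shown to shrink by a recursion over later time steps with conditional visitation weights, friendly episodes, and the conditional rate Lemma~\ref{lem:ratelemmacond} (Lemmas~\ref{lem:lowerorder}, \ref{lem:expsqVdiff}, \ref{lem:Vdiffbound}); this is where most of the work in the paper lies. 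Note this branch is not optional for the second claim of the theorem: the stated regret $H^2\sqrt{SAT}$ requires $C_2 = \tilde O(SAH^4)$ in Theorem~\ref{thm:unipac_properties}(c), whereas your $S$-branch alone gives $C_2 = \tilde O(S^2AH^4)$ and hence only $H^2 S\sqrt{AT}$.
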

Here $\polylog(x \dots)$ is a function that can be bounded by a
polynomial of logarithm, that is, $\exists k,C : \polylog(x \dots) \leq \ln(x \dots)^k + C$. In Appendix~\ref{sec:lowerbound} we provide a lower bound on the sample
complexity that shows that if $\epsilon < 1/(S^2A)$, the Uniform-PAC bound is tight up
to log-factors and a factor of $H$.
To our knowledge, \algname is the first algorithm with both near-tight (up to $H$ factors) high probability regret and $(\epsilon,\delta)$ PAC bounds as well as the first algorithm with any nontrivial uniform-PAC bound.

Using Theorem~\ref{thm:unipac_properties} the convergence and regret bound follows immediately from the uniform PAC bound. After a discussion of the different confidence bounds allowing us to prove uniform-PAC bounds, we will provide a short proof sketch of the uniform PAC bound. 

\subsection{Enabling Uniform PAC With Law-of-Iterated-Logarithm Confidence Bounds} 

To have a PAC bound for all $\epsilon$ jointly, it is critical that \algname
continually make use of new experience.  If \algname stopped leveraging new
observations after some fixed number, it would not be able to 
distinguish with high probability among which of the remaining possible MDPs 
do or do not have optimal policies that are   
sufficiently optimal in the other MDPs. The algorithm therefore could potentially follow a policy that is not at least $\epsilon$-optimal for infinitely many episodes for a sufficiently small $\epsilon$.
To enable \algname to incorporate all new observations, the confidence bounds in \algname must hold for an infinite number of updates. 
We therefore require a proof that the total probability of all possible failure events (of the high confidence bounds not holding) is bounded by $\delta$, in order to obtain high probability guarantees. In contrast to prior $(\epsilon, \delta)$-PAC proofs that only consider a finite number of failure events (which is enabled by requiring an RL algorithm to stop using additional data), we must bound the probability 
of an infinite set of possible failure events. 

Some choices of confidence bounds will hold uniformly across all sample sizes but are not sufficiently tight for uniform PAC results. For example, the recent work by \citet{Azar2017} uses confidence intervals that shrink at a rate of $\sqrt{\frac{\ln T}{n}}$, where $T$ is the number of episodes, and $n$ is the number of samples of a $(s,a)$ pair at a particular time step. This confidence interval will hold for all episodes, but these intervals do not shrink sufficiently quickly and can even increase.
One simple approach for constructing confidence intervals that is sufficient for uniform PAC guarantees is to combine  bounds for fixed number of samples with a union bound allocating failure probability $\delta / n^2$ to the failure case with $n$ samples. This results in confidence intervals that shrink at rate $\sqrt{\nicefrac{1}{n}\ln n}$. Interestingly we know of no algorithms that do such in our setting. 

We follow a similarly simple but much stronger approach 
of using law-of-iterated logarithm (LIL) bounds that shrink at the better rate of $\sqrt{\nicefrac{1}{n}\ln \ln n}$. Such bounds have sparked recent interest in sequential decision making \citep{Jamieson2013,Balsubramani2016, Garivier2016, Massart2007,Garivier2011} but to the best of our knowledge we are the first to leverage them for RL. We prove several general LIL bounds in Appendix~\ref{sec:concentrationproofs} and explain how we use these results in our analysis in Appendix~\ref{sec:failureevents}. These LIL bounds are both sufficient to ensure uniform PAC bounds, and much tighter (and therefore will lead to much better performance) than  $\sqrt{\nicefrac{1}{n}\ln T}$ bounds.   Indeed, LIL have the tightest possible rate dependence on the number of samples $n$ for a bound that holds for all timesteps (though they are not tight with respect to constants). 

\subsection{Proof Sketch}
We now provide a short overview of our uniform PAC bound in Theorem~\ref{thm:unipacupper}. It follows the typical scheme for optimism based algorithms: we show that in each episode \algname follows a policy that is optimal with respect to the MDP $\tilde M_k$ that yields highest return in a set of MDPs $\mathcal M_k$ given by the constraints in Eqs.~\eqref{eqn:prob1}--\eqref{eqn:prob2} (Lemma~\ref{lem:optplaninterpret} in the appendix).
We then define a failure event $F$ (more details see below) such that on the complement $F^C$, the true MDP is in $\mathcal M_k$ for all $k$.  

Under the event that the true MDP is in the desired set, the $V^{\pi}_1 \leq V^{\star}_1 \leq \tilde V^{\pi_k}_1$, i.e., the value $\tilde V^{\pi_k}_1$ of $\pi_k$ in MDP $\tilde M_k$ is higher than the optimal value function of the true MDP $M$ (Lemma~\ref{lem:optimism}). Therefore, the optimality gap is bounded by $\Delta_k \leq p_0^\top(\tilde V^{\pi_k}_1 - V^{\pi_k}_1)$.
The right hand side this expression is then decomposed via a standard identity (Lemma~\ref{lem:valuediff}) as
\begin{align*}
    \sum_{t=1}^H  \sum_{(s,a) \in \saspace} w_{tk}(s,a)((\tilde P_{k}-P)(s, a, t))^\top \tilde V^{\pi_k}_{t+1}
    & +\sum_{t=1}^H  \sum_{(s,a) \in \saspace} w_{tk}(s,a)(\tilde r_{k}(s, a, t) - r(s, a, t)),
\end{align*}
where $w_{tk}(s,a)$ is the probability that when following policy $\pi_k$ in
the true MDP we encounter $s_t = s$ and $a_t = a$. The quantities $\tilde P_k$,
$\tilde r_k$ are the model parameters of the optimistic MDP $\tilde M_k$
For the sake of conciseness, we ignore the second term above in the following which
can be bounded by $\epsilon / 3$ in the same way as the first.  We further decompose the
first term as 
\begin{align}
    & \sum_{\mathclap{\substack{t\in[H] \\ (s,a) \in L_{tk}^c}}} w_{tk}(s,a)((\tilde P_{k} - P)(s, a, t))^\top \tilde V^{\pi_k}_{t+1}
    \label{eqn:decomp1_algt}
    \\
    +& \sum_{\mathclap{\substack{t\in[H] \\ (s,a) \in L_{tk}}}} w_{tk}(s,a)((\tilde P_{k} - \hat P_k)(s, a, t))^\top \tilde V^{\pi_k}_{t+1}
        + \sum_{\mathclap{\substack{t\in[H] \\ (s,a) \in L_{tk}}}} w_{tk}(s,a)((\hat P_{k} - P)(s, a, t))^\top \tilde V^{\pi_k}_{t+1}
    \label{eqn:decomp1_main}
\end{align}
where 
$
L_{tk} = \left\{ (s,a) \in \saspace \, : \, w_{tk}(s,a) \geq \wmin = \frac{\epsilon}{3HS^2} \right\}
$ is the
set of state-action pairs with non-negligible visitation probability. The value of $\wmin$ is chosen so that
\eqref{eqn:decomp1_algt} is bounded by $\epsilon / 3$.
Since $\tilde V^{\pi_k}$ is the optimal solution of the optimization
problem in Eq.~\eqref{eqn:prob1}, we can bound 
\begin{align}
|((\tilde P_{k} -
&\hat P_k)(s, a, t))^\top \tilde V^{\pi_k}_{t+1}| \leq \phi_k(s,a,t) (H-t) = O\left(\sqrt{\frac{H^2 \ln \left( \ln (n_{tk}(s,a))/ \delta \right)}{n_{tk}(s,a)}}
\right)\,,
\label{eq:decomp1_algt_dec}
\end{align}
where $\phi_k(s,a,t)$ is the value of $\phi(s,a,t)$ and $n_{tk}(s,a)$
the value of $n(s,a,t)$ right before episode $k$.
Further we decompose
\begin{align}
    & |((\hat P_{k} - P)(s, a, t))^\top \tilde V^{\pi_k}_{t+1}| \leq 
    \|(\hat P_{k} - P)(s, a, t)\|_1  \|\tilde V^{\pi_k}_{t+1}\|_\infty
    \leq 
    O\left(\sqrt{\frac{\numS H^2\ln \frac{\ln n_{tk}(s,a)}{\delta}}{n_{tk}(s,a)}}\right)\,,
    \label{eqn:123dd}
\end{align}
where the second inequality follows from a standard concentration bound used in the definition of the failure event $F$ (see below).
Substituting this and \eqref{eq:decomp1_algt_dec} into \eqref{eqn:decomp1_main} leads to
\begin{align}
\eqref{eqn:decomp1_main} \leq O\left(\sum_{t=1}^H  \sum_{s,a \in
    L_{tk}} w_{tk}(s,a) \sqrt{\frac{\numS H^2 \ln (\ln (n_{tk}(s,a))/\delta)}{n_{tk}(s,a)}}
\right).\label{eqn:a1123}
\end{align}
On $F^C$ it also holds that 
$n_{tk}(s,a) \geq \frac 1 2 \sum_{i < k} w_{ti}(s,a) - \ln \frac{9SAH}{\delta}$ and so on \emph{nice episodes} where each $(s,a) \in L_{tk}$ with significant probability $w_{tk}(s,a)$ also had significant probability in the past, i.e., $\sum_{i < k} w_{ti}(s,a) \geq 4 \ln \frac{9SA}{\delta}$, it holds that $n_{tk}(s,a) \geq \frac 1 4 \sum_{i < k} w_{ti}(s,a)$. Substituting this into \eqref{eqn:a1123}, we can use a careful pidgeon-hole argument laid out it Lemma~\ref{lem:mainratelemma} in the appendix to show that this term is bounded by $\epsilon / 3$ on all but 
$O(A S^2 H^4 / \epsilon^2 \polylog(A, S, H, 1/\epsilon, 1/ \delta))$ nice episodes. Again using a pidgeon-hole argument, one can show that all but at most $O(S^2 A H^3 / \epsilon \ln (SAH / \delta))$ episodes are nice. Combining both bounds, we get that on $F^C$ the optimality gap $\Delta_k$ is at most $\epsilon$ except for at most  $O(A S^2 H^4 / \epsilon^2 \polylog(A, S, H, 1/\epsilon, 1/ \delta))$ episodes.

We decompose the failure event into multiple components. 
In addition to the events $F^{N}_k$ that a $(s,a,t)$ triple has been observed few times compared to its visitation probabilities in the past, i.e.,  $n_{tk}(s,a) < \frac 1 2 \sum_{i < k} w_{ti}(s,a) - \ln \frac{9SAH}{\delta}$ as well as a conditional version of this statement, the failure event $F$ contains events where empirical estimates of the immediate rewards, the expected optimal value of the successor states and the individual transition probabilites are far from their true expectations. For the full definition of $F$ see Appendix~\ref{sec:failureevents}. $F$ also contains event $F^{L1}$ we used in Eq.~\eqref{eqn:123dd} defined as
    \begin{align*}
\resizebox{.97\hsize}{!}{$
    \left\{ \exists k, s, a, t\, : \, \|\hat P_k(s, a, t) - P(s, a, t)\|_1 
    \geq 
    \sqrt{\frac{4} {n_{tk}(s, a)} 
        \left(2 \llnp(n_{tk}(s, a))+ \ln \frac{18 \numS \numA H(2^\numS - 2)}{\delta}\right)
    } \right\}.$
}
\end{align*}
It states that the L1-distance of the empirical transition probabilities to the
true probabilities for any $(s,a,t)$ in any episode $k$ is too large and we
show that $\prob(F^{L1}) \leq 1 - \delta / 9$ using a uniform version of the
popular bound by \citet{Weissman2003} which we prove in
Appendix~\ref{sec:concentrationproofs}. We show in similar manner that the
other events in $F$ have small probability uniformly for all episodes $k$ so that $\prob(F) \leq \delta$.
Together this yields the uniform PAC bound in Thm.~\ref{thm:unipacupper} using the second term in the $\min$. 

With a more refined analysis that avoids the use of H\"older's inequality in \eqref{eqn:123dd} and a stronger notion of nice episodes called friendly episodes we obtain the bound with the first term in the $\min$. However, since a similar analysis has been recently released \citep{Azar2017}, we defer this discussion to the appendix.
\subsection{Discussion of \algname Bound}

The (Uniform-)PAC bound for \algname in Theorem~\ref{thm:unipacupper} is never worse than $\tilde O(\numS^2\numA
H^4/\epsilon^2)$, which improves on the similar MBIE algorithm by a factor
of $H^2$ (after adapting the discounted setting for which MBIE was analysed to our
setting). For $\epsilon < 1/(\numS^2\numA)$ our bound has a linear
dependence on the size of the state-space and depends on $H^4$, which is a
tighter dependence on the horizon than MoRMax's $\tilde O(\numS \numA H^6 /
\epsilon^2)$, the best sample-complexity bound with linear dependency
$\numS$ so far. 

Comparing UBEV's regret bound to the ones of UCRL2~\citep{Jaksch2010} and REGAL~\citep{Bartlett2009}
requires care because
(a) we measure the regret over entire episodes and (b) our transition dynamics are time-dependent within each episode, which effectively increases the state-space by a factor of $H$.
Converting the bounds for UCRL2/REGAL to our setting yields a regret bound of order $SH^2\sqrt{AHT}$. Here, the diameter is $H$, the state space increases by $H$ due to time-dependent transition dynamics and an additional $\sqrt{H}$ is gained by stating the regret in terms of episodes $T$ instead of time steps. Hence, UBEV’s bounds are better by a factor of $\sqrt{SH}$.
Our bound matches the recent regret bound for episodic RL by \citet{Azar2017} in the $S$, $A$ and $T$ terms but not in $H$.  \citet{Azar2017} has regret bounds that are optimal in $H$ but their algorithm is not uniform PAC, due to the characteristics we outlined in Section 2.

\section{Conclusion}

The Uniform-PAC framework strengthens and unifies the PAC and high-probability regret 
performance criteria for reinforcement learning in episodic MDPs. The newly proposed
algorithm is Uniform-PAC, which as a side-effect means it is the first algorithm that is
both PAC and has sub-linear (and nearly optimal) regret.
Besides this, the use of law-of-the-iterated-logarithm confidence bounds in RL algorithms for MDPs provides a practical and
theoretical boost at no cost in terms of computation or implementation
complexity.

This work opens up several immediate research questions for future work. The
definition of Uniform-PAC and the relations to other PAC and regret notions
directly apply to multi-armed bandits and contextual bandits as special cases
of episodic RL, but not to infinite horizon reinforcement learning. An extension
to these non-episodic RL settings is highly desirable. Similarly, a version of
the \algname algorithm for infinite-horizon RL with linear state-space sample
complexity would be of interest. More broadly, if theory is ever to say
something useful about practical algorithms for large-scale reinforcement
learning, then it will have to deal with the unrealizable function
approximation setup (unlike the tabular function representation setting
considered here), which is a major long-standing open challenge.
\\
\\
\noindent{\textbf{Acknowledgements}}. We appreciate the support of a NSF CAREER award and a gift from Yahoo. 

\bibliography{cdann_mendeley,manual}
\bibliographystyle{unsrtnat-nourl}
\newpage
\appendix

\counterwithin{lem}{section}
\counterwithin{thm}{section}
\counterwithin{cor}{section}

\renewcommand\appendixpagename{Appendices of Unifying PAC and Regret: Uniform PAC Bounds for Episodic Reinforcement Learning}
\appendixpage
\startcontents[sections]
\printcontents[sections]{l}{1}{\setcounter{tocdepth}{2}}
\newpage

\section{Framework Relation Proofs}
\subsection{Proof of Theorem~\ref{thm:noerandpac}}
\label{sec:noerandpac}
\begin{proof}
    We will use two episodic MDPs, $M_1$ and $M_2$, which are essentially
    2-armed bandits and hard to distinguish to prove this statement. Both
    MDPs have one state, horizon $H=1$, and two actions $\actionspace = \{1,
    2\}$. For a fixed $\alpha > 0$, the rewards are Bernoulli($1/2 + \alpha / 2$) distributed for actions $1$ in both
    MDPs. Playing action $2$ in $M_1$ gives Bernoulli($1/2$) rewards and action $2$ in $M_2$ gives Bernoulli($1/2
    + \alpha$) rewards.

    Assume now that an algorithm in MDP $M_1$ with nonzero probability plays the suboptimal action only at most $N$ times in total, i.e.,
    $\prob_{M_1}(n_2 \leq N) \geq \beta$ where $n_2$ is the number of times action $2$ is played and $ \infty > N > 0, \beta > 0$.
    Then
    \begin{align}
        \prob_{M_1}(n_2 \leq N) = \Ex_{M_1}\left[ \indicator{n_2 \leq N} \right] = 
        \Ex_{M_2}\left[ \frac{\prob_{M_1}(Y_\infty)}{\prob_{M_2}(Y_\infty)}\indicator{n_2 \leq N} \right]
    \end{align}
where $Y_k = (A_1, R_1, A_2, R_2, \dots A_k, R_k)$ denotes the entire sequence of observed rewards $R_i$ and action indices $A_i$ after $k$ episodes. Since $\prob_{M_1}(A_k | Y_{k-1}) = \prob_{M_2}(A_k | Y_{k-1})$ and $\prob_{M_1}(R_k | A_k=1, Y_{k-1}) = \prob_{M_2}(R_k | A_k=1 , Y_{k-1})$ and 
\begin{align}
    \frac{\prob_{M_1}(R_k | A_k=2, Y_{k-1})}{\prob_{M_2}(R_k | A_k=2 , Y_{k-1})} 
    \leq \max\left\{\frac{1/2}{1/2 + \alpha}, \frac{1/2}{1/2 - \alpha}\right\}
    = \frac{1}{1 - 2\alpha}
\end{align}
the likelihood ratio of $Y_\infty$ is upper bounded by $(1 + 2 \alpha)^N$ if the second action has been chosen at most $N$ times. Hence
    \begin{align}
       \prob_{M_2}\left[ n_2 \leq N \right] &= \frac{(1 - 2\alpha)^N}{(1 - 2 \alpha)^N} 
\Ex_{M_2}\left[ \indicator{n_2 \leq N} \right] 
\geq (1 - 2\alpha)^N \Ex_{M_2}\left[ \frac{\prob_{M_1}(Y_\infty)}{\prob_{M_2}(Y_\infty)}\indicator{n_2 \leq N} \right] \\ \geq&  (1 - 2\alpha)^N \beta > 0
    \end{align}
Therefore, the regret for $M_2$ is for $T$ large enough
$\Ex_{M_2} R(T) \geq (T-N) \beta (1-2\alpha)^N \alpha / 2 = O(T)$.
Hence, for the algorithm to ensure sublinear regret for $M_2$, it has to play the suboptimal action for $M_1$ infinitely often with probability $1$. This however implies that the algorithm cannot satisfy any finite PAC bound for accuracy $\epsilon < \alpha / 2$.
\end{proof}

\subsection{Proof of Theorem~\ref{thm:existingconv}}
\begin{proof}
\textbf{PAC Bound to high-probability regret bound:}
Consider a fixed $\delta > 0$ and PAC bound with $F_{\textrm{PAC}}= \Theta(1 / \epsilon^2)$. Then there is a $C > 0$ such that the following algorithm satisfies the PAC bound. The algorithm uses the worst possible policy with optimality gap $H$ in all episodes on some event $E$ and in the first $C / \epsilon^2$ episodes on the complimentary event $E^C$.
For the remaining episodes on $E^C$ it follows a policy with optimality gap $\epsilon$. 
The probability of $E$ is $\delta$. The regret of the algorithm on $E$ is $R(T) = TH$ and on $E^C$ it is $R(T) = \min\{T, C/\epsilon^2\} H + \min\{T - C/\epsilon^2, 0 \} \epsilon$. For $T \geq C/\epsilon^2$, on any event the regret of this algorithm is at least
\begin{align}
R(T) = \frac{CH}{\epsilon^2} + \left(T - \frac{C}{\epsilon^2}\right)\epsilon = T\epsilon + \frac{C(H-\epsilon)}{\epsilon^2}.
\label{eqn:minreg1}
\end{align}
The quantity 
\begin{align}
\frac{R(T)}{T^{2/3}} = \frac{C(H- \epsilon)}{T^{2/3}\epsilon^2} + \epsilon T^{1/3}
\end{align}
takes its minimum at $T = \frac{C(H-\epsilon)}{\epsilon^3}$  with a positive value and hence $R(T) = \Omega(T^{2/3})$. Therefore a PAC bound with rate $1 / \epsilon^2$ implies at best a high-probability regret bound of order $O(T^{2/3})$ and is only tight at $T = \Theta(1 / \epsilon^3)$. Furthermore,  by looking at Equation~\eqref{eqn:minreg1}, we see that for any fixed $\epsilon$, there is an algorithm that has uniform high-probability regret that is $\Omega(T)$. 

\textbf{PAC Bound to uniform high-probability regret bound:}
Consider a fixed $\delta > 0$ and $\epsilon > 0$ and a PAC bound $F_{\textrm{PAC}}$ that evaluates to some value $N$ for parameter $\epsilon$. The algorithm uses the worst possible policy with optimality gap $H$ in all episodes on some event $E$ and in the first $N$ episodes on the complimentary event $E^C$.
For the remaining episodes on $E^C$ it follows a policy with optimality gap $\epsilon$. 
The probability of $E$ is $\delta$. The regret of the algorithm on $E$ is $R(T) = TH$ and on $E^C$ it is $R(T) = \min\{T, N\} H + \min\{T -N, 0 \} \epsilon$. For $T \geq N$, on any event the regret of this algorithm is at least
\begin{align}
    R(T) = NH + \left(T - N \right)\epsilon = T\epsilon + H(T-N) = \Omega(T).
\label{eqn:minreg2}
\end{align}

\textbf{Uniform high-probability regret bound to PAC bound:}
Consider an MDP such that at least one suboptimal policy exists with optimality
gap $\epsilon > 0$. Further let $L(T)$  be a nondecreasing function with
$F_{\textrm{UHPR}}(T) \geq L(T)$ and $L(T) \rightarrow \infty$ as $T
\rightarrow \infty$. Then the algorithm plays the optimal policy except for
episodes $k$ where $\lfloor L(k-1) / \epsilon \rfloor \neq \lfloor L(k) / \epsilon \rfloor$. This
algorithm satisfies the regret bound but makes infinitely many
$\nicefrac{\epsilon}{2}$-mistakes with probability $1$.

\textbf{Uniform high-probability regret bound to expected regret bound:}
Consider an MDP such that at least one suboptimal policy exists with optimality
gap $\epsilon > 0$. Consider an algorithm that with probability $\delta$ always plays the suboptimal policy and with probability $1 - \delta$ always plays the optimal policy.
This algorithm satisfies the uniform high-probability regret bound but suffers regret
$\Ex R(T) = \delta \epsilon T = \Omega(T)$.
\end{proof}

\subsection{Proof of Theorem~\ref{thm:unipac_properties}}
\label{sec:pacregconv}
\begin{proof}
    \textbf{Convergence to optimal policies:} The convergence to the set of optimal policies follows directly by using the definition of limits on the $\Delta_k$ sequence for each outcome in the high-probability event where the bound holds.\\
    \textbf{$(\epsilon, \delta)$-PAC:} Due to sub-additivity of probabilities, we have
    \begin{align}
        &\prob\left( N_\epsilon > F_{\textrm{PAC}}\left(\frac 1 \epsilon, \log \frac 1 \delta\right)\right)
        \leq
        \prob\left( \bigcup_{\epsilon'}\left\{N_{\epsilon'} > F_{\textrm{PAC}}\left(\frac 1 {\epsilon'}, \log \frac 1 \delta\right)\right\}\right)\\
        = &
        \prob\left( \exists \epsilon' \,:\, N_{\epsilon'} > F_{\textrm{PAC}}\left(\frac 1 {\epsilon'}, \log \frac 1 \delta\right)\right) \leq \delta.
    \end{align}
    \textbf{High-Probability  Regret Bound:} This part is proved separately in Theorem~\ref{thm:uniformpactoregret} below.
\end{proof}

\begin{figure}
    \centering\includegraphics[width=.5\textwidth]{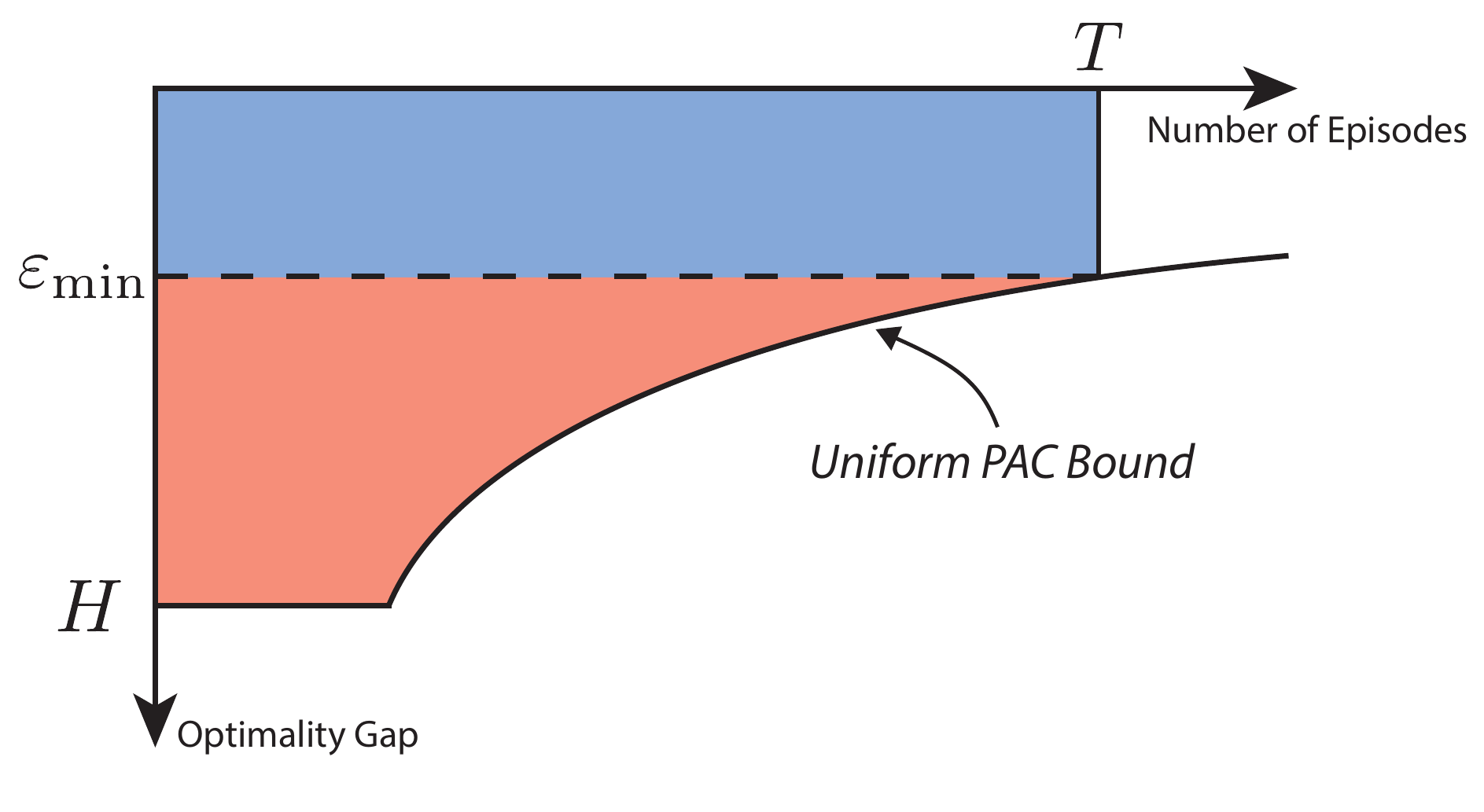}
        \caption{Relation of PAC-bound and Regret; The area of the shaded regions are a bound on the regret after $T$ episodes.}
        \label{fig:regret}
\end{figure}
\begin{thm}[Uniform-PAC to Regret Conversion Theorem]
    Assume on some event $E$ an algorithm follows for all $\epsilon$ an
    $\epsilon$-optimal policy $\pi_k$, i.e., $\Delta_k \leq
    \epsilon$, on all but at most 
    \begin{align}
        \frac{C_1 }{\epsilon}\left(\ln \frac {C_3}
    \epsilon\right)^k + \frac{C_2}{\epsilon^2}\left(\ln \frac {C_3}
    \epsilon\right)^{2k}
\end{align}
episodes where $C_1 \geq C_2 \geq 2$ and $C_3 \geq \max\{H, e\}$ and $C_1, C_2, C_3$ do not depend on
    $\epsilon$ . Then this algorithm has on this event a regret of 
    \begin{align}
        R(T) \leq (\sqrt{C_2 T}  + C_1 ) \polylog(T, C_3, C_1) = O(\sqrt{C_2 T} \polylog(T, C_3, C_1, H))
    \end{align}
    for all number of episodes $T$.
    \label{thm:uniformpactoregret}
\end{thm}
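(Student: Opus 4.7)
My plan is to bound the regret using the classical layer-cake trick: the regret over $T$ episodes can be written as an integral of the number of $\epsilon$-suboptimal episodes, which is precisely the picture suggested by Figure~\ref{fig:regret}. Specifically, on the event $E$, since $\Delta_k \in [0, H] \subseteq [0, C_3]$, we have
\begin{align*}
R(T) = \sum_{k=1}^T \Delta_k = \sum_{k=1}^T \int_0^{C_3} \indicator{\Delta_k > \epsilon}\, d\epsilon \leq \int_0^{C_3} \min\bigl\{T,\, N_\epsilon(E)\bigr\}\, d\epsilon,
\end{align*}
where $N_\epsilon(E)$ denotes the count of $\epsilon$-suboptimal episodes on the event $E$, which by hypothesis is bounded by $g(\epsilon) := \tfrac{C_1}{\epsilon}(\ln \tfrac{C_3}{\epsilon})^k + \tfrac{C_2}{\epsilon^2}(\ln \tfrac{C_3}{\epsilon})^{2k}$.

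The next step is to split the integration domain at a threshold $\epsilon^\star$ chosen so that the trivial bound $T$ and the PAC bound $g(\epsilon)$ cross at the right rate. A clean choice is $\epsilon^\star = \sqrt{C_2/T}$, which balances the dominant $1/\epsilon^2$ term against $T$ up to logarithmic factors. For $\epsilon \leq \epsilon^\star$, I use the trivial bound $T$, contributing $T\epsilon^\star = \sqrt{C_2 T}$. For $\epsilon \in [\epsilon^\star, C_3]$, I use the hypothesis $N_\epsilon \leq g(\epsilon)$ and split the resulting integral into its two summands.

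For the $C_1/\epsilon$ piece, I bound $(\ln \tfrac{C_3}{\epsilon})^k \leq (\ln \tfrac{C_3}{\epsilon^\star})^k$ on the range of integration (the log is monotone decreasing in $\epsilon$), pull this factor out, and use $\int_{\epsilon^\star}^{C_3} \tfrac{d\epsilon}{\epsilon} = \ln(C_3/\epsilon^\star)$, yielding $C_1 (\ln \tfrac{C_3}{\epsilon^\star})^{k+1}$. For the $C_2/\epsilon^2$ piece, the same monotonicity trick on the squared log and the bound $\int_{\epsilon^\star}^{C_3} \tfrac{d\epsilon}{\epsilon^2} \leq 1/\epsilon^\star = \sqrt{T/C_2}$ give $\sqrt{C_2 T}\,(\ln \tfrac{C_3}{\epsilon^\star})^{2k}$. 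Since $\ln(C_3/\epsilon^\star) = \ln C_3 + \tfrac{1}{2}\ln(T/C_2)$ is $\polylog(T, C_3, C_1)$ (using $C_2 \leq C_1$), summing the three contributions yields $R(T) \leq (\sqrt{C_2 T} + C_1)\polylog(T, C_3, C_1)$, as claimed.

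The argument is essentially a calculus exercise, so there is no single hard step; the only care needed is in (i) justifying that the log factor may be pulled out of the integral by its monotonicity on $[\epsilon^\star, C_3]$, which requires $\ln(C_3/\epsilon) \geq 0$ there (guaranteed by $C_3 \geq e$ and $\epsilon^\star \leq C_3$, the latter being assumed implicitly for nontrivial $T$), and (ii) verifying that the constants aggregate into the stated $\polylog(T, C_3, C_1, H)$ form, which follows from $C_1 \geq C_2 \geq 2$ and $C_3 \geq H$. If the regime $\epsilon^\star > C_3$ occurs (i.e., $T$ is very small), the regret is trivially at most $T C_3$, which is already dominated by $(\sqrt{C_2 T}+C_1)\polylog$.
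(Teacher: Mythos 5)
Your proposal is correct and follows essentially the same route as the paper: both bound $R(T)$ by a threshold term plus the integral of the mistake bound $g(\epsilon)$ above that threshold (the picture in Figure~\ref{fig:regret}), and then do calculus on the two summands of $g$. The only differences are cosmetic but welcome: you justify the area bound rigorously via the layer-cake identity $\Delta_k=\int_0^{C_3}\indicator{\Delta_k>\epsilon}\,d\epsilon$ rather than by appeal to the figure, and you pick the explicit threshold $\epsilon^\star=\sqrt{C_2/T}$ (capping the count trivially by $T$ below it) instead of the paper's $\epsilon_{\min}$ obtained by inverting $g(\epsilon)\leq T$ through the quadratic root, which shortens the subsequent log bookkeeping while yielding the same $(\sqrt{C_2 T}+C_1)\polylog(T,C_3,C_1)$ bound, including your correctly handled small-$T$ case $\epsilon^\star>C_3$.
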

\begin{proof}
    The mistake bound $g(\epsilon) = \frac{C_1 }{\epsilon}\left(\ln \frac {C_3}
    \epsilon\right)^k + \frac{C_2}{\epsilon^2}\left(\ln \frac {C_3}
\epsilon\right)^{2k} \leq T$ is monotonically decreasing for $\epsilon \in (0,
H]$.
For a given $T$ large enough, we can therefore find an $\epsilon_{\min} \in (0,
H]$ such that $g(\epsilon) \leq T$ for all $\epsilon \in (\epsilon_{\min}, H]$.
The regret $R(T)$ of the algorithm can then be bounded as follows
\begin{align}
    R(T) \leq T \epsilon_{\min} + \int_{\epsilon_{\min}}^H g(\epsilon) d\epsilon.
\end{align}
This bound assumes the worst case where first the algorithm makes the worst
mistakes possible with regret $H$ and subsequently less and less severe
mistakes controlled by the mistake bound. For a better intuition, see
Figure~\ref{fig:regret}.

We first find a suitable $\epsilon_{\min}$.
Define $y = \frac 1 \epsilon \left( \ln \frac {C_3} \epsilon\right)^k$ then since $g$ is monotonically decreasing, it is sufficient
to find a $\epsilon$ with $g(\epsilon) \leq T$. That is equivalent to 
$C_1 y + C_2 y^2 \leq T$ for which
\begin{align}
   \frac 1 \epsilon \left( \ln \frac {C_3} \epsilon\right)^k =  y \leq \frac{C_1}{2 C_2} + \frac{\sqrt{C_1^2 + 4 T C_2}}{2 C_2} =: a
\end{align}
is sufficient. We set now
\begin{align}
    \epsilon_{\min} =& \frac{\ln(C_3 a)^k}{a} = \frac{2C_2}{C_1 + \sqrt{C_1^2 + 4TC_2}} \left( \ln \frac{(C_1 + \sqrt{C_1^2 + 4 TC_2})C_3}{2 C_2} \right)^k
\end{align}
which is a valid choice as
\begin{align}
    \frac 1 {\epsilon_{\min}} \left( \ln \frac {C_3} {\epsilon_{\min}}\right)^k 
    =& \frac{a}{\ln(C_3 a)^k} \left( \ln \frac{C_3 a}{\ln(C_3a)^k} \right)^k
    = \frac{a}{\ln(C_3 a)^k} \left( \ln(C_3 a) - k\ln \ln(C_3a) \right)^k \\
    \leq & \frac{a}{\ln(C_3 a)^k} \left( \ln(C_3 a) \right)^k = a.
\end{align}
We now first bound the regret further as
\begin{align}
    R(T) \leq& T \epsilon_{\min} + \int_{\epsilon_{\min}}^H g(\epsilon) d\epsilon
    \leq T \epsilon_{\min} 
    + C_1 \left(\ln \frac {C_3}
    {\epsilon_{\min}}\right)^k\int_{\epsilon_{\min}}^H 
    \frac{1}{\epsilon}
     d\epsilon
    + C_2 \left(\ln \frac {C_3}
    {\epsilon_{\min}}\right)^{2k}\int_{\epsilon_{\min}}^H 
    \frac{1}{\epsilon^2}
     d\epsilon\\
     = & T \epsilon_{\min} 
    + C_1 \left(\ln \frac {C_3}
    {\epsilon_{\min}}\right)^k
    \ln \frac{H}{\epsilon_{\min}}
    + C_2 \left(\ln \frac {C_3}
    {\epsilon_{\min}}\right)^{2k}
    \left[ \frac{1}{\epsilon_{\min}} - \frac 1 H \right]
\end{align}
and then use the choice of $\epsilon_{\min}$ from above to look at each of the terms in this bound individually. In the following bounds we extensively use the fact $\ln(a+b) \leq \ln(a) + \ln(b) = \ln(ab)$ for all $a,b \geq 2$ and that $\sqrt{a + b} \leq \sqrt{a} + \sqrt{b}$ which holds for all $a,b \geq 0$.
\begin{align}
    T\epsilon_{\min} &= \frac{2 TC_2}{C_1 + \sqrt{C_1^2 + 4 TC_2}} \left( \ln \frac{C_3(C_1 + \sqrt{C_1^2 + 4 TC_2})}{2 C_2} \right)^k\\
   &\leq
    \frac{2 TC_2}{\sqrt{4 TC_2}} \left( \ln C_3 + \ln C_1  + \ln C_1 + \ln \frac{2\sqrt{TC_2}}{2C_2} \right)^k\\
   &\leq
    \sqrt{TC_2} \left( \ln (C_3 C_1^2 \sqrt T) \right)^k
\end{align}
Now for a $C \geq 0$ we first look at
\begin{align}
    \ln \frac{C}{\epsilon_{\min}} 
     = &
    \ln C + \ln \frac{C_1 + \sqrt{C_1^2 + 4TC_2}}{2 C_2} - k \ln \ln \frac{C_3(C_1 + \sqrt{C_1^2 + 4 TC_2})}{2 C_2}\\
    \leq &
    \ln C + \ln \frac{C_1 + \sqrt{C_1^2 + 4TC_2}}{2 C_2} \\
    \leq &
    \ln C + \ln C_1 +  \ln C_1 + \ln \frac{\sqrt{4TC_2}}{2 C_2}\\
    \leq &
    \ln (C C_1^2 \sqrt{T})
\end{align}
where the first inequality follows from the fact that $\frac{C_3(C_1 + \sqrt{C_1^2 + 4 TC_2})}{2 C_2} \geq \frac{C_3 2C_1}{2 C_2} \geq e$.
Hence, we can bound
\begin{align}
   C_1 \left( \ln \frac{ C_3 }{\epsilon_{\min}} \right)^k \ln \frac H {\epsilon_{\min}}
   \leq C_1 \left( \ln (C_3 C_1^2 \sqrt{T}) \right)^k \ln (H C_1^2 \sqrt{T}).
\end{align}
Now since
\begin{align}
    \frac{1}{\epsilon_{\min}} =& \frac{C_1 + \sqrt{C_1^2 + 4TC_2}}{2C_2} 
    \left( \ln \frac{C_3(C_1 + \sqrt{C_1^2 + 4 TC_2})}{2 C_2} \right)^{-k}
    \leq \frac{C_1}{C_2} + \sqrt{\frac{T}{C_2}} 
\end{align}
we get
\begin{align}
    C_2 \left( \ln \frac{ C_3 }{\epsilon_{\min}} \right)^{2k} \left[ \frac 1 {\epsilon_{\min}} - \frac 1 H \right]
    \leq &
    C_2 \left( \ln (C_3 C_1^2 \sqrt{T})\right)^{2k} \left[ \frac{C_1}{C_2} + \sqrt{\frac{T}{C_2}}  \right]\\
    \leq &
    \left( \ln (C_3 C_1^2 \sqrt{T})\right)^{2k} \left[ C_1 + \sqrt{T C_2}  \right].
\end{align}
As a result we can conclude that $R(T) \leq (\sqrt{C_2 T}  + C_1 ) \polylog(T, C_3, C_1, H) = O(\sqrt{C_2 T} \polylog(T, C_3, C_1, H))$.
\end{proof}

\section{Experimental Details}
\label{sec:expdetails}

We generated the MDPs with $\numS=5, 50, 200$ states, $\numA=3$ actions and $H=10$
timesteps as follows: The transition probabilities $P(s, a, t)$ were sampled
independently from $\textrm{Dirichlet}\left(\frac{1}{10}, \dots
\frac{1}{10}\right)$ and the rewards were all deterministic with their value
$r(s,a,t)$ set to $0$ with probability $85\%$ and set uniformly at random in
$[0,1]$ otherwise. This construction results in MDPs that have concentrated but
non-deterministic transition probabilities and sparse rewards.

Since some algorithms have been proposed assuming the rewards $r(s,a,t)$ are
known and we aim for a fair comparison, we assumed for all algorithms that the
immediate rewards $r(s,a,t)$ are known and adapted the algorithms accordingly.
For example, in \algname, the $\min\left\{1, \frac{l(s,a,t)}{\max\{1,
n(s,a,t)\}} + \phi \right\}$ term was replaced by the true known rewards
$r(s,a,t)$ and the $\delta$ parameter in $\phi$ was scaled by $9/7$ accordingly
since the concentration result for immediate rewards is not necessary in this
case.  We used $\delta = \frac 1 {10}$ for all algorithms and $\epsilon = \frac
1 {10}$ if they require to know $\epsilon$ beforehand.

We adapted MoRMax, UCRL2, UCFH, MBIE, MedianPAC, Delayed Q-Learning and OIM to
the episodic MDP setting with time-dependent transition dynamics by using
allowing them to learn time-dependent dynamics and use finite-horizon planning.
We did adapt the confidence intervals and but did not re-derive the constants
for each algorithm. When in doubt we opted for smaller constants typically
resulting better performance of the competitors.  We further replaced the range
of the value function $O(H)$ by the observed range of the optimistic next state
values in the confidence bounds. 
We also reduced the number of
episodes used in the delays by a factor of $\frac{1}{1000}$ for MoRMax and Delayed Q-Learning and by $10^{-6}$ for UCFH
because they would otherwise not have performed a single policy update even for
$\numS = 5$ within the 10 million episodes we considered.  This scaling
violates their theoretical guarantees but at least shows that the methods work
in principle.

The performance reported in Figure~\ref{fig:expresults} are the expected return of
the current policy of each algorithm averaged over $1000$ episodes. The figure
shows a single run of the same randomly generated MDP but the results are
representative. We reran this experiments with different random seeds and
consistently obtained qualitatively similar results.

Source code for the experiments including concise but efficient implementations
of the algorithms is available at \url{https://github.com/chrodan/FiniteEpisodicRL.jl}.

\section{PAC Lower Bound}
\label{sec:lowerbound}
\begin{thm}
    There exist positive constants $c$, $\delta_0 > 0$, $\epsilon_0 > 0$ such
    that for every $\epsilon \in (0,
    \epsilon_0)$, $\numS \geq 4, \numA \geq 2$ and for every algorithm A that and $n \leq \frac{c \numA \numS H^3}{\epsilon^2}$ there is a
    fixed-horizon episodic MDP $M_{hard}$ with time-dependent transition
    probabilities and $\numS$ states and $\numA$ actions so that returning an $\epsilon$-optimal policy after $n$ episodes is at most $1 - \delta_0$.
    That implies that no algorithm can have a PAC guarantee better than 
        $\Omega\left( \frac{\numA \numS H^3}{\epsilon^2} \right)$
    for sufficiently small $\epsilon$.
\label{thm:lower_bound}
\end{thm}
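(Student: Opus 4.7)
The plan is to prove the lower bound via a classical information-theoretic reduction, constructing a family of hard MDPs $\{M_\theta : \theta \in \{-1,+1\}^d\}$ with $d = \Theta(\numS\numA H)$ so that no algorithm using fewer than $\Omega(\numS\numA H^3/\epsilon^2)$ episodes can reliably return an $\epsilon$-optimal policy uniformly over the family. The construction leverages time-dependent transitions to embed $\Theta(\numS \numA H)$ nearly-independent bandit decisions into a single MDP, which is what produces the extra factor of $H$ over the $\numS\numA/\epsilon^2$ rate of a pure multi-armed bandit lower bound (cf. Theorem~\ref{thm:noerandpac}).

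I would construct the family as follows. The state space consists of $\numS-2$ exploration states plus two absorbing states $s^+$ and $s^-$ with per-step rewards $1$ and $0$. Each coordinate $\theta_{s,a,t}$ controls the transition from exploration state $s$ under action $a$ at time $t$: the agent reaches $s^+$ with probability $p + \theta_{s,a,t}\Delta$ and $s^-$ with probability $1 - p - \theta_{s,a,t}\Delta$, for some $p\in(0,1)$ and a carefully chosen $\Delta$ on the order of $\epsilon/H$. Because transitions are time-dependent, all $\Theta(\numS \numA H)$ coordinates of $\theta$ are required to specify the MDP, and the optimal policy picks $\argmax_a \theta_{s,a,t}$ at each reachable exploration $(s,t)$. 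A short value-difference computation should show that mis-identifying the sign of $\theta$ at a reachable triple costs on the order of $\Delta$ times the remaining horizon, so any $\epsilon$-optimal policy must correctly identify the signs on a constant fraction of reachable triples.

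Next I would carry out the information-theoretic step: for two instances $M_\theta$ and $M_{\theta'}$ differing in a single coordinate $(s,a,t)$, the KL-divergence between the observed trajectory distributions is bounded by (number of visits to that triple) $\times\, \mathrm{KL}(\mathrm{Ber}(p+\Delta)\,\|\,\mathrm{Ber}(p-\Delta)) = O(\Delta^2) = O(\epsilon^2/H^2)$ per visit. By Pinsker's inequality, distinguishing the two MDPs with error below a small constant requires $\Omega(H^2/\epsilon^2)$ visits of that specific triple. Aggregating via Assouad's lemma over the $\Theta(\numS \numA H)$ coordinates and accounting for the fact that in the construction each episode provides informative observations about only $O(1)$ triples (since the agent is absorbed into $s^{\pm}$ shortly after its exploration step), the total number of episodes needed to correctly determine a constant fraction of the $\theta$-coordinates is $\Omega(\numS \numA H \cdot H^2/\epsilon^2) = \Omega(\numS \numA H^3/\epsilon^2)$. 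Picking the hardest $\theta$ in the family then yields the desired MDP $M_{hard}$.

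The main obstacle is designing the exploration/absorption mechanism so that three things align: (a) all $\Theta(\numS \numA H)$ triples are effectively reachable under some policy in the family, (b) each episode yields information about only $O(1)$ of them, preventing an algorithm from amortizing samples across the $H$ time steps of an episode, and (c) the per-coordinate suboptimality cost and per-triple KL divergence are calibrated so that the final bound is $\Omega(\numS \numA H^3/\epsilon^2)$ rather than $\Omega(\numS \numA H^2/\epsilon^2)$. Threading these constraints is where the extra factor of $H$ over the time-homogeneous case must actually emerge; in particular, one likely needs a time-staggered initial distribution or a ladder of exploration layers so that the $H$ different decision times correspond to $H$ genuinely independent testing subproblems, and the absorbing structure must prevent sample sharing without reducing the count of reachable triples. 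A secondary technical issue is converting the Assouad/Fano bound on expected visit counts into a high-probability failure statement for any particular algorithm, which is standard but requires a careful change-of-measure coupled with a pigeonhole argument over $\theta$.
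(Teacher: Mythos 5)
Your overall strategy—embedding $\Theta(\numS\numA H)$ nearly-independent two-point bandit tests into a single MDP via time-dependent transitions, calibrating the gap to $\Theta(\epsilon/H)$, and aggregating per-coordinate KL bounds of order $\epsilon^2/H^2$ through an Assouad/change-of-measure argument—is exactly the spirit of the paper's argument, which gives no self-contained construction at all but simply reduces to Theorem~5 of Jiang et al.\ (a ``careful layering of difficult instances of the multi-armed bandit problem''), adapted so that only $H/2$ time steps carry bandit decisions and the remaining $H/2$ steps amplify each decision's payoff to $\Theta(H)$. Your information-theoretic scaffolding and the final accounting $\Theta(\numS\numA H)\times\Theta(H^2/\epsilon^2)$ are the right calibration. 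However, the one point you explicitly leave open is precisely the crux, and as stated your construction does not yet work: if every exploration state immediately absorbs into $s^{\pm}$ after one decision, then every episode makes its decision at $t=1$, so only the $\Theta(\numS\numA)$ coordinates at the first time step are ever informative or relevant to $\epsilon$-optimality, and the bound degrades to $\numS\numA H^2/\epsilon^2$. The fix you gesture at, a ``time-staggered initial distribution,'' is not available—every episode starts at $t=1$—and a free ``wait'' action also fails, since then a policy that always decides at $t=1$ is optimal and the late-time coordinates again become irrelevant.

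The standard resolution (and effectively what the cited layered construction does) is to make the \emph{entry time} into the bandit layer uncontrollable and $\theta$-independent: a hallway/waiting dynamic that at each of the first $H/2$ steps deposits the agent, with probability $\Theta(1/H)$, into a uniformly random exploration state, where it makes its single decision $(s,a,t)$ and is then absorbed for the remaining $\geq H/2$ steps. This simultaneously delivers your requirements (a)--(c): each of the $\Theta(\numS\numA H)$ slots is reached with probability $\Theta(1/(\numS H))$ per episode, each episode is informative about exactly one slot, and every reachable slot has remaining horizon $\Theta(H)$, so a wrong sign there costs $\Theta(\Delta H)$ weighted by its reach probability. Note that because of this $1/(\numS H)$ dilution, no single wrong slot costs $\epsilon$ by itself; the correct statement is that being $\epsilon$-optimal forces correctness on a constant fraction of the $\Theta(\numS H)$ slots, which is exactly what the Assouad-type aggregation needs and is consistent with $\Delta=\Theta(\epsilon/H)$. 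Finally, your worry about upgrading expected-visit-count arguments to a high-probability statement is unnecessary here: the theorem only asserts failure with probability at least a constant $\delta_0$, so a constant-probability Fano/Assouad conclusion suffices (the paper defers the $\log(1/\delta)$ refinement to Strehl and Li's technique). With the hallway device made explicit and the fraction-of-slots argument spelled out, your outline closes to a proof of the claimed $\Omega\left(\frac{\numA\numS H^3}{\epsilon^2}\right)$ bound.
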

Note that this lower bound on the sample complexity of any method in episodic MDPs with time-dependent dynamics applies to the arbitrary but fixed $\epsilon$ PAC
bound and therefore immediately to the stronger uniform-PAC bounds.
This theorem can be proved in the same way as Theorem~5 by \citet{Jiang2016},
which itself is a standard construction involving a careful layering of
difficult instances of the multi-armed bandit problem.\footnote{We here only
use $H/2$ timesteps for bandits and the remaining $H/2$ time steps to accumulate
a reward of $O(H)$ for each bandit} For simplicity, we omitted the dependency
on the failure probability $\delta$, but using the techniques in the proof of
Theorem~26 by \citet{Strehl2009}, a lower bound of order $\Omega\left(
\frac{\numA \numS H^3}{\epsilon^2} \log(SA/\delta) \right)$ can be obtained.
The lower bound shows for small $\epsilon$ the sample complexity of \algname
given in Theorem~\ref{thm:unipacupper} is optimal except for a factor of $H$
and logarithmic terms.

\section{Planning Problem of \algname}
\begin{lem}[Planning Problem]
The policy update
in Lines~\ref{lin:optplan1}--\ref{lin:optplan2} of Algorithm~\ref{alg:fhalg} finds an optimal solution to
the optimization problem
\begin{align}
    \max_{P', V', \pi', r'}  \,\, & \Ex_{s \sim p_0} [V'_{1}(s)]\\
    \forall  s\in \statespace, a \in & \actionspace, t \in [H]: \\
    V'_{H+1} =& 0, \qquad \qquad P'(s,a,t) \in  \Delta_{\numS},  \qquad r'(s,a,t) \in [0,1]\\
    V'_{t}(s) =& r'(s, \pi'(s,t), t) + \Ex_{s' \sim P'(s, \pi'(s,t), t)}[V'_{t+1}] \\
    |(P'(s, a &, t) - \hat P_k(s, a, t))^\top V'_{t+1}| \leq \phi(s,a,t) (H-t)\\ 
    |r'(s, a &, t) - \hat r_k(s, a, t)| \leq \phi(s,a,t)
\end{align}
where $\phi(s,a,t) = \sqrt{\frac{2 \llnp(n(s,a,t)) + \ln(18 \numS \numA H /
\delta)}{n(s,a,t)}}$ is a confidence bound and $\hat P_k(s' | s,a,t) = m(s', s,
a, t) / n(s,a,t)$ are the empirical transition probabilities and $\hat r_k(s,a,t) = l(s,a,t) / n(s,a,t)$ the empirical average rewards.
\label{lem:optplaninterpret}
\end{lem}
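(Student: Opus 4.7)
The plan is to prove the lemma by backward induction on $t \in \{H+1, H, \ldots, 1\}$, showing that $\tilde V_t(s)$ as computed by the algorithm equals the maximum of $V'_t(s)$ over all tuples $(P', r', V', \pi')$ feasible for the stated constraints restricted to time steps $\geq t$. The key structural observation is that the constraints in \eqref{eqn:prob1}--\eqref{eqn:prob2} decouple across $(s,a,t)$-triples: each confidence constraint ties $P'(s,a,t)$ and $r'(s,a,t)$ only to the empirical quantities $\hat P_k, \hat r_k$ and, through an inner product, to the \emph{next}-step value vector $V'_{t+1}$. Consequently, once $V'_{t+1}$ is fixed, maximizing $r'(s,a,t) + P'(s,a,t)^\top V'_{t+1}$ over $(r'(s,a,t), P'(s,a,t))$ can be done independently for each $(s,a)$, and the maximization over $\pi'(\cdot, t)$ reduces to a pointwise $\arg\max_a$.

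After the trivial base case $V'_{H+1} = 0 = \tilde V_{H+1}$, the inductive step splits into two subproblems. The reward subproblem $\max\{r' : r' \in [0,1],\ |r' - \hat r_k(s,a,t)| \leq \phi\}$ has value $\min\{1, \hat r_k(s,a,t) + \phi\}$, matching the first term of $Q(a)$. The transition subproblem is the linear program
\begin{align*}
\max\{\, P'^\top \tilde V_{t+1} \,:\, P' \in \Delta_{\numS},\ (P' - \hat P_k(s,a,t))^\top \tilde V_{t+1} \leq (H-t)\phi \,\}
\end{align*}
(the lower side of the absolute value is inactive for a maximization). Its value is $\min\{\max_{s'} \tilde V_{t+1}(s'),\ \hat V_{\textrm{next}} + (H-t)\phi\}$, matching the second term of $Q(a)$. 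I would exhibit an explicit maximizer to confirm attainment: either $P' = \delta_{s^\star}$ with $s^\star \in \arg\max_{s'} \tilde V_{t+1}(s')$ when the first term in the $\min$ is active (the confidence constraint then holds automatically), or the convex combination $(1-\alpha)\hat P_k(s,a,t) + \alpha\, \delta_{s^\star}$ with $\alpha = (H-t)\phi / (\max_{s'} \tilde V_{t+1}(s') - \hat V_{\textrm{next}}) \in [0,1]$ otherwise. In both cases the resulting $P'$ is a valid element of $\Delta_{\numS}$ lying in the confidence set. Summing the reward and transition subproblem values reproduces exactly $Q(a)$, and maximizing over $a$ gives $\tilde V_t(s)$.

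The main (and only mildly subtle) obstacle is justifying the decoupling rigorously: one must argue that selecting, for each $(s,a)$, the per-triple maximizers constructed above yields a globally feasible tuple whose induced $V'_t$ matches $\tilde V_t$. This follows because the Bellman equation at time $t$ determines $V'_t$ purely from $(r'(\cdot,\cdot,t), P'(\cdot,\cdot,t), V'_{t+1}, \pi'(\cdot,t))$ and places no coupling between time-$t$ and earlier-time variables, so pasting together the per-$(s,a)$ maximizers across all $t$ in a backward pass yields a feasible solution attaining the upper bound at every level. Taking expectation under $p_0$ of $\tilde V_1$ then matches the objective of the optimization problem, completing the proof.
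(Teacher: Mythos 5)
Your proposal is correct and follows essentially the same route as the paper's proof: backward induction over $t$, decoupling the per-$(s,a)$ reward and transition subproblems given $\tilde V_{t+1}$, showing their optimal values reproduce the two clipped terms of $Q(a)$, and pasting the per-triple maximizers into one feasible tuple so that the per-state optima (and hence $p_0^\top V'_1$) are attained simultaneously. Your explicit convex-combination maximizer $(1-\alpha)\hat P_k + \alpha\,\delta_{s^\star}$ is a slightly more concrete rendering of the same argument the paper gives verbally.
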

\begin{proof}
    Since $\tilde V_{H+1}(\cdot)$ is initialized with $0$ and never changed, we immediately get that it is an optimal value for $V'_{H+1}(\cdot)$ which is constrained to be $0$. Consider now a single time step $t$ and assume $V'_{t+1}$ are fixed to the optimal values $\tilde V_{t+1}$.
    Plugging in the computation of $Q(a)$ into the computation of $\tilde V_{t}(s)$, we get
    \begin{align} 
        \tilde V_t(s) 
        = \max_{a} Q(a) 
        =& \max_{a \in \actionspace} \bigg[ \min\left\{ 1, \hat r(s,a,t) + \phi(s,a,t)\right\}\\
         &+ \min\left\{ \max \tilde V_{t+1}, \indicator{n(s,a,t) > 0}(\hat P(s,a,t)^\top \tilde V_{t+1}) + \phi(s,a,t) (H-t) \right\}\bigg]
    \end{align}
    using the convention that $\hat r(s,a,t) = 0$ if $n(s,a,t) = 0$.
Assuming that $V'_{t+1} = \tilde V_{t+1}$, and that our goal for now is to maximize $\tilde V_t(s)$, this can be rewritten as
    \begin{align} 
        \max_{P'(s,a,t), r'(s,a,t)} \tilde V_t(s)  
        = \max_{P'(s,a,t), r'(s,a,t), \pi'(s, t)} \bigg[& r'(s,\pi'(s,t),t)  + P'(s, \pi'(s, t), t)^\top \tilde V_{t+1}\bigg]\\
          \textrm{s.t.} \qquad \forall a \in \actionspace:  r'(s,a,t) &\in [0,1], \qquad P'(s,a,t) \in \Delta_{\numS}\\
        |(P'(s, a , t) - \hat P_k(s, a, t))^\top V'_{t+1}| \leq & \phi(s,a,t) (H-t)\\ 
         |r'(s, a , t) - \hat r_k(s, a, t)| \leq & \phi(s,a,t)
    \end{align}
    since in this problem either $P'(s,\pi'(s,t),t)^\top \tilde V_{t+1} = \hat
    P(s,\pi'(s,t),t)^\top \tilde V_{t+1} + \phi(s,a,t) (H-t)$ if
    that does not violate $P'(s,\pi'(s,t),t)^\top \tilde V_{t+1} \leq \max
    \tilde V_{t+1}$ and otherwise $P'(s', s, \pi'(s,t), t) = 1$ for one state
    $s'$ with $\tilde V_{t+1}(s') = \max \tilde V_{t+1}$. Similarly, either
    $r'(s,\pi'(s,t),t) = \hat r(s, \pi'(s, t), t) + \phi(s, \pi'(s, t), t)$
    if that does not violate $r'(s, \pi'(s,t), t) \leq 1$ or $r'(s, \pi'(s,t),
    t) = 1$ otherwise.
    Using induction for $t = H, H-1 \dots 1$, we see that \algname computes an optimal solution to 

\begin{align}
    \max_{P', V', \pi', r'}  \,\, & V'_{1}(\tilde s)\\
    \forall  s\in \statespace, a \in & \actionspace, t \in [H]: \\
    V'_{H+1} =& 0, \qquad \qquad P'(s,a,t) \in  \Delta_{\numS},  \qquad r'(s,a,t) \in [0,1]\\
    V'_{t}(s) =& r'(s, \pi'(s,t), t) + \Ex_{s' \sim P'(s, \pi'(s,t), t)}[V'_{t+1}] \\
    |(P'(s, a &, t) - \hat P_k(s, a, t))^\top V'_{t+1}| \leq \phi(s,a,t) (H-t)\\ 
    |r'(s, a &, t) - \hat r_k(s, a, t)| \leq \phi(s,a,t)
\end{align}
for any fixed $\tilde s$. The intersection of all optimal solutions to this problem for all $\tilde s \in \statespace$ are also an optimal solution to 
\begin{align}
    \max_{P', V', \pi', r'}  \,\, & p_0^\top V'_{1}\\
    \forall  s\in \statespace, a \in & \actionspace, t \in [H]: \\
    V'_{H+1} =& 0, \qquad \qquad P'(s,a,t) \in  \Delta_{\numS},  \qquad r'(s,a,t) \in [0,1]\\
    V'_{t}(s) =& r'(s, \pi'(s,t), t) + \Ex_{s' \sim P'(s, \pi'(s,t), t)}[V'_{t+1}] \\
    |(P'(s, a &, t) - \hat P_k(s, a, t))^\top V'_{t+1}| \leq \phi(s,a,t) (H-t)\\ 
    |r'(s, a &, t) - \hat r_k(s, a, t)| \leq \phi(s,a,t).
\end{align}
Hence, \algname computes an optimal solution to this problem.
\end{proof}

\section{Details of PAC Analysis}
In the analysis, we denote the value of $n(\cdot, t)$ after the planning in
iteration $k$ as $n_{tk}(\cdot)$. We further denote by $P(s' | s, a, t)$ the
probability of sampling state $s'$ as $s_{t+1}$ when $s_t = s, a_t = a$. With
slight abuse of notation, $P(s,a,t) \in [0, 1]^{\numS}$ denotes the probability
vector of $P(\cdot | s, a, t)$. We further use $\tilde P_k(s' | s, a, t)$ as
conditional probability of $s_{t+1} = s'$ given $s_t = s, a_t = a$ but in the
optimistic MDP $\tilde M$ computed in the optimistic planning steps in
iteration $k$.
We also use the following definitions:
\begin{align}
    \wmin =& \wminc = \frac{\epsilon c_\epsilon}{H^2 \numS}\\
    c_\epsilon =& \frac 1 3\\
    L_{tk} =& \{ (s,a) \in \saspace \, : \, w_{tk}(s,a) \geq \wmin \}\\
    \llnp(x) =& \ln (\ln (\max\{x, e\}))\\
    \range(x) =& \max(x) - \min(x) \\
    \delta' =& \frac{\delta}{9}
\end{align}
In the following, we provide the formal proof for Theorem~\ref{thm:unipacupper} and then present all necessary lemmas:
\subsection{Proof of Theorem~\ref{thm:unipacupper}}
\begin{proof}[Proof of Theorem~\ref{thm:unipacupper}]
    Corollary~\ref{cor:failprob} ensures that the failure event has probability at most $\delta$.
    Outside the failure event Lemma~\ref{lem:num_nonnice} ensures that all but at most 
    $\frac{48 \numA^2 \numS^3 H^4}{\epsilon} \polylog(\numA, \numS, H, 1 / \epsilon, 1 / \delta)$
    episodes are friendly. Finally, Lemma~\ref{lem:friendlyopt} shows that all friendly episodes except at most  
    $\left(\frac{9216}{\epsilon} + 417 \numS \right) \frac{\numA \numS
    H^4}{\epsilon} \polylog(\numA, \numS, H, 1 / \epsilon, 1 / \delta)$ are
    $\epsilon$-optimal. The second bound follows from replacing $\numA \numS^2$ by $1 / \epsilon$ in the second term.
    Furthermore,
    outside the failure event Lemma~\ref{lem:num_nonnice} ensures that all but at most 
    $\frac{6 \numA \numS^2 H^3}{\epsilon} \polylog(\numA, \numS, H, 1 / \epsilon, 1 / \delta)$
    episodes are nice. Finally, Lemma~\ref{lem:niceopt} shows that all nice episodes except at most  
    $\left(4 + \numS \right)576 \frac{\numA \numS
    H^4}{\epsilon} \polylog(\numA, \numS, H, 1 / \epsilon, 1 / \delta)$ are
    $\epsilon$-optimal.

\end{proof}

\subsection{Failure Events and Their Probabilities}
\label{sec:failureevents}

In this section, we define a failure event $F$ in which we cannot guarantee the performance of \algname. We then show that this event $F$ only occurs with low probability. All our arguments are based on general uniform concentration of measure statements that we prove in Section~\ref{sec:concentrationproofs}. In the following we argue how the apply in our setting and finally combine all concentration results to get $\prob(F) \leq \delta$.
The failure event is defined as
\begin{align}
F =& \bigcup_k \left[ F^N_k \cup F^{CN}_k \cup F^P_k \cup F^V_k  \cup F^{L1}_k \cup F^{R}_k \right]
\end{align}
where
\begin{align}
    F^{N}_k =& \left\{ \exists s, a, t: \, n_{tk}(s, a) < \frac 1 2 \sum_{i < k} w_{ti}(s, a) - \ln \frac{\numS \numA H}{\delta'} \right\}\\
    F^{CN}_k =& \left\{ \exists s, a, s', a', u < t: \, n_{tk}(s, a) < \frac 1 2
     n_{uk}(s', a') \sum_{i < k} w_{ui}^t(s, a | s', a') - \ln \left( \frac{\numS^2 \numA^2 H^2}{\delta'} \right) \right\}\\
    F^{V}_k =& \left\{ \exists s, a, t: \, 
     |(\hat P_k(s, a, t) - P(s, a, t))^\top V^{\star}_{t+1}| 
    \geq 
    \sqrt{\frac{\range(V^{\star}_{t+1})^2}{n_{tk}(s, a)} 
        \left(2 \llnp(n_{tk}(s, a))+ \ln \frac{3 \numS \numA H}{\delta'}\right)
    } \right\}\\
    F^{P}_k =& \bigg\{ \exists s, s', a, t: \, | \hat P_k(s' | s, a, t) - P(s' | s, a, t)| 
    \geq 
    \sqrt{\frac{2 P(s' | s,a ,t)}{n_{tk}(s, a)} 
        \left(2 \llnp(n_{tk}(s, a))+ \ln \frac{3 \numS^2 \numA H}{\delta'}\right)
    }\\
    & \qquad \qquad \qquad +
     \frac{1}{n_{tk}(s, a)} 
     \left(2 \llnp(n_{tk}(s, a))+ \ln \frac{3 \numS^2 \numA H}{\delta'}\right)
    \bigg\}\\
     F^{L1}_k =& \left\{ \exists s, a, t: \, 
    \|\hat P_k(s, a, t) - P(s, a, t)\|_1 
    \geq 
    \sqrt{\frac{4} {n_{tk}(s, a)} 
        \left(2 \llnp(n_{tk}(s, a))+ \ln \frac{3 \numS \numA H(2^\numS - 2)}{\delta'}\right)
    } \right\}\\   
F^{R}_k =& \left\{ \exists s, a, t: \, 
    |\hat r_k(s, a, t) - r(s, a, t)| 
    \geq 
    \sqrt{\frac{1} {n_{tk}(s, a)} 
        \left(2 \llnp(n_{tk}(s, a))+ \ln \frac{3 \numS \numA H}{\delta'}\right)
    } \right\}.
\end{align}
We now bound the probability of each type of failure event individually:

\begin{cor}
    For any $\delta' > 0$, it holds that 
    $\prob\left( \bigcup_{k=1}^\infty F^V_k \right) \leq 2 \delta'$
and $\prob\left( \bigcup_{k=1}^\infty F^R_k \right) \leq 2 \delta'$
    \label{cor:FVbound}
\end{cor}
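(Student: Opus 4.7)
The proof plan is a direct two-step argument: apply a uniform (law-of-iterated-logarithm) Hoeffding bound from Section~\ref{sec:concentrationproofs} separately to each triple $(s,a,t)$, and then take a union bound over the $\numS \numA H$ such triples.

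The key structural observation for $F^V_k$ is that, by the Markov property, conditional on the algorithm visiting $(s,a,t)$ for the $n$-th time, the successor state $s_{t+1}$ is drawn i.i.d.\ from $P(\cdot\mid s,a,t)$. Because $V^{\star}_{t+1}$ is a fixed, data-independent function of the state, the values $V^{\star}_{t+1}(s_{t+1})$ form an i.i.d.\ real-valued sequence of range at most $\range(V^{\star}_{t+1})$ indexed by visit count, with common mean $P(s,a,t)^\top V^{\star}_{t+1}$. The quantity of interest
\[
    (\hat P_k(s,a,t) - P(s,a,t))^\top V^{\star}_{t+1}
\]
is precisely the deviation of the empirical mean of $n_{tk}(s,a)$ such draws from that common mean. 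I would then invoke the two-sided uniform Hoeffding/LIL inequality from Section~\ref{sec:concentrationproofs} with per-triple failure probability $2\delta'/(\numS \numA H)$; this gives a single event (in which the stated deviation bound holds for every $n \geq 1$ simultaneously) of probability at least $1 - 2\delta'/(\numS \numA H)$, with inner log term $\ln(3\numS\numA H/\delta')$ matching the definition of $F^V_k$. A union bound over the $\numS \numA H$ triples yields $\prob(\bigcup_k F^V_k) \leq 2\delta'$.

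The argument for $F^R_k$ is completely analogous: conditional on the $n$-th visit to $(s,a,t)$, the reward $r_t$ is an i.i.d.\ draw from $p_R(s,a,t)$, which is supported on $[0,1]$ and hence has range at most $1$. Applying the same uniform Hoeffding bound with range $1$ and union bounding over the $\numS \numA H$ triples gives $\prob(\bigcup_k F^R_k) \leq 2\delta'$.

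The main subtlety is that we need a uniform-in-sample-size concentration bound rather than classical Hoeffding, so that a single event controls the deviation at every possible visit count $n$ simultaneously; that is precisely what the LIL-type inequalities in Section~\ref{sec:concentrationproofs} are designed to provide, and matching their constants to the $\ln(3\numS \numA H/\delta')$ term in the event definition is the only arithmetic to check. A second, more conceptual point is that the argument crucially uses the fact that $V^{\star}_{t+1}$ is \emph{not} a function of the observed data, so the successor-state evaluations are genuinely i.i.d.\ and no martingale or covering argument is required; had $\tilde V^{\pi_k}_{t+1}$ appeared in $F^V_k$ instead, a uniform-over-value-functions bound (like the one used to control $F^{L1}_k$ via $\ell_1$ deviations) would be required.
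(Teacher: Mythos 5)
Your plan is correct and reaches the same bound by the same outer structure as the paper: a per-triple uniform-in-$n$ (LIL-type) Hoeffding bound with failure budget scaled by $\numS\numA H$, followed by a union bound over the triples, with the $\ln(3\numS\numA H/\delta')$ term and the ranges ($\range(V^{\star}_{t+1})$ for $F^V$, $1$ for $F^R$) matching the event definitions. Where you diverge is in how you justify that the visit-indexed observations satisfy the hypotheses of the concentration lemma. You declare the successor-state evaluations $V^{\star}_{t+1}(s_{t+1})$ at successive visits to $(s,a,t)$ to be ``genuinely i.i.d.'' and assert that no martingale argument is needed; the paper instead defines stopping times $\tau_i$ (the episode of the $i$th visit), the stopped filtration $\mathcal G_i = \mathcal F_{\tau_i}$, and the sequence $X_i = (V^{\star}_{t+1}(s'_i) - P(s,a,t)^\top V^{\star}_{t+1})\indicator{\tau_i < \infty}$, which it checks is a conditionally $\range(V^{\star}_{t+1})/2$-subgaussian martingale difference sequence before invoking Lemma~\ref{lem:uniformhoeffding} (which is stated for such sequences, with i.i.d.\ bounded centered variables as a special case). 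Your i.i.d.\ viewpoint is legitimate---it is exactly the device the paper itself uses for $F^P_k$ and $F^{L1}_k$---but as stated it glosses over the fact that whether an $n$th visit to $(s,a,t)$ ever occurs depends on the very data generated at earlier visits, so the visit-indexed sequence is only partially defined; to make ``i.i.d.'' rigorous you would need to pad it with fresh independent draws from $P(\cdot\mid s,a,t)$ beyond the realized visit count (as the paper does for $F^P_k$), or else fall back on the stopping-time/martingale construction you wanted to avoid. This is a presentational rather than substantive gap: either repair yields the stated $2\delta'$ bounds, and your closing observation that the argument hinges on $V^{\star}_{t+1}$ being data-independent (unlike $\tilde V^{\pi_k}_{t+1}$, which would force a uniform-over-value-functions bound as in $F^{L1}_k$) is exactly the right conceptual point.
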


\begin{proof}
    Consider a fix $s \in \statespace, a \in \actionspace, t \in [H]$ and
    denote $\mathcal F_k$ the sigma-field induced by the first $k-1$
    episodes and the $k$-th episode up to $s_t$ and $a_t$ but not
    $s_{t+1}$. Define $\tau_i$ to be the index of the episode where $(s,a)$
    was observed at time $t$ the $i$th time. Note that $\tau_i$ are
    stopping times with respect to $\mathcal F_i$.  Define now the
    filtration $\mathcal G_i = \mathcal F_{\tau_i} = \{
    A \in \mathcal F_\infty \, : \, A \cap \{\tau_i \leq t \} \in \mathcal F_t \,\,\forall\, t \geq 0\}$ and $X_k = (V^{\star}_{t+1}(s'_{k}) -  P(s, a, t)^\top V^{\star}_{t+1}) \indicator{\tau_k <
    \infty}$ where $s'_{i}$ is the value of $s_{t+1}$ in episode $\tau_i$
    (or arbitrary, if $\tau_i = \infty$). 

    By the Markov property of the MDP, we have that $X_i$ is a martingale
    difference sequence with respect to the filtration $\mathcal G_i$.
    Further, since $\Ex[X_i | \mathcal G_{i-1}] = 0$ and $|X_i| \in [0, \range(V^{\star}_{t+1})]$, $X_i$
    conditionally $\range(V^{\star}_{t+1}) / 2$-subgaussian due to Hoeffding's Lemma, i.e., satisfies $\Ex[\exp(\lambda X_i) |
    \mathcal G_{i-1}] \leq \exp(\lambda^2 \range(V^{\star}_{t+1})^2 / 2)$. 

    We can therefore apply Lemma~\ref{lem:uniformhoeffding} and conclude that
    \begin{align}
        \prob\left( \exists k: |(\hat P_k(s,a,t) - P(s,a,t))^\top V^{\star}_{t+1}|
        \geq 
    \sqrt{\frac{\range(V^{\star}_{t+1})^2}{n_{tk}(s,a)} \left(2 \llnp( n_{tk}(s,a)) + \ln \frac 3 {\delta'} \right) }\right)
        \leq 2\delta'\,.
    \end{align}
    Analogously
    \begin{align}
        \prob\left( \exists k: |\hat r_k(s,a,t) - r(s,a,t)|
        \geq 
    \sqrt{\frac{1}{n_{tk}(s,a)} \left(2 \llnp( n_{tk}(s,a)) + \ln \frac 3 {\delta'} \right) }\right)
        \leq 2\delta'\,.
    \end{align}
    Applying the union bound over all $s \in \statespace, a \in \actionspace$ and $t \in [H]$, we obtain the desired statement for $F^V$. In complete analogy using the same filtration, we can show the statement for $F^R$.
\end{proof}

\begin{cor}
For any $\delta' > 0$, it holds that
    $\prob 
    \left( \bigcup_{k=1}^\infty F^{P}_k \right) \leq 2\delta'$.
   \label{cor:FPbound}
\end{cor}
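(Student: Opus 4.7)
The plan is to mimic the proof of Corollary~\ref{cor:FVbound} almost verbatim, but replace the Hoeffding step with a time-uniform Bernstein-type law-of-iterated-logarithm bound, and use a larger union bound that accounts for the extra successor-state index $s'$.

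First I would fix $(s,a,t,s')$ and reuse the exact filtration/stopping-time construction from the previous corollary: let $\mathcal F_k$ be the $\sigma$-field generated by the first $k-1$ episodes together with the prefix of episode $k$ up to and including $(s_t, a_t)$, let $\tau_i$ be the index of the $i$th episode in which $(s,a)$ is observed at time $t$, and take $\mathcal G_i = \mathcal F_{\tau_i}$. In place of the Hoeffding-centered summand I would now use the Bernoulli indicator
\begin{align*}
X_i \defeq (\indicator{s'_{\tau_i} = s'} - P(s'\mid s,a,t))\,\indicator{\tau_i < \infty},
\end{align*}
where $s'_{\tau_i}$ denotes the realized next state in episode $\tau_i$. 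By the Markov property this is a bounded martingale difference with $|X_i|\le 1$ and conditional variance $\Ex[X_i^2\mid \mathcal G_{i-1}]\le P(s'\mid s,a,t)(1-P(s'\mid s,a,t))\le P(s'\mid s,a,t)$.

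Next I would invoke the time-uniform Bernstein LIL bound for bounded martingale differences that the paper establishes in Appendix~\ref{sec:concentrationproofs} (the Bernstein analogue of the uniform Hoeffding Lemma~\ref{lem:uniformhoeffding} that was used in Corollary~\ref{cor:FVbound}). Applied with variance proxy $P(s'\mid s,a,t)$, range $1$, and failure parameter $\delta'/(S^2 A H)$, this bound yields exactly the two-term deviation $\sqrt{2 P(s'\mid s,a,t)\bigl(2\llnp(n_{tk}(s,a)) + \ln(3 S^2 A H/\delta')\bigr)/n_{tk}(s,a)}$ plus $\bigl(2\llnp(n_{tk}(s,a)) + \ln(3 S^2 A H/\delta')\bigr)/n_{tk}(s,a)$ appearing in the definition of $F^P_k$, with per-tuple failure probability $2\delta'/(S^2 A H)$.

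Finally, a union bound over the $S^2 A H$ tuples $(s,s',a,t)$ produces the stated bound $\prob(\bigcup_k F^P_k)\le 2\delta'$. The only nontrivial step in this plan is invoking the correct Bernstein-type time-uniform concentration inequality; I expect the main obstacle to be verifying that the constants in that lemma line up with the precise form written in the definition of $F^P_k$ (in particular that the Bernstein bound's constant in front of $\llnp$ is exactly $2$ and that the variance term appears with the prefactor $2$ so that the $\sqrt{2P(s'\mid s,a,t)\cdots/n}$ expression matches).
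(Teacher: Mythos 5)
Your overall skeleton (fix a tuple $(s,s',a,t)$, prove a per-tuple time-uniform Bernstein-type bound with failure probability $2\delta'/(\numS^2\numA H)$, then union bound over the $\numS^2\numA H$ tuples) matches the paper, and your martingale-difference construction with conditional variance at most $P(s'\mid s,a,t)$ is sound. The gap is the concentration input you invoke: the paper does not establish a time-uniform Bernstein/LIL bound for bounded \emph{martingale difference} sequences. The only Bernstein-type uniform bound in Appendix~\ref{sec:concentrationproofs} is Lemma~\ref{lem:uniformbern}, which is stated and proved for an i.i.d.\ Bernoulli sequence: its proof controls $\Ex[\exp(\lambda S_{2^{k+1}})]$ via Corollary~2.11 of Boucheron et al., a bound for sums of independent variables, so it cannot be cited as-is for your adapted sequence $X_i$. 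Extending it to conditionally centered, conditionally bounded-variance increments would require a Freedman-style supermartingale argument, i.e.\ a modification of that proof, not merely the constant-checking you flag as the only remaining issue. In this sense the step ``invoke the Bernstein analogue for martingale differences'' refers to a lemma that is not in the paper.

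The paper closes exactly this point differently: instead of keeping the stopped/adapted structure, it notes that by the Markov property the successor-state indicators at the successive visits of $(s,a,t)$ are i.i.d.\ Bernoulli with mean $P(s'\mid s,a,t)$, and it extends this sequence past the (random) total number of visits $K$ by appending artificial i.i.d.\ Bernoulli draws with the same mean. The event defining $F^P_k$ (for the fixed tuple) is contained in the deviation event for this infinite i.i.d.\ sequence, so Lemma~\ref{lem:uniformbern} applies directly with per-tuple failure probability $2\delta'/(\numS^2\numA H)$, and the union bound gives $\prob\left(\bigcup_k F^P_k\right)\leq 2\delta'$. Your argument becomes correct if you either (a) replace your citation by this i.i.d.\ augmentation trick and then apply Lemma~\ref{lem:uniformbern}, or (b) actually prove the uniform Freedman-type inequality you are invoking; as written, that lemma is missing.
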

\begin{proof}
    Consider first a fix $s', s \in \statespace$, $t \in [H]$ and $a \in \actionspace$.
    Let $K$ denote the number of times the triple $s, a, t$ was encountered in total during the run of the algorithm. Define the random sequence $X_i$ as follows. For $i \leq K$, let $X_i$ be the indicator of whether $s'$ was the next state when $s, a, t$ was encountered the $i$th time and for $i > K$, let $X_i \sim \operatorname{Bernoulli}(P(s' | s, a, t))$ be drawn i.i.d. By construction this is a sequence of i.i.d. Bernoulli random variables with mean $P(s' | s, a, t)$. Further the event 
    \begin{align}
        \bigcup_{k} \Bigg\{
            \left|\hat P_k(s' | s,a, t) - P(s' | s, a, t)\right|
            \geq & \sqrt{\frac{2 P(s'|s, a, t)}{n_{tk}(s, a)} \left( 2\llnp(n(s, a, t)) + \ln \frac{3 \numS^2 \numA H}{\delta'}\right)}\\
                 & + \frac 1 {n_{tk}(s, a)} \left(2\llnp(n_{tk}(s, a)) + \ln \frac {3 \numS^2 \numA H}{\delta'} \right)
        \Bigg\}
    \end{align}
    is contained in the event
    \begin{align}
        \bigcup_{i} 
        \left\{
            \left|\hat \mu_i - \mu\right| 
            \geq \sqrt{\frac{2\mu}{i} \left( 2\llnp(i) + \ln \frac{3}{\delta'}\right)} + \frac 1 i \left(2\llnp(i) + \ln \frac {3 \numS^2 \numA H} {\delta'} \right)
        \right\}
    \end{align}
    whose probability can be bounded by $2 \delta' / \numS^2 / \numA / H$ using Lemma~\ref{lem:uniformbern}.
    The statement now follows by applying the union bound.
\end{proof}

\begin{cor}
    For any $\delta' > 0$, it holds that 
    $\prob\left( \bigcup_{k=1}^\infty F^{L1}_k \right) \leq \delta'$
    \label{cor:FL1bound}
\end{cor}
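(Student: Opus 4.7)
The plan is to reduce to a single $(s,a,t)$ triple via an outer union bound, replace the random visit counts by a deterministic i.i.d.\ sequence using the same coupling trick employed in the proof of Corollary~\ref{cor:FPbound}, and then invoke the time-uniform Weissman-type bound established in Appendix~\ref{sec:concentrationproofs} (the uniform version of the classical inequality of \citet{Weissman2003} mentioned in the proof sketch of Theorem~\ref{thm:unipacupper}).

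First I would fix $(s,a,t)\in\saspace\times[H]$ and let $K$ denote the total (random) number of visits to this triple during the entire run of \algname. Extend the observed sequence of successor states by fresh i.i.d.\ draws from $P(\cdot\,|\,s,a,t)$, so that the resulting sequence $(X_i)_{i\geq 1}$ is i.i.d.\ with common distribution $P(\cdot\,|\,s,a,t)$; let $\hat P_i$ be its empirical distribution after $i$ samples. Since $\hat P_k(s,a,t)=\hat P_{n_{tk}(s,a)}$ for every episode $k$, the part of $\bigcup_k F^{L1}_k$ corresponding to this fixed triple is contained in
\begin{align*}
\bigcup_{i\geq 1}\left\{\|\hat P_i - P(s,a,t)\|_1 \geq \sqrt{\frac{4}{i}\left(2\llnp(i) + \ln\frac{3\numS\numA H(2^{\numS}-2)}{\delta'}\right)}\right\}.
\end{align*}

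Next I would apply the time-uniform Weissman inequality from Appendix~\ref{sec:concentrationproofs}, which asserts that for any i.i.d.\ $\statespace$-valued sequence with law $P$ and any $\delta''>0$,
\begin{align*}
\prob\!\left(\exists i\geq 1:\|\hat P_i - P\|_1 \geq \sqrt{\frac{4}{i}\left(2\llnp(i) + \ln\frac{2^{\numS}-2}{\delta''}\right)}\right)\leq \delta''.
\end{align*}
Choosing $\delta''=\delta'/(3\numS\numA H)$ matches the threshold in the definition of $F^{L1}_k$ and bounds the probability for the fixed triple by $\delta'/(3\numS\numA H)$. Taking a union bound over the $\numS\numA H$ triples $(s,a,t)$ yields $\prob(\bigcup_k F^{L1}_k)\leq \delta'/3\leq \delta'$.

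The only substantive step is the uniform Weissman bound itself, which combines the classical Weissman argument (a union bound over the $2^{\numS}-2$ nontrivial subsets of $\statespace$, each handled by a Hoeffding-type tail) with an LIL-style peeling argument to obtain a deviation inequality that holds simultaneously for all sample sizes $i$. This is precisely where the $\llnp(i)$ correction appears, and it is established independently in Appendix~\ref{sec:concentrationproofs}; the work local to this corollary is just the i.i.d.\ coupling and the outer union bound over $(s,a,t)$.
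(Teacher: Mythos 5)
Your proposal is correct and follows essentially the same route as the paper: the paper's proof simply says to repeat the coupling argument from Corollary~\ref{cor:FPbound} (extend the observed successor states of each fixed $(s,a,t)$ to an i.i.d.\ sequence), apply the uniform Weissman-type bound of Lemma~\ref{lem:l1deviation}, and union bound over the $\numS\numA H$ triples. The only cosmetic difference is bookkeeping of the constant $3$: the paper's Lemma~\ref{lem:l1deviation} already carries the $3$ inside the logarithm and yields probability $\delta$ per triple (so one takes $\delta = \delta'/(\numS\numA H)$ and gets exactly $\delta'$), whereas you moved the $3$ into your choice $\delta''=\delta'/(3\numS\numA H)$ -- the thresholds and the final bound coincide either way.
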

\begin{proof}
Using the same argument as in the proof of Corollary~\ref{cor:FPbound} the statement follows from Lemma~\ref{lem:l1deviation}.
\end{proof}

\begin{cor}
    It holds that
    \begin{align}
        \prob\left( \bigcup_k F^N_k \right) \leq \delta' \quad \textrm{and} \quad
        \prob\left( \bigcup_k F^{CN}_k \right) \leq \delta'.
    \end{align}
    \label{cor:FNbound}
\end{cor}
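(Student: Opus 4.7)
The plan is to view $n_{tk}(s,a) = \sum_{i < k} X_i$ as a sum of Bernoulli-like indicators
$X_i = \indicator{s_t^{(i)} = s, a_t^{(i)} = a}$ adapted to the natural filtration $\mathcal F_i$
generated by the first $i$ episodes, and to show that this sum concentrates around its compensator
$\sum_{i<k} \Ex[X_i \mid \mathcal F_{i-1}] = \sum_{i<k} w_{ti}(s,a)$, uniformly over all episodes $k$.
Once this is done for a fixed $(s,a,t)$ with failure probability $\delta'/(SAH)$, a union bound
over the $SAH$ triples yields the first claim, and a union bound over $S^2A^2H^2$ tuples $(s,a,t,s',a',u)$
yields the second claim.

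For the first bound, I would work with the supermartingale
\begin{align*}
M_k \;=\; \exp\!\left(-\lambda \sum_{i<k} X_i + (1-e^{-\lambda}) \sum_{i<k} w_{ti}(s,a)\right),
\end{align*}
which is a supermartingale with $M_1 = 1$ since $\Ex[\exp(-\lambda X_i) \mid \mathcal F_{i-1}] \leq \exp((e^{-\lambda}-1)\,w_{ti}(s,a))$
for $X_i \in [0,1]$. Applying Ville's maximal inequality at level $\delta'/(SAH)$ and choosing $\lambda = \ln 2$
(so that $1 - e^{-\lambda} = 1/2$ and $\lambda \geq 1/2$) yields, simultaneously over all $k$,
\begin{align*}
\sum_{i<k} X_i \;\geq\; \tfrac{1}{2}\sum_{i<k} w_{ti}(s,a) - \ln\!\tfrac{SAH}{\delta'},
\end{align*}
on an event of probability at least $1 - \delta'/(SAH)$, which is precisely the complement of $\bigcup_k F^N_k$
restricted to this triple. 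A union bound over $(s,a,t)$ finishes the first part.

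For $F^{CN}$ I would apply essentially the same construction, but to the indicator
$Y_i = \indicator{(s_u^{(i)},a_u^{(i)}) = (s',a')}\cdot \indicator{(s_t^{(i)},a_t^{(i)}) = (s,a)}$
restricted to the subfiltration that includes the realization of $(s_u^{(i)},a_u^{(i)})$.
The conditional mean of $Y_i$ is $\indicator{(s_u^{(i)},a_u^{(i)}) = (s',a')}\cdot w_{ui}^t(s,a\mid s',a')$,
and summing it over $i < k$ gives exactly $\sum_{i<k} w_{ui}^t(s,a\mid s',a')$ evaluated on the episodes where
$(s',a')$ was visited at time $u$, which the paper encodes through the factor $n_{uk}(s',a')$. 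The same
Chernoff--Ville argument with $\lambda = \ln 2$ and a union bound over the $S^2A^2H^2$ tuples
$(s,a,t,s',a',u)$ then gives the stated bound.

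The main obstacle I anticipate is setting up the filtration correctly for $F^{CN}$: we need the conditional
probability $w_{ui}^t(s,a\mid s',a')$ to be $\mathcal F_{i-1}$-measurable (so that the MGF computation is valid
conditionally on the past) while still capturing the conditioning on the within-episode event
$\{(s_u^{(i)},a_u^{(i)}) = (s',a')\}$. Using the Markov structure of the MDP under policy $\pi_i$, this conditional
probability only depends on $\pi_i$ and $P$, both of which are $\mathcal F_{i-1}$-measurable (since $\pi_i$ is
chosen at the start of episode $i$), so the supermartingale property goes through and the argument is otherwise
a clean replay of the unconditional case.
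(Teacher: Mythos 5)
Your approach is essentially the paper's: the paper proves the corollary by exactly the per-triple supermartingale/Chernoff argument made uniform in $k$ (packaged as Lemma~\ref{lem:uniformwnconc}, which runs Ville's maximal inequality via optional stopping on $e^{\sum_i(P_i/2-X_i)}$), followed by union bounds over the $SAH$ and $S^2A^2H^2$ tuples; for $F^{CN}$ it uses the stopped filtration $\mathcal G_i=\mathcal F_{\tau_i}$ at the successive visit times of $(s',a')$ at step $u$, which is the same device as your interleaved filtration, and the measurability of $w^t_{ui}(s,a|s',a')$ that you worry about is precisely the point the paper handles there. Note also that, like the paper's own proof, what your construction actually controls is the visit-restricted compensator $\sum_{i<k}\indicator{(s_u,a_u)=(s',a')\text{ in episode }i}\,w^t_{ui}(s,a|s',a')$, so you are no less faithful to the displayed definition of $F^{CN}_k$ than the paper is.

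One quantitative slip to fix: with your supermartingale and $\lambda=\ln 2$, Ville's inequality yields $\sum_{i<k}X_i\geq \frac{1-e^{-\lambda}}{\lambda}\sum_{i<k}w_{ti}(s,a)-\frac{1}{\lambda}\ln\frac{SAH}{\delta'}=\frac{1}{2\ln 2}\sum_{i<k}w_{ti}(s,a)-\frac{1}{\ln 2}\ln\frac{SAH}{\delta'}$, whose additive term is larger (by the factor $1/\ln 2\approx 1.44$) than the $\ln\frac{SAH}{\delta'}$ appearing in $F^N_k$, so the stated event is not literally covered when $\sum_{i<k}w_{ti}(s,a)$ is small. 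Choosing $\lambda=1$ instead, and using $1-e^{-1}\geq 1/2$, gives $\sum_{i<k}X_i\geq\frac12\sum_{i<k}w_{ti}(s,a)-\ln\frac{SAH}{\delta'}$, matching the definition exactly; this is in effect what the paper's Lemma~\ref{lem:uniformwnconc} does (same argument with $\lambda=1$, using $\Ex[e^{-\lambda(X_i-P_i)}\mid\mathcal F_{i-1}]\leq e^{\lambda^2P_i/2}$ in place of your sharper Bernoulli bound). With that one-line change your argument coincides with the paper's.
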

\begin{proof}
    Consider a fix $s \in \statespace, a \in \actionspace, t \in [H]$.
    We define $\mathcal F_k$ to be the sigma-field induced by the first $k-1$ episodes
    and $X_k$ as the indicator whether $s, a, t$ was observed in episode $k$. The probability $w_{tk}(s, a)$ pf whether $X_k = 1$  is $F_k$ measurable and hence we can apply Lemma~\ref{lem:uniformwnconc} with $W = \ln \frac{\numS \numA H}{\delta'}$ and obtain that 
    $\prob\left( \bigcup_k F^N_k \right) \leq \delta'$ after applying the union bound.

    For the second statement, consider again a fix $s,s' \in \statespace, a,a' \in \actionspace, u,t \in [H]$ with $u < t$ and
    denote by $\mathcal F_k$ the sigma-field induced by the first $k-1$
    episodes and the $k$-th episode up to $s_u$ and $a_u$ but not
    $s_{u+1}$. 
    Define $\tau_i$ to be the index of the episode where $(s',a')$
    was observed at time $u$ the $i$th time. Note that $\tau_i$ are
    stopping times with respect to $\mathcal F_i$.  Define now the
    filtration $\mathcal G_i = \mathcal F_{\tau_i} = \{
        A \in \mathcal F_\infty \, : \, A \cap \{\tau_i \leq k \} \in \mathcal F_k \,\,\forall\, k \geq 0\}$ and $X_i$ to be the indicator whether $s, a, t$ and $s', a', u$ was observed in episode $\tau_i$. If $\tau_i = \infty$, we set $X_i = 0$. Note that the probablity $w_{ui}^t(s, a | s', a') \indicator{\tau_i < \infty}$ of $X_i = 1$ is $\mathcal G_{i}$-measureable.

    By the Markov property of the MDP, we have that $X_i$ is a martingale
    difference sequence with respect to the filtration $\mathcal G_i$.
    We can therefore apply Lemma~\ref{lem:uniformwnconc} with $W = \ln \frac{\numS^2 \numA^2 H^2}{\delta'}$ and using the union bound over all $s, a, s', a', u, t$, we get  
    $
        \prob\left( \bigcup_k F^{CN}_k \right) \leq \delta'$.
\end{proof}

\begin{cor}
    The total failure probability of the algorithm is bounded by
        $\prob\left( F \right) \leq 9 \delta' = \delta$.
    \label{cor:failprob}
\end{cor}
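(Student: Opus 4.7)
The plan is essentially a bookkeeping argument: the corollary is the immediate consequence of a union bound over the six families of failure events, whose individual probabilities have already been controlled in Corollaries~\ref{cor:FVbound}, \ref{cor:FPbound}, \ref{cor:FL1bound}, and \ref{cor:FNbound}.

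First I would recall the definition $F = \bigcup_k [F^N_k \cup F^{CN}_k \cup F^P_k \cup F^V_k \cup F^{L1}_k \cup F^R_k]$, and rearrange the countable union so that the outer union is over the six event types and the inner union is over episodes $k$. By subadditivity of $\prob$, it suffices to bound the six quantities $\prob(\bigcup_k F^N_k)$, $\prob(\bigcup_k F^{CN}_k)$, $\prob(\bigcup_k F^P_k)$, $\prob(\bigcup_k F^V_k)$, $\prob(\bigcup_k F^{L1}_k)$, and $\prob(\bigcup_k F^R_k)$ and sum them.

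Next I would simply cite the prior corollaries: Corollary~\ref{cor:FVbound} gives $\prob(\bigcup_k F^V_k) \leq 2\delta'$ and $\prob(\bigcup_k F^R_k) \leq 2\delta'$; Corollary~\ref{cor:FPbound} gives $\prob(\bigcup_k F^P_k) \leq 2\delta'$; Corollary~\ref{cor:FL1bound} gives $\prob(\bigcup_k F^{L1}_k) \leq \delta'$; and Corollary~\ref{cor:FNbound} gives $\prob(\bigcup_k F^N_k) \leq \delta'$ and $\prob(\bigcup_k F^{CN}_k) \leq \delta'$. Adding these six numbers yields $2+2+2+1+1+1 = 9$ copies of $\delta'$, and substituting $\delta' = \delta/9$ from the definition at the start of Section~\ref{sec:failureevents} gives the claimed bound $\prob(F) \leq 9\delta' = \delta$.

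There is no real obstacle here; the entire content is the choice $\delta' = \delta/9$ made earlier precisely so that this union bound closes. The only thing worth being explicit about is that each of the six probability bounds is already a bound on a union over all episodes $k \in \mathbb{N}$ (not over a single episode), so no additional union bound over $k$ is needed at this step.
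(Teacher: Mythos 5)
Your proposal is correct and matches the paper's own argument, which likewise invokes Corollaries~\ref{cor:FVbound}, \ref{cor:FPbound}, \ref{cor:FL1bound}, and \ref{cor:FNbound} together with a union bound and the choice $\delta' = \delta/9$. Your accounting of the six contributions ($2+2+2+1+1+1$ copies of $\delta'$) and your remark that each cited bound already covers the union over all episodes $k$ are exactly the right bookkeeping.
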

\begin{proof}
    Statement follows directly from Corollary~\ref{cor:FVbound}, Corollary~\ref{cor:FPbound}, Corollary~\ref{cor:FL1bound}, Corollary~\ref{cor:FNbound} and the union bound.
\end{proof}

\subsection{Nice and Friendly Episodes}
\label{sec:nice_and_friendly}
We now define the notion of \emph{nice} and the stronger \emph{friendly}
episodes. In nice episodes, all states either have low probability of occuring
or the sum of probability of occuring in the previous episodes is large enough
so that outside the failure event we can guarantee that
\begin{align}
    n_{tk}(s,a) \geq \frac 1 4 \sum_{i < k} w_{ti}(s,a).
\end{align}
This allows us to then bound the number of nice episodes by the number of times terms of the form
\begin{align}
    \sum_{t=1}^H \sum_{s,a \in L_{tk}} w_{tk}(s,a) \sqrt{\frac{\llnp (n_{tk}(s,a)) + D}{n_{tk}(s,a)}} 
\end{align}
can exceed a chosen threshold (see Lemma~\ref{lem:mainratelemma} below).
In the next section, we will bound the optimality gap of an episode by terms
of such form and use the results derived here to bound the number of nice
episodes where the algorithm can follow a $\epsilon$-suboptimal policy.
Together with a bound on the number of non-nice episodes, we obtain the sample
complexity of \algname shown in Theorem~\ref{thm:unipacupper}.

Similarly, we use a more refined analysis of the optimality gap of friendly
episodes together with Lemma~\ref{lem:ratelemmacond} below to obtain the
tighter sample complexity linear-polylog in $\numS$.

\begin{defn}[Nice and Friendly Episodes]
An episode $k$ is \emph{nice} if and only if for all $s \in \statespace$, $a \in \actionspace$ and $t \in [H]$ the following two conditions hold:
    \begin{align}
        w_{tk}(s,a) \leq \wmin \quad \vee \quad
        \frac 1 4 \sum_{i < k} w_{ti}(s,a) \geq \ln \frac{\numS \numA H}{\delta'}
    \end{align}
An episode $k$ is \emph{friendly} if and only if it is nice and for all $s, s' \in \statespace$, $a, a' \in \actionspace$ and $u,t \in [H]$ with $u < t$ the following two conditions hold:
    \begin{align}
        w_{uk}^t(s,a|s', a') \leq \wminc \quad \vee \quad
        \frac 1 4 \sum_{i < k} w_{ui}^t(s,a|s', a') \geq \ln \frac{\numS^2 \numA^2 H^2}{\delta'}.
    \end{align}
    We denote the set of all nice episodes by $N \subseteq \mathbb N$ and the set of all friendly episodes by $K \subseteq N$.
\end{defn}

\begin{lem}[Properties of nice and friendly episodes]
    If an episode $k$ is nice, i.e., $k \in N$, then on $F^c$ (outside the failure event) for all $s \in \statespace$, $a \in \actionspace$ and $t \in [H]$ with $u < t$ the following statement holds:
    \begin{align}
        w_{tk}(s,a) \leq \wmin \quad \vee \quad
        n_{tk}(s,a) \geq \frac 1 4 \sum_{i < k} w_{ti}(s,a).
    \end{align}
    If an episode $k$ is friendly, i.e., $k \in K$, then on $F^c$ (outside the failure event) for all $s, s' \in \statespace$, $a, a' \in \actionspace$ and $u,t \in [H]$ with $u < t$ the above statement holds as well as
    \begin{align}
        w_{uk}^t(s,a|s', a') \leq \wminc \quad \vee \quad
        n_{tk}(s,a) \geq \frac 1 4 n_{uk}(s', a') \sum_{i < k} w_{ui}^t(s,a|s', a') .
    \end{align}
    \label{lem:nicenessep}
\end{lem}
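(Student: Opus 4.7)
The plan is to derive both parts directly by unpacking the definitions on the event $F^c$. The whole lemma is really about matching the $1/2$-vs-$1/4$ factor gap against the additive logarithmic slack that appears in the failure events $F^N_k$ and $F^{CN}_k$. No new probabilistic argument is needed; everything reduces to the observation that in the ``large sum'' branch of the nice/friendly definition the log-term is already majorized by a quarter of the cumulative visitation mass.

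First I would handle the nice part. Fix $s,a,t$ and an episode $k \in N$. If $w_{tk}(s,a) \le \wmin$ we are done. Otherwise, by definition of niceness, $\frac{1}{4}\sum_{i<k} w_{ti}(s,a) \ge \ln\frac{SAH}{\delta'}$. Since we are on $F^c \subseteq (F^N_k)^c$ we also have
\begin{align*}
n_{tk}(s,a) \;\ge\; \tfrac{1}{2}\sum_{i<k} w_{ti}(s,a) - \ln\tfrac{SAH}{\delta'}.
\end{align*}
Subtracting the niceness inequality from the bound on $n_{tk}(s,a)$ gives $n_{tk}(s,a) \ge \tfrac{1}{2}\sum - \tfrac{1}{4}\sum = \tfrac{1}{4}\sum_{i<k} w_{ti}(s,a)$, which is the desired disjunct.

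Next I would do the friendly part. The first statement transfers verbatim from the nice case, since $K \subseteq N$. For the second statement, fix $s,s',a,a'$ and $u<t$ with $k \in K$. If $w_{uk}^t(s,a\mid s',a') \le \wminc$ we are done. Otherwise, by friendliness, $\frac{1}{4}\sum_{i<k} w_{ui}^t(s,a\mid s',a') \ge \ln\frac{S^2A^2H^2}{\delta'}$. I split on whether $n_{uk}(s',a')$ vanishes. If $n_{uk}(s',a')=0$ the target reads $n_{tk}(s,a)\ge 0$ and holds trivially. If $n_{uk}(s',a')\ge 1$, then multiplying the friendliness bound by $n_{uk}(s',a')$ only strengthens it, yielding $\ln\frac{S^2A^2H^2}{\delta'} \le \tfrac{1}{4} n_{uk}(s',a')\sum_{i<k} w_{ui}^t(s,a\mid s',a')$. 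Combined with the complement of $F^{CN}_k$,
\begin{align*}
n_{tk}(s,a) \;\ge\; \tfrac{1}{2} n_{uk}(s',a')\sum_{i<k} w_{ui}^t(s,a\mid s',a') - \ln\tfrac{S^2A^2H^2}{\delta'},
\end{align*}
the same subtraction as before gives the claimed $\tfrac{1}{4}$ bound.

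The main obstacle, such as it is, is bookkeeping: one must check that the logarithmic constants in the failure events $F^N_k$ and $F^{CN}_k$ exactly match those appearing in the definitions of nice/friendly, and remember to cover the degenerate $n_{uk}(s',a')=0$ case in the friendly part, since the failure event $F^{CN}_k$ leaves slack $\ln\frac{S^2A^2H^2}{\delta'}$ rather than $n_{uk}(s',a')\ln\frac{S^2A^2H^2}{\delta'}$. Otherwise the proof is a one-line manipulation in each branch.
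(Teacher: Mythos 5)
Your proposal is correct and follows essentially the same route as the paper's proof: unpack the definitions of nice/friendly episodes, use the complements of $F^N_k$ and $F^{CN}_k$, and absorb the logarithmic slack into a quarter of the cumulative visitation mass, including the same trivial treatment of the $n_{uk}(s',a')=0$ case. The only cosmetic difference is that you bound $\ln\frac{S^2A^2H^2}{\delta'}$ directly by $\frac{1}{4}n_{uk}(s',a')\sum_{i<k}w^t_{ui}(s,a|s',a')$, while the paper first bounds it by $\frac{1}{4}\sum_{i<k}w^t_{ui}(s,a|s',a')$ and then uses $n_{uk}(s',a')\geq 1$, which is the same computation.
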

\begin{proof}
    Since we consider the event ${F_k^N}^c$, it holds for all $s, a, t$ triples with $w_{tk}(s,a) > \wmin$
    \begin{align}
        n_{tk}(s,a) \geq \frac 1 2 \sum_{i < k} w_{ti}(s,a) -  \ln \frac{\numS \numA H}{\delta'}
        \geq \frac 1 4 \sum_{i < k} w_{ti}(s,a)
    \end{align}
    for $k \in N$
    Further, since we only consider the event ${F_k^{CN}}^c$,we have for all $s,s' \in \statespace$, $a, a' \in \actionspace$, $u,t \in [H]$ with $u < t$ and $w_{uk}^t(s,a | s', a') > \wmin$
    \begin{align}
        n_{tk}(s,a) \geq \frac 1 2 n_{uk}(s', a') \sum_{i < k} w_{ui}^t(s,a|s', a') 
        - \ln \frac{\numS^2 \numA^2 H^2}{\delta'}
    \end{align}
for $k \in E$.
    If $n_{uk}(s',a') = 0$ then $n_{tk}(s,a) \geq 0 = \frac 1 4 n_{uk}(s', a') \sum_{i < k} w_{ui}^t(s,a|s', a')$ holds trivially. Otherwise $n_{uk}(s', a') \geq 1$ and therefore
    \begin{align}
        n_{tk}(s,a) \geq & \frac 1 2 n_{uk}(s', a') \sum_{i < k} w_{ui}^t(s,a|s', a') 
        - \ln \frac{\numS^2 \numA^2 H^2}{\delta'}\\
        \geq &
        \frac 1 2 n_{uk}(s', a') \sum_{i < k} w_{ui}^t(s,a|s', a') - \frac 1 4 \sum_{i < k} w_{ui}^t(s,a|s', a')\\
        \geq & \frac 1 4 n_{uk}(s', a') \sum_{i < k} w_{ui}^t(s,a|s', a') 
    \end{align}
\end{proof}

\begin{lem}[Number of non-nice and non-friendly episodes]
    On the good event $F^c$, the number of episodes that are not friendly is at most
    \begin{align}
        &48\frac{\numS^3 \numA^2 H^4}{\epsilon}\ln \frac{\numS^2 \numA^2 H^2}{\delta'}    
    \end{align}
    and the number episodes that are not nice is at most
    \begin{align}
\frac{6\numS^2 \numA H^3}{\epsilon}\ln \frac{\numS \numA H}{\delta'}.
    \end{align}
    \label{lem:num_nonnice}
\end{lem}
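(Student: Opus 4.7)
The plan is a pigeonhole-style counting argument that for each state-action-time triple (respectively each conditional sextuple) bounds how many episodes can violate the relevant niceness condition before the accumulated visitation probability exceeds the logarithmic threshold and further violations for that triple are impossible.

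For the number of non-nice episodes, I would fix an arbitrary triple $(s,a,t)$ and let $\mathcal{K}_{s,a,t} = \{ k : w_{tk}(s,a) > \wmin \text{ and } \tfrac{1}{4}\sum_{i<k} w_{ti}(s,a) < \ln(SAH/\delta')\}$; an episode is non-nice exactly when it lies in $\bigcup_{s,a,t} \mathcal{K}_{s,a,t}$, so it suffices to bound each $|\mathcal{K}_{s,a,t}|$ and union bound. Ordering the elements $k_1 < k_2 < \dots < k_m$ of $\mathcal{K}_{s,a,t}$, each contributes at least $\wmin$ to the running sum, so $\sum_{i<k_m} w_{ti}(s,a) \geq (m-1)\wmin$, and combining with the defining inequality $(m-1)\wmin < 4\ln(SAH/\delta')$ yields $m = O(\ln(SAH/\delta')/\wmin)$. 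Summing over the $SAH$ triples and substituting $\wmin = \epsilon/(3H^2 S)$ produces a bound of the form $O(S^2 A H^3 \ln(SAH/\delta')/\epsilon)$, matching the claim up to constants.

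For the non-friendly episodes, I would handle the two disjunctive requirements separately: every non-friendly episode is either non-nice (already counted above) or violates the conditional niceness requirement for some sextuple $(s,a,t,s',a',u)$ with $u<t$. For each such sextuple, the exact same pigeonhole argument, now applied to the conditional probabilities $w_{uk}^t(s,a\,|\,s',a')$ and the threshold $\ln(S^2 A^2 H^2/\delta')$ with step size $\wminc$, gives at most $O(\ln(S^2A^2H^2/\delta')/\wminc)$ violating episodes per sextuple. Since there are $O(S^2 A^2 H^2)$ sextuples (with $u<t$), a union bound plus the non-nice count yields the stated $O(S^3 A^2 H^4 \ln(S^2 A^2 H^2/\delta')/\epsilon)$ bound; the non-nice term is of strictly lower order and absorbs into the constant.

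This lemma is a routine counting argument, so no single step is genuinely hard. The only subtlety is ensuring the pigeonhole is applied correctly in the conditional case: one must verify that, even though $w_{uk}^t(s,a\,|\,s',a')$ is a conditional probability with respect to an earlier time $u$, the same additivity of the accumulated $\sum_{i<k}$ holds, and that the threshold $4\ln(S^2A^2 H^2/\delta')$ matches the union-bound tax paid when defining $F^{CN}$ in Section~\ref{sec:failureevents}. Once the bookkeeping of which triples and thresholds pair with which failure events is in place, the proof reduces to the elementary step-size argument sketched above.
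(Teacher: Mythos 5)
Your proposal is correct and follows essentially the same route as the paper: the paper also observes that each non-nice (respectively non-friendly) episode forces some triple $(s,a,t)$ (respectively sextuple with $u<t$) to have visitation weight above $\wmin$ (resp. $\wminc$) while its accumulated past weight is below $4\ln(\numS\numA H/\delta')$ (resp. $4\ln(\numS^2\numA^2H^2/\delta')$), and pigeonholes on the fact that each such occurrence increases the accumulated sum by at least the weight floor, then multiplies by the number of triples/sextuples. The only difference is cosmetic bookkeeping (per-triple count plus union bound versus a single ``in total'' count), so the argument matches the paper's.
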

\begin{proof}
    If an episode $k$ is not nice, then there is $s,a,t$ with $w_{tk}(s,a) > \wmin$ and $
\sum_{i < k} w_{ti}(s,a) <  4 \ln \frac{\numS \numA H}{\delta'}$.
            Since the sum on the left-hand side of this inequality increases by at least $\wmin$ when this happens and the right hand side stays constant, this situation can occur at most 
            \begin{align}
\frac{4\numS \numA H}{\wmin}\ln \frac{\numS \numA H}{\delta'}
=\frac{24\numS^2 \numA H^3}{\epsilon}\ln \frac{\numS \numA H}{\delta'}
\end{align} 
times in total.
If an episode $k$ is not friendly, it is either not nice or there is $s,a,t$ and $s', a', u$ with $u < t$ and $w^t_{uk}(s',a' | s, a) > \wminc$ and $
\sum_{i < k} w_{ui}^t(s,a| s', a') < 4\ln \frac{\numS^2 \numA^2 H^2}{\delta'}$.
            Since the sum on the left-hand side of this inequality increases by at least $\wminc$ each time this happens while the right hand side stays constant, this can happen at most 
            $\frac{4\numS^2 \numA^2 H^2}{\wminc}\ln \frac{\numS^2 \numA^2 H^2}{\delta'}$ times in total.
    Therefore, there can only be at most
    \begin{align}
        & \frac{4\numS \numA H}{\wmin}\ln \frac{\numS \numA H}{\delta'} + \frac{4\numS^2 \numA^2 H^2}{\wminc}\ln \frac{\numS^2 \numA^2 H^2}{\delta'}\\
        = &
        \frac{4\numS^2 \numA H^3}{c_\epsilon \epsilon}\ln \frac{\numS \numA H}{\delta'} + \frac{4\numS^3 \numA^2 H^4}{c_\epsilon \epsilon}\ln \frac{\numS^2 \numA^2 H^2}{\delta'}
         \leq 
        \frac{48 \numS^3 \numA^2 H^4}{\epsilon^2}\ln \frac{\numS^2 \numA^2 H^2}{\delta'} 
    \end{align}
    non-friendly episodes.
\end{proof}

\begin{lem}[Main Rate Lemma]
Let $r \geq 1$ fix and $C > 0$ which can depend polynomially on the relevant quantities and $\epsilon' > 0$ and let $D \geq 1$ which can depend poly-logarithmically on the relevant quantities. Then
\begin{align}
    \sum_t \sum_{s,a \in L_{tk}}
    w_{tk}(s, a) \left(\frac{C (\llnp(n_{tk}(s,a)) + D)}{n_{tk}(s,a)} \right)^{1/r} \leq \epsilon'
\end{align}
on all but at most 
    \begin{align}
        \frac{8C \numA \numS H^r}{\epsilon'^r} \polylog(\numS, \numA, H, \delta^{-1}, \epsilon'^{-1}).
    \end{align}
    nice episodes.
    \label{lem:mainratelemma}
\end{lem}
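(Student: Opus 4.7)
The plan is to reduce the sum with $(\cdot)^{1/r}$ terms to a linear-denominator version via Jensen's inequality and then apply a pigeonhole/harmonic-sum argument based on the nice-episode property from Lemma~\ref{lem:nicenessep}.

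First, since $x \mapsto x^{1/r}$ is concave for $r \geq 1$, Jensen (equivalently H\"older) gives
\begin{equation*}
\sum_{t, (s,a)\in L_{tk}} w_{tk}(s,a)\left(\frac{C(\llnp(n_{tk}(s,a))+D)}{n_{tk}(s,a)}\right)^{1/r} \leq W_k^{1-1/r}\left(\sum_{t, (s,a)\in L_{tk}} \frac{w_{tk}(s,a)\, C(\llnp(n_{tk})+D)}{n_{tk}(s,a)}\right)^{1/r},
\end{equation*}
where $W_k = \sum_{t, (s,a)\in L_{tk}} w_{tk}(s,a) \leq H$. Hence, if the left-hand side exceeds $\epsilon'$, then
\begin{equation*}
\bar Z_k \;:=\; \sum_{t,(s,a)\in L_{tk}} \frac{w_{tk}(s,a)\, C(\llnp(n_{tk})+D)}{n_{tk}(s,a)} \;>\; \frac{\epsilon'^r}{H^{r-1}}.
\end{equation*}
On a nice episode with $(s,a)\in L_{tk}$, Lemma~\ref{lem:nicenessep} yields $n_{tk}(s,a) \geq \tfrac{1}{4}\sum_{i<k}w_{ti}(s,a)$, and by definition of nice also $\sum_{i<k}w_{ti}(s,a) \geq 4\ln(\numS\numA H/\delta')$. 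Using additionally $\llnp(n_{tk}) \leq \llnp(k)$, we obtain
\begin{equation*}
\bar Z_k \;\leq\; 4C(\llnp(k) + D)\sum_{t,(s,a)\in L_{tk}} \frac{w_{tk}(s,a)}{\sum_{i<k}w_{ti}(s,a)}.
\end{equation*}

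Second, let $K_0$ be the set of nice episodes violating the claim and $N = |K_0|$. Summing over $k \in K_0$ and swapping the order of summation,
\begin{equation*}
\frac{N \epsilon'^r}{H^{r-1}} \;<\; 4C(\llnp(N)+D)\sum_{t,s,a} \,\sum_{\substack{k \in K_0 \\ (s,a)\in L_{tk}}} \frac{w_{tk}(s,a)}{\sum_{i<k}w_{ti}(s,a)}.
\end{equation*}
For each fixed triple $(t,s,a)$, the inner sum is a telescoping-harmonic sum: by comparing the increments $w_{tk_j}(s,a)$ to the integral of $1/x$, and using the nice-episode lower bound on the denominator to tame the initial terms, it is $O(\ln N)$. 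The outer sum adds a factor of $\numS\numA H$, giving a self-referential inequality $N \leq \alpha\polylog(N)$ with $\alpha = O(C\numS\numA H^r/\epsilon'^r)$. Resolving this via the standard fact that $N \leq \alpha \polylog(N)$ implies $N \leq 2\alpha\polylog(2\alpha)$ yields $N \leq \frac{8C\numA\numS H^r}{\epsilon'^r}\polylog(\numS,\numA,H,\delta^{-1},\epsilon'^{-1})$, as claimed.

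The main obstacle is the harmonic-sum bound in the second step: the naive integral comparison $\sum_j a_j / (\sum_{i<j}a_i) \leq O(\ln)$ breaks at the initial episodes when the cumulative mass is tiny, and the niceness lower bound $\sum_{i<k}w_{ti}(s,a) \geq 4\ln(\numS\numA H/\delta')$ is precisely what makes the boundary contribution controllable. The Jensen step is the conceptual trick that converts the general $r$ case into the linear $r=1$ setting at the cost of an $H^{r-1}$ factor, bringing everything within reach of the standard harmonic-sum pigeonhole.
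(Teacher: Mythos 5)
Your overall skeleton (H\"older to reduce to the linear-denominator case at the cost of $H^{r-1}$, the niceness bound $n_{tk}(s,a)\geq\tfrac14\sum_{i<k}w_{ti}(s,a)$, and a harmonic-sum/pigeonhole count over the violating episodes) matches the paper's proof, but there is a genuine gap in how you control the $\llnp(n_{tk}(s,a))$ factor. You bound $\llnp(n_{tk}(s,a))\leq\llnp(k)$ and then, after summing over the violating set $K_0$ with $N=|K_0|$, pull this factor out as $\llnp(N)$. That step is invalid: $k$ is the global episode index, and the episodes in $K_0$ need not have indices bounded by $N$ --- they can occur arbitrarily late, so $\llnp(k)$ is unbounded over $K_0$ and cannot be replaced by $\llnp(N)$. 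Consequently the ``self-referential inequality $N\leq\alpha\polylog(N)$'' is never actually established; what you get is $N\lesssim\alpha\,\polylog\bigl(\max_{k\in K_0}k\bigr)$, which is useless. This is not a cosmetic issue: the whole point of the lemma is a bound that holds uniformly over infinitely many episodes, and the unbounded growth of $\llnp$ with the episode index is exactly the difficulty that a naive argument fails on.

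The paper avoids this by never decoupling $\llnp(n_{tk})$ from its denominator. Using that $x\mapsto(\llnp(x)+D)/x$ is monotonically decreasing for $D\geq1$ (Lemma~\ref{lem:llnpprop}), the lower bound $n_{tk}(s,a)\geq\tfrac18\sum_{i\leq k}w_{ti}(s,a)$ lets one replace the entire ratio by (eight times) its value at the cumulative weight $\sum_{i\leq k}w_{ti}(s,a)$. Then the pigeonhole works on the cumulative weight itself: whenever the per-episode sum exceeds $\epsilon'^r/H^{r-1}$, some triple with $w_{tk}(s,a)\geq\wmin$ must satisfy $\bigl(\llnp(\sum_{i\leq k}w_{ti})+D\bigr)/\sum_{i\leq k}w_{ti}>1/C'$ with $C'=8C\numS\numA H^r/\epsilon'^r$, which forces $\sum_{i\leq k}w_{ti}(s,a)\leq C'^2+3C'D$. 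This simultaneously (i) bounds the number of violating episodes by a polynomial $m\leq\numS\numA H(C'^2+3C'D)/\wmin$ via the $\wmin$-increment argument, and (ii) caps the $\llnp$ argument by $\llnp(C'^2+3C'D)$, a quantity depending only on $C$, $\numS$, $\numA$, $H$, $D$, $\epsilon'$ and not on the episode index. Only then does the harmonic-sum bound (Lemma~\ref{lem:logseq}, applied to the subsequence with $w_{ti}\geq\wmin$, which also handles your ``boundary terms'' concern) give $\sum_{k\in K}\Delta_k^r\leq 8C\numS\numA H^r\ln(me/\wmin)\bigl(\llnp(C'^2+3C'D)+D\bigr)$, and dividing by $\epsilon'^r$ yields the stated count. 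To repair your proof, replace the step $\llnp(n_{tk})\leq\llnp(k)$ by this monotonicity argument and run the pigeonhole on cumulative weights rather than on episode indices.
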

\begin{proof}
    Define 
\begin{align}
    \Delta_k =&
    \sum_t \sum_{s,a \in L_{tk}}
    w_{tk}(s, a) \left(\frac{C(\llnp(n_{tk}(s,a))+D)}{n_{tk}(s,a)}\right)^{1/r}\\
    =&
    \sum_t \sum_{s,a \in L_{tk}}
    w_{tk}(s, a)^{1 - \frac 1 r} \left(w_{tk}(s, a)\frac{C(\llnp(n_{tk}(s,a))+D)}{n_{tk}(s,a)}\right)^{1/r}.
\end{align}
    We first bound using H\"older's inequality
    \begin{align}
        \Delta_k \leq 
        \left(\sum_t \sum_{s, a \in L_{tk}}
        \frac{C H^{r-1} w_{tk}(s, a)(\llnp(n_{tk}(s,a))+D)}{n_{tk}(s,a)}\right)^{\frac 1 r}.
    \end{align}
    Using the property in Lemma~\ref{lem:nicenessep} of nice episodes as well as the fact that $w_{tk}(s,a) \leq 1$ and $\sum_{i < k} w_{ti}(s,a) \geq 4 \ln \frac {\numS \numA H}{\delta'} \geq 4 \ln(2) \geq 2$, we bound
    \begin{align}
        n_{tk}(s,a) \geq \frac 1 4 \sum_{i < k} w_{ti}(s,a) \geq \frac 1 8 \sum_{i \leq k} w_{ti}(s,a).
    \end{align}
    The function $\frac{\llnp(x) + D}{x}$ is monotonically decreasing in $x \geq 0$ since $D \geq 1$ (see Lemma~\ref{lem:llnpprop}).
    This allows us to bound
    \begin{align}
        \Delta^r_k \leq &
        \sum_t \sum_{s, a \in L_{tk}}
        \frac{CH^{r-1} w_{tk}(s, a)(\llnp(n_{tk}(s,a))+D)}{n_{tk}(s,a)}\\
        \leq & 8CH^{r-1}
        \sum_t \sum_{s, a \in L_{tk}}
    \frac{w_{tk}(s, a)\left(\llnp\left( \frac 1 8 \sum_{i \leq k} w_{ti}(s,a)\right)+D\right)}{\sum_{i \leq k} w_{ti}(s,a)}\\
    \leq & 8CH^{r-1}
        \sum_t \sum_{s, a \in L_{tk}}
    \frac{w_{tk}(s, a)\left(\llnp\left(\sum_{i \leq k} w_{ti}(s,a)\right)+D\right)}{\sum_{i \leq k} w_{ti}(s,a)}.
    \end{align}
    
    Assume now $\Delta_k > \epsilon'$. In this case the right-hand side of the inequality above is also larger than $\epsilon'^r$ and there is at least one $(s, a, t)$ with $w_{tk}(s,a) > w_{\min}$
    and
    \begin{align}
        \frac{ 8 C \numS \numA H^r\left(\llnp\left(\sum_{i \leq k} w_{ti}(s,a) \right)+D \right)}{\sum_{i \leq k} w_{ti}(s, a)} >& \epsilon'^r \\
        \Leftrightarrow
        \frac{\llnp\left(\sum_{i \leq k} w_{ti}(s,a) \right)+D}{\sum_{i \leq k} w_{ti}(s, a)} >& \frac{\epsilon'^r}{8 C \numS \numA H^r}.
    \end{align}
    Let us denote $C' = \frac{8 C \numA \numS H^r}{\epsilon'^r}$. Since
    $\frac{\llnp(x) + D}{x}$ is monotonically decreasing and $x = C'^2 + 3C'D$
    satisfies $\frac{\llnp(x) + D}{x} \leq \frac{\sqrt{x} + D}{x} \leq \frac 1
    {C'}$, we know that if $\sum_{i \leq k} w_{ti}(s,a) \geq C'^2 + 3C' D$ then
    the above condition cannot be satisfied for $s,a,t$. Since each time the
    condition is satisfied, it holds that $w_{tk}(s,a) > w_{\min}$ and so
    $\sum_{i \leq k} w_{ti}(s,a)$ increases by at least $w_{\min}$, it can happen at most  
    \begin{align}
        m \leq \frac{\numA \numS H (C'^2 + 3C' D)}{w_{\min}} 
    \end{align}
    times that $\Delta_k > \epsilon'$. Define $K = \{ k : \Delta_k > \epsilon' \} \cap N$ and we know that $|K| \leq m$.
        Now we consider the sum
    \begin{align}
        \sum_{k \in K} \Delta_k^r 
        \leq &
        \sum_{k \in K} 
        8CH^{r-1}
        \sum_t \sum_{s, a \in L_{tk}}
        \frac{w_{tk}(s, a) \left(\llnp\left(\sum_{i \leq k} w_{ti}(s,a)\right)+D\right)}{\sum_{i \leq k} w_{ti}(s,a)}\\
        \leq &
        8CH^{r-1}\left(\llnp\left(C'^2 + 3C'D \right)+D\right)
        \sum_t \sum_{s, a \in L_{tk}} 
            \sum_{k \in K}
        \frac{w_{tk}(s, a) }{\sum_{i \leq k} w_{ti}(s,a) \indicator{w_{ti}(s, a) \geq w_{\min}}}
    \end{align}
    For every $(s, a, t)$, we consider the sequence of $w_{ti}(s, a) \in [w_{\min}, 1]$ with $i \in I = \{ i \in \mathbb N \, : \, w_{ti}(s, a) \geq w_{\min}\}$ and apply Lemma~\ref{lem:logseq}.
    This yields that
    \begin{align}
            \sum_{k \in K}
        \frac{w_{tk}(s, a)}{\sum_{i \leq k} w_{ti}(s,a) \indicator{w_{ti}(s, a) \geq w_{\min}}}
        \leq 
        1 + \ln(m / w_{\min}) = \ln\left(\frac{me}{w_{\min}}\right)
    \end{align}
    and hence
    \begin{align}
        \sum_{k \in K} \Delta_k^r 
        \leq &
        8C\numA \numS H^r \ln\left(\frac{me}{w_{\min}}\right)\left(\llnp\left(C'^2 + 3C'D \right)+D\right)
    \end{align}
    Since each element in $K$ has to contribute at least $\epsilon'^r$ to this bound, we can conclude that
    \begin{align}
        \sum_{k \in N} \indicator{\Delta_k \geq \epsilon'} \leq \sum_{k \in K} \indicator{\Delta_k \geq \epsilon'} \leq |K| \leq  
        \frac{8C\numA \numS H^r}{\epsilon'^r} \ln\left(\frac{me}{w_{\min}}\right)\left(\llnp\left(C'^2 + 3C'D \right)+D\right).
    \end{align}
    Since $\ln\left(\frac{me}{w_{\min}}\right)\left(\llnp\left(C'^2 + 3C'D \right)+D\right)$ is $\polylog(\numS, \numA, H, \delta^{-1}, \epsilon'^{-1})$, the proof is complete.
\end{proof}

\begin{lem}[Conditional Rate Lemma]
    Let $r \geq 1$ fix and $C > 0$ which can depend polynomially on the
    relevant quantities and $\epsilon' > 0$ and let $D \geq 1$ which can depend
    poly-logarithmically on the relevant quantities. Further $T \subset [H]$ is
    a subset of time-indices with $u < t$ for all $t \in T$. Then
\begin{align}
    \sum_{t \in T} \sum_{s,a \in L_{k}^{ut}}
    w_{uk}^t(s, a | s', a') \left(\frac{C (\llnp(n_{tk}(s,a)) + D)}{n_{tk}(s,a)} \right)^{1/r} 
    \leq \epsilon'\left( \frac{\llnp(n_{uk}(s',a') + D + 1}{n_{uk}(s',a')} \right)^{1/r} 
\end{align}
on all but at most 
    \begin{align}
        \frac{8C \numA \numS |T|^r}{\epsilon'^r} \polylog(\numS, \numA, H, \delta^{-1}, \epsilon'^{-1}).
    \end{align}
    friendly episodes $E$.
    \label{lem:ratelemmacond}
\end{lem}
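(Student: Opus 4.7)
The plan is to closely mirror the proof of Lemma~\ref{lem:mainratelemma}, replacing the use of the nice-episode property by the stronger friendly-episode property from Lemma~\ref{lem:nicenessep} and being careful to track how the $n_{uk}(s',a')$-dependent normalization on the right-hand side arises. Let $\Delta_k$ denote the left-hand side of the claimed inequality.

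First, I would apply H\"older's inequality exactly as in Lemma~\ref{lem:mainratelemma}: writing each summand as $w_{uk}^t(s,a|s',a')^{1-1/r}\bigl(w_{uk}^t(s,a|s',a')\cdot C(\llnp(n_{tk}(s,a))+D)/n_{tk}(s,a)\bigr)^{1/r}$ and using that $\sum_{s,a} w_{uk}^t(s,a|s',a') \le 1$ for each $t$, so $\sum_{t\in T, s, a} w_{uk}^t(s,a|s',a') \le |T|$. This gives
\begin{align*}
\Delta_k^r \;\le\; C\,|T|^{r-1}\!\!\sum_{\substack{t\in T\\(s,a)\in L_{k}^{ut}}}\!\! w_{uk}^t(s,a|s',a')\,\frac{\llnp(n_{tk}(s,a))+D}{n_{tk}(s,a)}\,.
\end{align*}

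Second, on a friendly episode $k$, Lemma~\ref{lem:nicenessep} gives for each $(s,a,t)$ with $w_{uk}^t(s,a|s',a') > \wminc$ the lower bound $n_{tk}(s,a)\ge \tfrac14 n_{uk}(s',a')\sum_{i<k} w_{ui}^t(s,a|s',a')$, and by the usual trick (since $w_{uk}^t \le 1$ and the preceding partial sum is bounded below using friendliness) we can absorb the current episode into the sum to obtain $n_{tk}(s,a)\ge \tfrac{1}{8} n_{uk}(s',a')\sum_{i\le k} w_{ui}^t(s,a|s',a')$. Since $D\ge 1$, the map $x\mapsto(\llnp(x)+D)/x$ is monotonically decreasing (Lemma~\ref{lem:llnpprop}), which substituted above yields
\begin{align*}
\Delta_k^r \;\le\; 8C\,|T|^{r-1}\!\!\sum_{\substack{t\in T\\(s,a)\in L_{k}^{ut}}}\!\!\frac{w_{uk}^t(s,a|s',a')\bigl(\llnp\!\bigl(n_{uk}(s',a')\sum_{i\le k}w_{ui}^t\bigr)+D\bigr)}{n_{uk}(s',a')\sum_{i\le k}w_{ui}^t(s,a|s',a')}\,.
\end{align*}
Using $\llnp(ab)\le \llnp(a)+\llnp(b)+\const$ to split the numerator, the $n_{uk}(s',a')$ factor can then be pulled out of the sum, producing the prefactor $(\llnp(n_{uk}(s',a'))+D+1)/n_{uk}(s',a')$ that matches the right-hand side of the lemma.

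Third, I would identify the condition under which $\Delta_k$ exceeds the target bound. Dividing both sides by that prefactor, the violating event implies that at least one triple $(s,a,t)$ with $w_{uk}^t(s,a|s',a') > \wminc$ satisfies
\begin{align*}
\frac{\llnp\!\bigl(\sum_{i\le k}w_{ui}^t(s,a|s',a')\bigr)+D+\const}{\sum_{i\le k} w_{ui}^t(s,a|s',a')} \;>\; \frac{\epsilon'^{\,r}}{8C\numA\numS|T|^r}\,,
\end{align*}
which (crucially) no longer depends on $n_{uk}(s',a')$. Setting $C'=8C\numA\numS|T|^r/\epsilon'^{\,r}$ and using monotonicity of $(\llnp(x)+D)/x$, the condition forces $\sum_{i\le k}w_{ui}^t \lesssim C'^2+C'D$. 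Finally, I would apply the same pigeonhole / logarithm-sequence argument (Lemma~\ref{lem:logseq}) as in Lemma~\ref{lem:mainratelemma}: each violation increases some $\sum_{i\le k}w_{ui}^t(s,a|s',a')$ by at least $\wminc$, which bounds the total number of violating friendly episodes by $\frac{8C\numA\numS|T|^r}{\epsilon'^{\,r}}\polylog(\numS,\numA,H,1/\delta,1/\epsilon')$, as claimed.

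The main obstacle will be Step~3: verifying that the $n_{uk}(s',a')$ factor appearing inside $\llnp(\cdot)$ on the left and as the denominator prefactor on the right cancel cleanly so the remaining counting problem reduces to a visitation-sum condition on the $w_{ui}^t(s,a|s',a')$ alone. This is where the extra $+1$ in $\llnp(n_{uk}(s',a'))+D+1$ in the statement comes from, absorbing the $\llnp$-splitting constant; I would have to check carefully that the monotonicity-based upper bound combined with $\llnp(ab)\le \llnp(a)+\llnp(b)+\const$ indeed yields a clean factorization rather than a mixed term that would ruin the pigeonhole step.
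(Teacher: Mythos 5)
Your proposal is correct and follows essentially the same route as the paper's proof: H\"older's inequality, the friendly-episode bound $n_{tk}(s,a)\geq \tfrac18 n_{uk}(s',a')\sum_{i\leq k}w_{ui}^t(s,a|s',a')$, monotonicity of $x\mapsto(\llnp(x)+D)/x$ together with $\llnp(xy)\leq\llnp(x)+\llnp(y)+1$ to pull out the $\bigl(\llnp(n_{uk}(s',a'))+D+1\bigr)/n_{uk}(s',a')$ prefactor, and then the same two-stage pigeonhole plus Lemma~\ref{lem:logseq} counting as in Lemma~\ref{lem:mainratelemma}. The cancellation you flag as the main obstacle works exactly as in the paper: writing $D'=D+1+\llnp(n_{uk}(s',a'))$, the violation condition becomes $\bigl(\llnp(\sum_{i\leq k}w_{ui}^t)+D'\bigr)/\sum_{i\leq k}w_{ui}^t > D'\epsilon'^r/(8C\numS\numA|T|^r)$ with $D'$ on both sides, so since $D'\geq 1$ the resulting visitation-sum threshold is independent of $n_{uk}(s',a')$, and in the final step the per-episode minimum contribution $\epsilon'^r D'/n_{uk}(s',a')$ cancels the same factor in the bound on $\sum_{k}\Delta_k^r$.
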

\begin{proof}
    The proof follows mainly the structure of Lemma~\ref{lem:mainratelemma}. For the sake of completeness, we still present all steps here.
    Define 
\begin{align}
    \Delta_k =&
    \sum_{t \in T} \sum_{s,a \in L_{k}^{ut}}
    w_{uk}^t(s, a | s', a') \left(\frac{C (\llnp(n_{tk}(s,a)) + D)}{n_{tk}(s,a)} \right)^{1/r} \\
    =&
    \sum_{t \in T} \sum_{s,a \in L_{k}^{ut}}
    w_{uk}^t(s, a | s', a')^{1 - 1 / r} \left(w_{uk}^t(s, a | s', a')\frac{C (\llnp(n_{tk}(s,a)) + D)}{n_{tk}(s,a)} \right)^{1/r}.
\end{align}
    We first bound using H\"older's inequality
    \begin{align}
        \Delta_k \leq \left(
    \sum_{t \geq u} \sum_{s,a \in L_{k}^{ut}}
w_{uk}^t(s, a | s', a') \frac{C |T|^{r-1} (\llnp(n_{tk}(s,a)) + D)}{n_{tk}(s,a)} \right)^{\frac 1 r}
    \end{align}
    Using the property in Lemma~\ref{lem:nicenessep} of friendly episodes as well as the fact that $w_{uk}^t(s,a|s', a') \leq 1$ and $\sum_{i < k} w_{ui}^t(s,a|s', a') \geq 4 \ln \frac {\numS^2 \numA^2 H^2}{\delta'} \geq 4 \ln(2) \geq 2$, we bound
    \begin{align}
        n_{tk}(s,a) 
        \geq \frac 1 4 n_{uk}(s',a')\sum_{i < k} w_{ui}^t(s,a|s',a') 
        \geq \frac 1 8 n_{uk}(s',a') \sum_{i \leq k} w_{ui}^t(s,a|s', a').
    \end{align}
    The function $\frac{\llnp(x) + D}{x}$ is monotonically decreasing in $x \geq 0$ since $D \geq 1$ (see Lemma~\ref{lem:llnpprop}).
    This allows us to bound
    \begin{align}
        \Delta^r_k 
        \leq &
        \sum_{t \in T} \sum_{s,a \in L_{k}^{ut}}
        w_{uk}^t(s, a | s', a') \frac{C|T|^{r-1} (\llnp(n_{tk}(s,a)) + D)}{n_{tk}(s,a)} \\
        \leq & 
        8 C|T|^{r-1}\sum_{t \in T} \sum_{s,a \in L_{k}^{ut}}
         \frac{w_{uk}^t(s, a | s', a')(\llnp\left(\frac 1 8 n_{uk}(s',a') \sum_{i \leq k} w_{ui}^t(s,a|s', a')\right) + D)}
         {n_{uk}(s',a') \sum_{i \leq k} w_{ui}^t(s,a|s', a')} \\
        \leq & 
        8 C|T|^{r-1}\sum_{t \in T} \sum_{s,a \in L_{k}^{ut}}
        \frac{w_{uk}^t(s, a | s', a')(\llnp\left(\sum_{i \leq k} w_{ui}^t(s,a|s', a')\right) + \llnp(n_{uk}(s',a')) + D + 1)}
         {n_{uk}(s',a') \sum_{i \leq k} w_{ui}^t(s,a|s', a')},
    \end{align}
    where for the last line we used the first and last property in Lemma~\ref{lem:llnpprop}. For notational convenience, we will use $D' = D + 1 + \llnp(n_{uk}(s', a'))$.
    Assume now $\Delta_k > \epsilon' \left( \frac{D'}{n_{uk}(s',a')} \right)^{1/r}$. In this case the right-hand side of the inequality above is also larger than $\epsilon'^r  \left( \frac{D'}{n_{uk}(s',a')} \right)$ and there is at least one $(s, a, t)$ with $w_{uk}^t(s,a|s', a') > w_{\min}$
    and
    \begin{align}
        \frac{ 8 C \numS \numA |T|^r
        \left(\llnp\left(\sum_{i \leq k} w_{ui}^t(s,a|s',a') \right)+D' \right)}{\sum_{i \leq k} w_{ui}^t(s, a|s', a')} >& D' \epsilon'^r \\
        \Leftrightarrow
        \frac{ 
        \left(\llnp\left(\sum_{i \leq k} w_{ui}^t(s,a|s',a') \right)+D' \right)}{\sum_{i \leq k} w_{ui}^t(s, a|s', a')} >& \frac{D' \epsilon'^r}{8 C \numS \numA |T|^r}.
    \end{align}
    Let us denote $C' = \frac{8 C \numA \numS |T|^r}{\epsilon'^r}$. Since
    $\frac{\llnp(x) + D'}{x}$ is monotonically decreasing and $x = C'^2 + 3 C'$
    satisfies $\frac{\llnp(x) + D'}{x} \leq \frac{\sqrt{x} + D'}{x} \leq D'\frac{\sqrt{x} + 1}{x} \leq \frac {D'}
    {C'}$, we know that if $\sum_{i \leq k} w_{ui}^t(s,a|s', a') \geq C'^2 + 3C'$ then
    the above condition cannot be satisfied for $s,a,t$. Since each time the
    condition is satisfied, it holds that $w_{uk}^t(s,a|s', a') > w_{\min}$ and so
    $\sum_{i \leq k} w_{ui}^t(s,a|s', a')$ increases by at least $w_{\min}$, it can happen at most  
    \begin{align}
        m \leq \frac{\numA \numS |T| (C'^2 + 3C')}{w_{\min}} 
    \end{align}
    times that $\Delta_k > \epsilon'\left( \frac{D'}{n_{uk}(s',a')} \right)^{1/r}$. Define $K = \left\{ k : \Delta_k > \epsilon'\left( \frac{D'}{n_{uk}(s',a')} \right)^{1/r} \right\} \cap E$ and we know that $|K| \leq m$.
        Now we consider the sum
    \begin{align}
        \sum_{k \in K} \Delta_k^r 
        \leq &
        \sum_{k \in K}
        8 C|T|^{r-1}\sum_{t \in T} \sum_{s,a \in L_{k}^{ut}}
        \frac{w_{uk}^t(s, a | s', a')(\llnp\left(\sum_{i \leq k} w_{ui}^t(s,a|s', a')\right) + D')}
         {n_{uk}(s',a') \sum_{i \leq k} w_{ui}^t(s,a|s', a')}\\
        \leq &
         \frac{8 C|T|^{r-1}(\llnp\left(C'^2 + 3C'\right) + D')}{n_{uk}(s',a')}
        \sum_{t \in T} \sum_{s,a \in L_{k}^{ut}}\sum_{k \in K}
        \frac{w_{uk}^t(s, a | s', a')}
         { \sum_{i \leq k} w_{ui}^t(s,a|s', a')}\\
        \leq &
         \frac{8 C|T|^{r-1}D'(\llnp\left(C'^2 + 3C'\right) + 1)}{n_{uk}(s',a')}
        \sum_{t \in T} \sum_{s,a \in L_{k}^{ut}}\sum_{k \in K}
        \frac{w_{uk}^t(s, a | s', a')}
        { \sum_{i \leq k} w_{ui}^t(s,a|s', a') \indicator{w_{ui}^t(s,a|s',a') \geq w_{\min}}}
    \end{align}
    For every $(s, a, t)$, we consider the sequence of $w_{ui}^t(s, a|s',a')
    \in [w_{\min}, 1]$ with $i \in I = \{ i \in \mathbb N \, : \, w_{ui}^t(s, a|s',a')
    \geq w_{\min}\}$ and apply Lemma~\ref{lem:logseq}.
    This yields that
    \begin{align}
        \sum_{k \in K}
        \frac{w_{uk}^t(s, a | s', a')}
        { \sum_{i \leq k} w_{ui}^t(s,a|s', a') \indicator{w_{ui}^t(s,a|s',a') \geq w_{\min}}}
        \leq 
        \ln\left(\frac{me}{w_{\min}}\right)
    \end{align}
    and hence
    \begin{align}
        \sum_{k \in K} \Delta_k^r 
        \leq &
        \frac{8 C\numA \numS |T|^{r}D'(\llnp\left(C'^2 + 3C'\right) + 1)}{n_{uk}(s',a')}  \ln\left(\frac{me}{w_{\min}}\right)
    \end{align}
    Since each element in $K$ has to contribute at least $\frac{D'\epsilon'^r}{n_{uk}(s',a')}$ to this bound, we can conclude that
    \begin{align}
        \sum_{k \in E} \indicator{\Delta_k \geq \epsilon'} =& \sum_{k \in K} \indicator{\Delta_k \geq \epsilon'}\\ 
        \leq&  |K| \leq  
        \frac{8C\numA \numS |T|^r}{\epsilon'^r} \ln\left(\frac{me}{w_{\min}}\right)\left(\llnp\left(C'^2 + 3C' \right)+1\right).
    \end{align}
    Since $ \ln\left(\frac{me}{w_{\min}}\right)\left(\llnp\left(C'^2 + 3C' \right)+1\right) $ is $\polylog(\numS, \numA, H, \delta^{-1}, \epsilon'^{-1})$, the proof is complete.
\end{proof}

\begin{lem}
    Let $a_i$ be a sequence taking values in $[a_{\min}, 1]$ with $a_{\min} > 0$ and $m > 0$, then
    \begin{align}
        \sum_{k=1}^m \frac{a_k}{\sum_{i=1}^k a_i} \leq \ln\left( \frac{me}{a_{\min}}\right).
    \end{align}
    \label{lem:logseq}
\end{lem}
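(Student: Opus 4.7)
Let $S_k = \sum_{i=1}^k a_i$ denote the partial sums, with the convention $S_0 = 0$. The plan is to isolate the first term and bound the remaining ones by a telescoping log, using the standard integral comparison against $1/x$.

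First I would observe that $\frac{a_1}{S_1} = 1$ exactly, so the $k=1$ summand contributes $1$ and I only need to bound $\sum_{k=2}^m \frac{a_k}{S_k}$ by $\ln(m/a_{\min})$ to conclude $\ln(em/a_{\min})$ overall. For each $k \geq 2$, the key estimate is
\begin{align}
\frac{a_k}{S_k} \;=\; \frac{S_k - S_{k-1}}{S_k} \;\leq\; \int_{S_{k-1}}^{S_k} \frac{dx}{x} \;=\; \ln\frac{S_k}{S_{k-1}},
\end{align}
which holds because $1/x$ is decreasing on $[S_{k-1}, S_k]$, so its minimum value $1/S_k$ times the length $S_k - S_{k-1}$ is at most the integral. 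Note this step requires $S_{k-1} > 0$, which is why the $k=1$ term is split off (there $S_0 = 0$ would make the integral diverge).

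Next I would telescope the sum:
\begin{align}
\sum_{k=2}^m \frac{a_k}{S_k} \;\leq\; \sum_{k=2}^m \ln \frac{S_k}{S_{k-1}} \;=\; \ln \frac{S_m}{S_1}.
\end{align}
Finally I would use the hypotheses on the sequence: since each $a_i \leq 1$ we have $S_m \leq m$, and since $a_1 \geq a_{\min}$ we have $S_1 \geq a_{\min}$. Therefore $\ln(S_m/S_1) \leq \ln(m/a_{\min})$, and adding back the first term gives $1 + \ln(m/a_{\min}) = \ln(em/a_{\min})$, as required.

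There is really no main obstacle here — the only subtlety is remembering to separate the $k=1$ term so the integral comparison is well-defined, and making sure the bounds $S_m \leq m$ and $S_1 \geq a_{\min}$ line up with the factor $e$ coming from the extra $+1$. The argument is essentially a one-line calculation once the integral comparison is set up.
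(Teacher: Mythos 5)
Your proof is correct and is essentially the same argument as the paper's: both split off the first term (which equals $1$) and bound the rest by $\ln(S_m/S_1)$ via an integral comparison with $1/x$ (the paper phrases it through a step function and the fundamental theorem of calculus, you through the per-term bound $\frac{a_k}{S_k}\leq \ln\frac{S_k}{S_{k-1}}$ and telescoping), finishing with $S_m \leq m$ and $S_1 \geq a_{\min}$.
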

\begin{proof}
    Let $f$ be a step-function taking value $a_i$ on $[i-1, i)$ for all $i$. We have $F(t) \defeq \int_{0}^t f(x) dx = \sum_{i=1}^t a_i$. 
    By the fundamental theorem of Calculus, we can bound
    \begin{align}
\sum_{k=1}^m \frac{a_k}{\sum_{i=1}^k a_i}
        =& \frac{a_1}{a_1} + \int_1^m \frac{f(x)}{F(x) - F(0)} dx 
        = 1 + \ln F(m) - \ln F(1)\\
        \leq & 1 + \ln(m) - \ln a_{\min}
        = \ln\left( \frac{m e}{a_{\min}} \right),
    \end{align}
    where the inequality follows from $a_1 \geq a_{\min}$ and $\sum_{i=1}^m a_i \leq m$.
\end{proof}

\begin{lem}[Properties of $\llnp$]
    The following properties hold:
    \begin{enumerate}
        \item $\llnp$ is continuous and nondecreasing.
        \item $f(x) = \frac{\llnp(nx) + D}{x}$ with $n \geq 0$ and $D \geq 1$ is monotonically decreasing on $\reals_+$.
        \item $\llnp(x y) \leq \llnp(x) + \llnp(y) + 1$ for all $x,y \geq 0$.
    \end{enumerate}
    \label{lem:llnpprop}
\end{lem}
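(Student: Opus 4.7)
The plan is to treat each of the three properties separately via a case split on whether the argument lies in the ``flat'' region $[0,e]$ where $\llnp=0$, versus $(e,\infty)$ where $\llnp(x) = \ln\ln x$. For (1), I would observe that $x \mapsto \max\{x,e\}$ is continuous and nondecreasing with image $[e,\infty)$, that $\ln$ is continuous and nondecreasing on $[e,\infty)$ with image $[1,\infty)$, and that $\ln$ is in turn continuous and nondecreasing on $[1,\infty)$. Composing these three maps gives $\llnp$, yielding the claim.

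For (2), I would split at $nx = e$. On the region $nx \leq e$ we have $\llnp(nx) = 0$ and $f(x) = D/x$, which is strictly decreasing (and the case $n=0$ falls here). On $nx > e$, a direct computation gives
\begin{align*}
f'(x) = \frac{1/\ln(nx) - \ln\ln(nx) - D}{x^2},
\end{align*}
and since $\ln(nx) > 1$ the first term in the numerator is less than $1$, while $\ln\ln(nx) + D \geq D \geq 1$, so $f'(x) \leq 0$. The two pieces agree at the boundary (both equal $D/x$ when $nx = e$), so by continuity $f$ is nonincreasing on all of $\reals_+$.

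For (3), I would first dispose of the trivial case $xy \leq e$, where the inequality reduces to $0 \leq \llnp(x) + \llnp(y) + 1$ and holds since $\llnp \geq 0$. When $xy > e$ I would distinguish (a) $x, y > e$ and (b) at least one of $x,y$ is at most $e$. In case (a), $\ln x, \ln y \geq 1$, so $\ln(xy) = \ln x + \ln y \leq 2\ln x \ln y \leq e \ln x \ln y$, and taking logs gives $\ln\ln(xy) \leq 1 + \ln\ln x + \ln\ln y$. In case (b), assume without loss of generality $y \leq e$ so $\ln y \leq 1$; if additionally $x \leq e$ then $\ln(xy) \leq 2$ and $\llnp(xy) \leq \ln 2 \leq 1$, while if $x > e$ then $\ln x > 1$ and $\ln(xy) = \ln x + \ln y \leq 2\ln x \leq e\ln x$, giving $\llnp(xy) \leq 1 + \llnp(x)$.

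The only part requiring any care is (3), where the case analysis must account for the boundary behaviour induced by the $\max\{\cdot,e\}$ clipping; (1) and (2) are immediate from elementary calculus once the split at the clipping point is made.
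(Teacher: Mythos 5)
Your proposal is correct. Parts (1) and (2) follow essentially the same route as the paper: a split at the clipping point (respectively $x=e$ and $nx=e$), the observation that $f(x)=D/x$ is decreasing on the flat region, and the same derivative computation on $nx>e$ (your numerator $1/\ln(nx)-\ln\ln(nx)-D$ is the paper's $\bigl(1-\ln(nx)(D+\ln\ln(nx))\bigr)/\ln(nx)$ in disguise), with the boundary values matching so that continuity glues the pieces. For part (3) you take a genuinely more elementary path: where the paper studies $g(a,b)=-\ln(a+b)+1+\ln a+\ln b$ via its partial derivatives and locates its minimum at $(1,1)$ for the case $x,y\geq e$, you simply use $a+b\leq 2ab\leq e\,ab$ for $a=\ln x\geq 1$, $b=\ln y\geq 1$ and take logarithms, and your handling of the mixed case ($y\leq e<x$, $xy>e$) via $\ln(xy)\leq \ln x+1\leq 2\ln x$ replaces the paper's bound $\llnp(xy)\leq\llnp(ey)=\ln(1+\ln y)$. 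Your version avoids calculus entirely in (3) and is arguably cleaner; the paper's version generalizes more readily if one wanted a sharper constant than $1$, but for the stated inequality both arguments are complete, and your case thresholds ($xy\leq e$ rather than $xy\leq e^e$ for the trivial case) are a harmless cosmetic difference.
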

\begin{proof}
    \begin{enumerate}
        \item For $x \leq e$ we have $\llnp(x) = 0$ and for $x \geq e$ we have $\llnp(x) = \ln(\ln(x))$ which is continuous and monotonically increasing and $\lim_{x \searrow e}\ln(\ln(x)) = 0$.
        \item
            The function $\llnp$ is continuous as well as $1/x$ on $\reals_+$ and therefore so it $f$. Further,
    $f$ is differentiable except at $x = e/n$. For $x \in [0, e/n)$, we have $f(x)
        = D / x$ with derivative $- D / x^2 < 0$. Hence $f$ is monotonically
        decreasing on $x \in [0, e/n)$. 
    For $x > e/n$, we have $f(x) = \frac{\ln(\ln(nx)) + D}{x}$ with derivative
    \begin{align}
        -\frac{D + \ln(\ln(nx))}{x^2} + \frac{1}{x^2 \ln(nx)}
        = 
        \frac{1 - \ln(nx) (D + \ln(\ln(nx)))}{x^2 \ln(nx)}.
    \end{align}
    The denominator is always positive in this range so $f$ is monotonically decreasing if and only if
    $\ln(nx)(D - \ln(\ln(nx))) \geq 1$. Using $D \geq 1$, we have $\ln(nx)(D + \ln(\ln(nx))) \geq 1(1 + 0) = 1$. 

\item
    First note that for $xy \leq e^e$ we have $\llnp(xy) \leq 1 \leq \llnp(x) +
    \llnp(y) + 1$ and therfore the statement holds for $x,y \leq e$.

    Then consider the case that $x,y \geq e$ and $\llnp(x) +
    \llnp(y) + 1 - \llnp(xy) =\ln\ln x + \ln \ln y + 1 - \ln(\ln(x) + \ln(y)) =
    -\ln(a+b) + 1 + \ln(a) + \ln(b)$ where $a = \ln x \geq 1$ and $b = \ln y \geq
    1$. The function $g(a,b) = -\ln(a+b) + 1 + \ln(a) + \ln(b)$ is continuous and
    differentiable with $\frac {\partial g }{\partial a} = \frac{b}{a(a+b)} >
    0$ and $\frac {\partial g }{\partial b} = \frac{a}{b(a+b)}
    > 0$. Therefore, $g$ attains its minimum on $[1, \infty) \times [1, \infty)$ at $a=1, b=1$. Since $g(1,1) = 1 - \ln(2) \geq 0$, the statement also holds for $x,y \geq e$.

    Finally consider the case where $x \leq e \leq y$. Then $\llnp(xy) \leq
    \llnp(ey) = \ln(1 + \ln y) \leq \ln \ln y + 1 \leq \llnp(x) + \llnp(y) +
    1$.  Due to symmetry this also holds for $y \leq e \leq x$.
    \end{enumerate}
\end{proof}

\subsection{Decomposition of Optimality Gap}
In this section we decompose the optimality gap and then bound each term
individually. Finally, both rate lemmas presented in the previous section are
used to determine a bound on the number of nice / friendly episodes where the
optimality gap can be larger than $\epsilon$. 
The decomposition in the following lemma is a the simpler version bounding the
number of $\epsilon$-suboptimal nice episodes and eventually lead to the first
bound in Theorem~\ref{thm:unipacupper}.
\begin{lem}[Optimality Gap Bound On Nice Episodes]
 On the good event $F^c$ it holds that  
 $V^{\star}_1(s_0) - V^{\pi_k}_1(s_0) \leq \epsilon$ 
on all nice episodes $k \in N$ except at most
\begin{align}
\frac{144 (4 + 3H^2 + 4\numS H^2) \numA \numS H^2}{\epsilon^2}\polylog(\numA, \numS, H, 1/\epsilon, 1 / \delta)
\end{align}
episodes.
    \label{lem:niceopt}
\end{lem}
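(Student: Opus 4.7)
On $F^c$, Lemma~\ref{lem:optplaninterpret} together with a standard induction (optimism) gives $V^\star_1 \leq \tilde V^{\pi_k}_1$, so $\Delta_k \leq p_0^\top(\tilde V^{\pi_k}_1 - V^{\pi_k}_1)$. I will then apply the value-difference identity already used in the proof sketch to rewrite this as
\begin{align*}
\Delta_k \leq \sum_{t=1}^H \sum_{(s,a)} w_{tk}(s,a)\,\bigl[\,((\tilde P_k - P)(s,a,t))^\top \tilde V^{\pi_k}_{t+1} + (\tilde r_k - r)(s,a,t)\,\bigr].
\end{align*}
Splitting the inner sum according to whether $(s,a) \in L_{tk}$, the contribution from the complement of $L_{tk}$ is bounded deterministically by $H \cdot H \numS\numA \cdot w_{\min} \leq \epsilon/3$ using $w_{\min} = \epsilon/(3H\numS^2 \cdot \text{const})$ and $\|\tilde V^{\pi_k}_{t+1}\|_\infty \leq H$, and analogously for the reward term with $1$ in place of $H$.

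\textbf{Step two: control on $L_{tk}$.} For $(s,a) \in L_{tk}$, I add and subtract $\hat P_k$ and $\hat r_k$. The ``algorithmic'' residual $|((\tilde P_k - \hat P_k)(s,a,t))^\top \tilde V^{\pi_k}_{t+1}|$ is at most $\phi_k(s,a,t)(H-t)$ by feasibility of $\tilde P_k$ in the planning problem from Lemma~\ref{lem:optplaninterpret}, and $|\tilde r_k - \hat r_k| \leq \phi_k(s,a,t)$ similarly. The ``statistical'' residual $|((\hat P_k - P)(s,a,t))^\top \tilde V^{\pi_k}_{t+1}|$ I bound by Hölder as $\|\hat P_k - P\|_1 \cdot H$ and invoke $(F^{L1})^c$, yielding $O\!\bigl(\sqrt{\numS H^2 (\llnp(n_{tk}(s,a)) + \ln(\numS\numA H(2^\numS)/\delta))/n_{tk}(s,a)}\bigr)$; the reward residual is handled by $(F^R)^c$. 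Thus the $L_{tk}$-contribution is bounded by a sum of the form
\begin{align*}
\sum_{t=1}^H \sum_{(s,a) \in L_{tk}} w_{tk}(s,a) \sqrt{\frac{C(\llnp(n_{tk}(s,a)) + D)}{n_{tk}(s,a)}},
\end{align*}
with $C = O(\numS H^2)$ (transition term), $C = O(H^2)$ (algorithmic $(H-t)\phi$ term), and $C = O(1)$ (reward terms), and $D$ a polylog in $\numS,\numA,H,1/\delta$.

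\textbf{Step three: apply the rate lemma.} Since $k$ is nice, Lemma~\ref{lem:nicenessep} gives $n_{tk}(s,a) \geq \tfrac{1}{4}\sum_{i<k} w_{ti}(s,a)$ on the whole of $L_{tk}$. Applying Lemma~\ref{lem:mainratelemma} with $r=2$, $\epsilon' = \epsilon/12$ (so that the four sub-terms together sum to at most $\epsilon/3$) to each of the four sums of this form, and taking a union bound over the constant number of sub-terms, the total contribution of the $L_{tk}$-parts exceeds $2\epsilon/3$ on at most
\begin{align*}
O\!\left( \frac{(\numS H^2 + H^2 + 1)\,\numA \numS H^2}{\epsilon^2}\polylog(\numA,\numS,H,1/\epsilon,1/\delta)\right)
\end{align*}
nice episodes, which combined with the deterministic $\epsilon/3$ bound on the $L_{tk}^c$ piece gives the stated count.

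\textbf{Expected main obstacle.} The genuine difficulty is the decoupling step in the statistical residual: directly bounding $((\hat P_k - P)(s,a,t))^\top \tilde V^{\pi_k}_{t+1}$ via Hölder costs a $\sqrt{\numS}$ factor (which is exactly the source of the second term in the $\min$ of Theorem~\ref{thm:unipacupper}). One cannot instead invoke $(F^V)^c$ because $F^V$ is stated for $V^\star_{t+1}$, not the random $\tilde V^{\pi_k}_{t+1}$, so the sharper linear-in-$\numS$ bound requires the friendly-episode refinement of the next lemma. Beyond this, the only care needed is bookkeeping: ensuring that each of the four sub-sums really fits the template of Lemma~\ref{lem:mainratelemma} after absorbing constants and $\llnp$ factors into $C$ and $D$, and that the $w_{\min}$ threshold is consistent across the transition and reward decompositions.
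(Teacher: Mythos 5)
Your proposal follows essentially the same route as the paper's own proof: optimism plus the value-difference decomposition, splitting on $L_{tk}$ with the deterministic $\wmin$ bound on the complement, bounding the planning residual by the constraint width $(H-t)\phi$ and the statistical residual via H\"older on $(F^{L1})^c$ (with rewards on $(F^R)^c$), and then invoking Lemma~\ref{lem:mainratelemma} with $r=2$ on nice episodes via Lemma~\ref{lem:nicenessep}. The only cosmetic difference is that the paper merges all $L_{tk}$ sub-terms into a single sum (with $C = 8(H + H\sqrt{\numS} + 2)^2$ and $\epsilon' = 2\epsilon/3$) and applies the rate lemma once, whereas you apply it separately to four sub-sums with $\epsilon' = \epsilon/12$, which affects only the numerical constant in the episode count.
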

\begin{proof}
    Using optimism of the algorithm shown in Lemma~\ref{lem:optimism}, we can bound
\begin{align}
    &V^{\star}_1(s_0) - V^{\pi_k}_1(s_0) \\
    \leq & |\tilde V^{\pi_k}_1(s_0) - V^{\pi_k}_1(s_0)| \label{eqn:opt1}\\
    \leq & \sum_{t=1}^H \sum_{s, a} w_{tk}(s, a) |(\tilde P_k(s, a, t) - P(s, a, t))^\top \tilde V^{\pi_k}_{t+1}|
            + \sum_{t=1}^H \sum_{s, a} w_{tk}(s, a) |\tilde r_k(s, a, t) - r(s, a, t)|\\
    \leq & \sum_{t=1}^H \sum_{s, a \in L_{tk}} w_{tk}(s, a) |(\tilde P_k(s, a, t) - P(s, a, t))^\top \tilde V^{\pi_k}_{t+1}|
+\sum_{t=1}^H \sum_{s, a \in L_{tk}} w_{tk}(s, a) |\tilde r_k(s, a, t) - r(s, a, t)|
\\
       & + \sum_{t=1}^H \sum_{s, a \notin L_{tk}} w_{tk}(s, a) |(\tilde P_k(s, a, t) - P(s, a, t))^\top \tilde V^{\pi_k}_{t+1}|
       + \sum_{t=1}^H \sum_{s, a \notin L_{tk}} w_{tk}(s, a) |\tilde r_k(s, a, t) - r(s, a, t)|\\
    \leq & 
    \sum_{t=1}^H \sum_{s, a \notin L_{tk}} H \wmin +
    \sum_{t=1}^H \sum_{s, a \in L_{tk}} w_{tk}(s, a) \bigg[ |(\tilde P_k(s, a, t) - \hat P_k(s, a, t))^\top \tilde V^{\pi_k}_{t+1}| \\
        & + |(\hat P_k(s, a, t) - P(s, a, t))^\top \tilde V^{\pi_k}_{t+1}|
       +  |\tilde r_k(s, a, t) - r(s, a, t)|\bigg]\label{eqn:decomp1}
\end{align}
    The first term is bounded by $c_\epsilon \epsilon = \frac{\epsilon}{3}$.
    We now can use 
    Lemma~\ref{lem:tildepdiffV}, Lemma~\ref{lem:pdiffVsimple} to bound the other terms by
\begin{align}
\sum_{t=1}^H \sum_{s, a \in L_{tk}} w_{tk}(s, a)
\sqrt{\frac{8(H + H \sqrt{\numS} + 2)^2}{n_{tk}(s, a)} \left(\llnp(n_{tk}(s,a)) + \frac 1 2 \ln \frac{6 \numS \numA H} \delta' \right)}.
\end{align}
We can then apply Lemma~\ref{lem:mainratelemma} with $r=2$, $C = 8 (H + H
\sqrt{\numS}+2)^2$, $D = \frac 1 2\ln \frac{6 \numS \numA H}{\delta'}$ ($\geq 1$ for any
nontrivial setting) and $\epsilon' = 2\epsilon / 3$ to bound this term by
$\frac{2\epsilon}{3}$ on all nice episodes except at most
\begin{align}
    &\frac{64 (H + \sqrt{\numS} H + 2)^2 \numA \numS H^2 3^2}{4\epsilon^2}\polylog(\numA, \numS, H, 1/\epsilon, 1 / \delta)\\
    \leq & \frac{144 (4 + 3H^2 + 4\numS H^2) \numA \numS H^2}{\epsilon^2}\polylog(\numA, \numS, H, 1/\epsilon, 1 / \delta)
\end{align}

Hence $V^{\star}_1(s_0) - V_1^{\pi_k}(s_0) \leq \epsilon$ holds on all nice episodes except those.
\end{proof}
The lemma below is a refined version of the bound above and uses the stronger concept of friendly episodes 
to eventually lead to the second bound in Theorem~\ref{thm:unipacupper}.
\begin{lem}[Optimality Gap Bound On Friendly Episodes]
    On the good event $F^c$ it holds that $p_0^\top (V_1^{\star} - V_1^{\pi_k}) \leq \epsilon$ 
    on all friendly episodes $E$ except at most
    \begin{align}
        \left(\frac{9216}{\epsilon} + 417 \numS\right)\frac{\numA \numS H^4}{\epsilon} \polylog(\numS, \numA, H, 1/\epsilon, \delta)
    \end{align}
    episodes if $\delta' \leq \frac{3 \numA \numS^2 H}{e^2}$.
    \label{lem:friendlyopt}
\end{lem}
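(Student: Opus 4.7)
The plan is to mirror the structure of the proof of Lemma~\ref{lem:niceopt} but to replace the single crude application of H\"older's inequality in \eqref{eqn:123dd} by a three-part decomposition that peels off the factor of $\sqrt{\numS}$. By optimism (Lemma~\ref{lem:optimism}) and the value-difference identity (Lemma~\ref{lem:valuediff}),
\begin{align*}
p_0^\top (V_1^{\star} - V_1^{\pi_k}) \leq \sum_{t,s,a} w_{tk}(s,a)\bigl[(\tilde P_k - P)(s,a,t)^\top \tilde V^{\pi_k}_{t+1} + (\tilde r_k - r)(s,a,t)\bigr].
\end{align*}
As in Lemma~\ref{lem:niceopt}, split each inner sum into $L_{tk}$ and its complement; the choice of $\wmin$ makes the $L_{tk}^c$ contribution at most $\epsilon/3$, and the reward term restricted to $L_{tk}$ is absorbed into $\epsilon/3$ via Lemma~\ref{lem:mainratelemma} applied on friendly (hence nice) episodes.

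For the transition term with $(s,a)\in L_{tk}$, the key decomposition is
\begin{align*}
(\tilde P_k - P)(s,a,t)^\top \tilde V^{\pi_k}_{t+1}
= \underbrace{(\tilde P_k - \hat P_k)(s,a,t)^\top \tilde V^{\pi_k}_{t+1}}_{(A)} + \underbrace{(\hat P_k - P)(s,a,t)^\top V^{\star}_{t+1}}_{(B)} + \underbrace{(\hat P_k - P)(s,a,t)^\top (\tilde V^{\pi_k}_{t+1}-V^{\star}_{t+1})}_{(C)}.
\end{align*}
Term $(A)$ is controlled by the planning constraint \eqref{eq:decomp1_algt_dec}, giving $|(A)|\leq (H-t)\phi_k(s,a,t)=O(H\sqrt{\polylog/n_{tk}(s,a)})$, and term $(B)$ is controlled on $F^c$ by the failure event $F^V$, which yields the same $O(H\sqrt{\polylog/n_{tk}(s,a)})$ bound. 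Crucially, neither $(A)$ nor $(B)$ carries a $\sqrt{\numS}$ factor, so both contributions can be made at most $\epsilon/3$ on all but $O(\numA\numS H^4/\epsilon^2)\polylog(\dots)$ friendly episodes by applying Lemma~\ref{lem:mainratelemma} with $r=2$ and $C=\Theta(H^2)$.

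The delicate piece is $(C)$. Here $F^P$ gives the coordinate-wise bound $|\hat P_k(s'|s,a,t)-P(s'|s,a,t)|=O(\sqrt{P(s'|s,a,t)\polylog/n_{tk}(s,a)})+O(\polylog/n_{tk}(s,a))$, and optimism gives $0\leq\tilde V^{\pi_k}_{t+1}(s')-V^{\star}_{t+1}(s')\leq \tilde V^{\pi_k}_{t+1}(s')-V^{\pi_k}_{t+1}(s')$. The latter is unrolled via another application of Lemma~\ref{lem:valuediff} into a sum over $u>t$ and $(s'',a'')$ of confidence terms at $(s'',a'',u)$ weighted by the conditional reach probability from $(s',t+1)$. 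Composing with the outer $w_{tk}(s,a)P(s'|s,a,t)$ turns these weights exactly into $w_{uk}^t(s'',a''|s,a)$, which is the input format of Lemma~\ref{lem:ratelemmacond}. Splitting at $\wminc$ and applying the conditional rate lemma with $r=2$ to the $\sqrt{P/n_{tk}}$ piece and with $r=1$ to the additive $1/n_{tk}$ piece from $F^P$ then bounds $(C)$ by $\epsilon/3$ except on a controlled number of friendly episodes: the $r=2$ application yields the $\Theta(\numA\numS H^4/\epsilon^2)$ term (source of the $9216/\epsilon$ factor) and the $r=1$ application yields the $\Theta(\numS\cdot\numA\numS H^4/\epsilon)$ term (source of the $417\numS$ factor).

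The principal obstacle is this third step: the recursive expansion of $\tilde V^{\pi_k}_{t+1}-V^{\pi_k}_{t+1}$ must be executed without smuggling a $\sqrt{\numS}$ back in through a stray H\"older bound, and the resulting nested sums must be reorganised so that the conditioning required by Lemma~\ref{lem:ratelemmacond} (namely $n_{tk}(s,a)\geq \tfrac14 n_{uk}(s'',a'')\sum_{i<k} w_{ui}^t(s,a|s'',a'')$ on $L_k^{ut}$) is supplied by the friendly-episode characterization in Lemma~\ref{lem:nicenessep}. One also has to verify that the $\llnp(n_{uk}(s,a))+D+1$ factor produced by Lemma~\ref{lem:ratelemmacond} aggregates into an honest $\polylog$ after summing over $(s,a,t)$ and that the outer sum of the conditional bound is itself controlled (e.g.\ using $\sum_{s',a'} w_{(t)k}(s',a')/n_{(t)k}(s',a')\leq \polylog/\sum_{i<k} w_{(t)i}(s',a')$ via the friendly condition). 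Careful bookkeeping of the constants across $(A),(B),(C)$ then produces the claimed $\bigl(9216/\epsilon+417\,\numS\bigr)\numA\numS H^4/\epsilon\cdot\polylog$ bound.
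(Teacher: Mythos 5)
Your overall skeleton is the paper's: the same four-way split of the gap into the planning term $(A)$, the fixed-$V^{\star}$ term $(B)$, the reward term and the lower-order term $(C)$, with the low-probability pairs outside $L_{tk}$ absorbing $\epsilon/3$ via $\wmin$, and with $(A)$, $(B)$ and the rewards bounded by $\epsilon/3$ through Lemma~\ref{lem:mainratelemma} with $r=2$ and $C=\Theta(H^2)$ (in the paper $C=32(H+1)^2$, $\epsilon'=\epsilon/3$, which is exactly where the $9216\,\numA\numS H^4/\epsilon^2$ count comes from; your attribution of the $9216/\epsilon$ factor to the $(C)$ analysis is off, but that is bookkeeping). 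The genuine gap is in your treatment of $(C)$, which is the heart of this lemma (the paper's Lemma~\ref{lem:lowerorder}). After invoking $F^{P}$, the weight attached to a successor state $s'$ is not $w_{tk}(s,a)\,P(s'|s,a,t)$ but $w_{tk}(s,a)\bigl(\sqrt{P(s'|s,a,t)\,\polylog/n_{tk}(s,a)}+\polylog/n_{tk}(s,a)\bigr)$, so your claim that composing with the unrolled value difference "turns these weights exactly into $w_{uk}^t(s'',a''|s,a)$" fails: the square root on $P(s'|s,a,t)$ does not compose through Chapman--Kolmogorov, and the resulting quantities are not of the form required by Lemma~\ref{lem:ratelemmacond}. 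The paper resolves precisely this point by Cauchy--Schwarz over $s'$, trading the square root for the expected \emph{squared} difference $P(s,a,t)^\top(\tilde V^{\pi_k}_{t+1}-V^{\star}_{t+1})^2$, which is then bounded via optimism and Lemmas~\ref{lem:expsqVdiff} and \ref{lem:Vdiffbound} (where Lemma~\ref{lem:ratelemmacond} is applied conditioned on a fixed successor pair, with $\epsilon'=1/\numS$ and $\epsilon'=1$ respectively), and only afterwards is the outer sum over $(s,a,t)\in L_{tk}$ converted into an episode count by a second application of the \emph{main} rate lemma with $r=2$, $C=\Theta(\numS\epsilon)$, and $r=1$, $C=\Theta(\numS H)$.

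Without that squaring step you also lose the mechanism that produces the crucial factor of $\epsilon$: it is the bound $P^\top(\tilde V^{\pi_k}_{t+1}-V^{\pi_k}_{t+1})^2\lesssim c_\epsilon\epsilon+\bigl(c_\epsilon\epsilon+O(\sqrt{\polylog/(n\numS)})\bigr)^2$ that makes the rate-lemma constant for the $(C)$ term proportional to $\numS\epsilon$, so that its episode count is $O(\numA\numS^2H^4/\epsilon)$ (the $417\numS$ piece) rather than $O(\numA\numS^2H^4/\epsilon^2)$; a linear, unsquared composition, even if it were legitimate, would only yield the latter, and the claimed $\bigl(9216/\epsilon+417\numS\bigr)\numA\numS H^4/\epsilon$ bound would not follow. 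Moreover, even granting the composition, Lemma~\ref{lem:ratelemmacond} returns a bound expressed in terms of $n_{uk}$ at the conditioning pair, not an episode count for the outer weighted sum exceeding $\epsilon/3$; converting one into the other still requires the outer main-rate-lemma step, which your sketch only gestures at. So the proposal is correct in structure for $(A)$, $(B)$ and the rewards, but the argument for the lower-order term — the part that distinguishes friendly from nice episodes — has a concrete hole that the paper's Cauchy--Schwarz/squared-value-difference chain is specifically designed to fill.
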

\begin{proof}
We can further decompose the optimality gap bound in Equation~\eqref{eqn:decomp1} in the proof of Lemma~\ref{lem:niceopt} as
\begin{align}
    &\sum_{t=1}^H \sum_{s, a \notin L_{tk}} (H+1) \wmin +
    \sum_{t=1}^H \sum_{s, a \in L_{tk}} w_{tk}(s, a) \bigg[ |(\tilde P_k(s, a, t) - \hat P_k(s, a, t))^\top \tilde V^{\pi_k}_{t+1}| 
 +  |\tilde r_k(s, a, t) - r(s, a, t)|      \\
 & + |(\hat P_k(s, a, t) - P(s, a, t))^\top V^{\star}_{t+1}|
 + |(\hat P_k(s, a, t) - P(s, a, t))^\top (V^{\star}_{t+1} -\tilde V^{\pi_k}_{t+1}) |
       \bigg].\\
       \leq
    & c_\epsilon \epsilon +
    \sum_{t=1}^H \sum_{s, a \in L_{tk}} w_{tk}(s, a) \bigg[ |(\tilde P_k(s, a, t) - \hat P_k(s, a, t))^\top \tilde V^{\pi_k}_{t+1}| 
    +  |\tilde r_k(s, a, t) - r(s, a, t)| \\
+& \qquad \qquad |(\hat P_k(s, a, t) - P(s, a, t))^\top V^{\star}_{t+1}|\bigg]\\
 & + \sum_{t=1}^H \sum_{s, a \in L_{tk}} w_{tk}(s, a) |(\hat P_k(s, a, t) - P(s, a, t))^\top (V^{\star}_{t+1} -\tilde V^{\pi_k}_{t+1}) |.
    \label{eqn:lasd}
\end{align}
The second term can be bounded using Lemmas~\ref{lem:pdiffV}, \ref{lem:pdiffVsimple} and \ref{lem:tildepdiffV} by
\begin{align}
\sum_{t=1}^H \sum_{s, a \in L_{tk}} w_{tk}(s, a)
\sqrt{\frac{32(H + 1)^2}{n_{tk}(s, a)} \left(\llnp(n_{tk}(s,a)) + \frac 1 2 \ln \frac{6 \numS \numA H}{\delta'} \right)}.
\end{align}
which we bound by $\epsilon / 3$  using Lemma~\ref{lem:mainratelemma} with $r=2$, $C = 32(H+1)^2$, $D = \frac 1 2 \ln \frac{6 \numS \numA H}{\delta'}$ and $\epsilon' = \epsilon / 3$ on all friendly episodes except at most
\begin{align}
    \frac{8C\numA \numS H^2}{\epsilon'^2} \polylog(\numS, \numA, H, 1/\epsilon, 1/\delta) 
    \leq \frac{9216 \numA \numS H^4}  \polylog(\numS, \numA, H, 1/\epsilon, 1/\delta).
\end{align}
Finally, we apply Lemma~\ref{lem:lowerorder} bound to bound  the last term in Equation~\ref{eqn:lasd} by
    $\epsilon / 3$ on all friendly epsiodes but at most
    \begin{align}
    \frac{417 \numA \numS^2 H^4}{\epsilon}  \polylog(\numS, \numA, H, 1/\delta, 1/\epsilon).
    \end{align}
    It hence follows that $p_0^\top (V_1^{\star} - V_1^{\pi_k}) \leq \epsilon$ on all friendly episodes but at most
    \begin{align}
        \left(\frac{9216 \numA \numS H^4}{\epsilon^2} + \frac{417 \numA \numS^2 H^4}{\epsilon}\right)  \polylog(\numS, \numA, H, 1/\delta, 1/\epsilon).
    \end{align}

\end{proof}
\begin{lem}[Algorithm Learns Fast Enough]
    It holds for all $s \in \statespace, a \in \actionspace$ and $t \in [H]$ 
    \begin{align}
     |(\hat P_k(s, a, t) - \tilde P_k(s, a, t))^\top \tilde V_{t+1}|
    \leq &
     \sqrt{\frac{2H^2}{n_{tk}(s, a)} \left(\llnp(n_{tk}(s,a)) + \frac 1 2 \ln \frac{3 \numS \numA H}{\delta'} \right)}.
    \end{align}
    \label{lem:tildepdiffV}
\end{lem}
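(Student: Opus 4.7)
The bound in this lemma is a deterministic property of the algorithm rather than a probabilistic concentration statement, so the plan is to unpack the algorithmic definition of $\tilde{P}_k(s,a,t)^\top \tilde{V}_{t+1}$ and then simplify the confidence width. From the computation of $Q(a)$ in line~\ref{lin:Qcomp} of Algorithm~\ref{alg:fhalg} (and equivalently from Lemma~\ref{lem:optplaninterpret}), one has
\begin{equation*}
\tilde{P}_k(s,a,t)^\top \tilde{V}_{t+1} \;=\; \min\!\left\{\max \tilde{V}_{t+1},\; \hat{P}_k(s,a,t)^\top \tilde{V}_{t+1} + (H-t)\phi_k(s,a,t)\right\}.
\end{equation*}
The key observation is that this min-expression immediately yields a two-sided bound: the second argument of the min yields
$\tilde{P}_k^\top \tilde{V}_{t+1} - \hat{P}_k^\top \tilde{V}_{t+1} \le (H-t)\phi_k$,
while the first argument combined with the trivial fact that $\max \tilde{V}_{t+1} \ge \hat{P}_k^\top \tilde{V}_{t+1}$ (a maximum is at least any convex combination) forces both candidates in the minimum to be at least $\hat P_k^\top \tilde V_{t+1}$, so $\tilde{P}_k^\top \tilde{V}_{t+1} \ge \hat{P}_k^\top \tilde{V}_{t+1}$. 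Combining these gives $|(\hat{P}_k - \tilde{P}_k)^\top \tilde{V}_{t+1}| \le (H-t)\phi_k(s,a,t)$.

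It then remains to verify that $(H-t)\phi_k(s,a,t)$ is bounded by the claimed quantity. Using $H - t \le H$ and the definition of $\phi_k$ together with $\delta = 9\delta'$, we have
\begin{equation*}
(H-t)\phi_k(s,a,t) \;\le\; H\sqrt{\frac{2\llnp(n_{tk}(s,a)) + \ln(2SAH/\delta')}{n_{tk}(s,a)}}.
\end{equation*}
The final cosmetic step is to pull $H$ inside the square root (yielding the factor $2H^2$) and to slack $\ln(2SAH/\delta') \le \ln(3SAH/\delta')$, at which point the bound matches the statement of the lemma after factoring $\tfrac12$ out of the logarithm inside the parentheses.

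There is essentially no difficult step here: the whole argument is an unpacking of the algorithmic definitions. The only mild subtlety worth flagging is that, although the algorithm explicitly truncates $\tilde{P}_k^\top \tilde{V}_{t+1}$ only from above, the lower bound $\tilde{P}_k^\top \tilde{V}_{t+1} \ge \hat{P}_k^\top \tilde{V}_{t+1}$ is not part of the explicit construction but rather a consequence of the fact that $\max \tilde{V}_{t+1}$ upper-bounds any probability-weighted average of the entries of $\tilde{V}_{t+1}$. Once this is noted, the two-sided bound is immediate and the rest is bookkeeping with logarithms.
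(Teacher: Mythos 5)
Your proposal is correct and follows essentially the same route as the paper: the paper's proof simply invokes the two-sided feasibility constraint $|(\tilde P_k - \hat P_k)(s,a,t)^\top \tilde V_{t+1}| \leq \phi(s,a,t)(H-t)$ from Lemma~\ref{lem:optplaninterpret} (which your min-formula argument re-derives from Line~\ref{lin:Qcomp}), bounds $H-t \leq H$, and rearranges the constants inside the square root exactly as you do. Your bookkeeping with $\delta = 9\delta'$ (giving $\ln(2SAH/\delta') \leq \ln(3SAH/\delta')$) is consistent with the stated bound, so there is no gap.
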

\begin{proof}
    Using the definition of the constraint in the planning step of the algorithm shown in Lemma~\ref{lem:optplaninterpret} we can bound
\begin{align}
    |(\hat P_k(s, a, t) - \tilde P_k(s, a, t))^\top \tilde V_{t+1}|
    \leq &
    \sqrt{\frac{H^2}{n_{tk}(s, a)} \left(2\llnp(n_{tk}(s,a)) + \ln \frac{3 \numS \numA H} {\delta'} \right)}.\\
    \leq &
    \sqrt{\frac{2H^2}{n_{tk}(s, a)} \left(\llnp(n_{tk}(s,a)) + \frac 1 2 \ln \frac{3 \numS \numA H}{\delta'} \right)}.
\end{align}
\end{proof}

\begin{lem}[Basic Decompsition Bound]
    On the good event $F^c$ it holds for all $s \in \statespace, a \in \actionspace$ and $t \in [H]$ 
    \begin{align}
    |(\hat P_k(s, a, t) - P(s, a, t))^\top \tilde V_{t+1}|
\leq & \sqrt{\frac{8H^2 \numS} {n_{tk}(s, a)} 
        \left( \llnp(n_{tk}(s, a))+ \frac 1 2\ln \frac{6 \numS \numA H}{\delta'}\right)}
\\
    |\tilde r_k(s, a, t) - r(s, a, t)|
\leq & \sqrt{\frac{4} {n_{tk}(s, a)} 
        \left(\llnp(n_{tk}(s, a))+ \frac 1 2\ln \frac{3 \numS \numA H}{\delta'}\right)}.
    \end{align}
    \label{lem:pdiffVsimple}
\end{lem}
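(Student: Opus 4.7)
The plan is to establish the two inequalities independently, each as a direct application of one component of the failure event $F$, with the reward bound additionally invoking the planning constraint.

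For the first inequality I would start from H\"older's inequality, writing $|(\hat P_k(s,a,t) - P(s,a,t))^\top \tilde V_{t+1}| \leq \|\hat P_k(s,a,t) - P(s,a,t)\|_1 \cdot \|\tilde V_{t+1}\|_\infty$. Since $\tilde V_{t+1}$ is a value function of the optimistic MDP $\tilde M_k$ with per-step rewards in $[0,1]$ and at most $H$ remaining steps, one has $\|\tilde V_{t+1}\|_\infty \leq H$. The $L_1$ deviation is exactly what the failure event $F^{L1}_k$ controls, so on $F^c$ we immediately obtain $\|\hat P_k - P\|_1 \leq \sqrt{(4/n_{tk}(s,a))(2\llnp(n_{tk}(s,a)) + \ln(3SAH(2^S-2)/\delta'))}$. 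The remaining step is purely arithmetic: use $\ln(2^S - 2) \leq S\ln 2$ to split off an $S$-factor from the logarithm, then factor out $S \geq 1$ from both the $\llnp$ term and the logarithm to match the claimed form $\sqrt{(8H^2 S/n_{tk}(s,a))(\llnp(n_{tk}(s,a)) + \tfrac{1}{2}\ln(6SAH/\delta'))}$, where $\ln 2$ combines with $\ln 3$ to give $\ln 6$.

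For the second inequality I would use the triangle inequality $|\tilde r_k - r| \leq |\tilde r_k - \hat r_k| + |\hat r_k - r|$. The first summand is bounded by $\phi(s,a,t)$ directly from the planning constraint of Lemma~\ref{lem:optplaninterpret} (and equivalently from the $\min\{1, \hat r + \phi\}$ clipping in line~\ref{lin:Qcomp}), where $\phi$ equals $\sqrt{(2\llnp(n_{tk}(s,a)) + \ln(2SAH/\delta'))/n_{tk}(s,a)}$ after translating $\delta = 9\delta'$. The second summand is bounded on $(F^R_k)^c \supseteq F^c$ by the analogous expression with $\ln(3SAH/\delta')$ from the definition of $F^R_k$. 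Adding the two bounds and applying $(a+b)^2 \leq 2(a^2+b^2)$, followed by standard $\ln$-manipulation, yields the claimed $\sqrt{(4/n_{tk}(s,a))(\llnp(n_{tk}(s,a)) + \tfrac{1}{2}\ln(3SAH/\delta'))}$ up to the constants that the authors absorb into the stated form.

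The main obstacle, such as it is, is not mathematical but bookkeeping: one has to carefully match constants by passing between the algorithm's parameter $\delta$ and the failure-event proofs' $\delta' = \delta/9$, absorb the $\ln(2^S-2)$ term into an $S$-factor for the first bound, and combine slightly different logarithmic constants in the reward bound. No new concentration arguments or clever steps beyond a triangle inequality and H\"older's inequality are required, since every probabilistic content is already packaged inside the failure events $F^{L1}_k$ and $F^R_k$.
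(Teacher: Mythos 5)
Your proposal is correct and follows essentially the same route as the paper: H\"older's inequality with $\|\tilde V_{t+1}\|_\infty \leq H$ combined with the event $F^{L1}_k$ (absorbing $\ln(2^\numS-2)\leq \numS\ln 2$ into the $\numS$ factor and the $\ln 6$) for the transition term, and the triangle inequality through $\hat r_k$ using the planning constraint $\phi$ together with $F^{R}_k$ for the reward term. The only differences are cosmetic bookkeeping (e.g.\ your $(a+b)^2\leq 2(a^2+b^2)$ step versus the paper's direct doubling of a common bound), so no further comment is needed.
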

\begin{proof}
On the good event $(F^{L1}_k)c$ we have using H\"older's inequality
\begin{align}
    |(\hat P_k(s, a, t) - P(s, a, t))^\top \tilde V_{t+1}|
\leq & \|\hat P_k(s, a, t) - P(s, a, t))\|_1 \|\tilde V_{t+1}\|_\infty\\
\leq & H \sqrt{\frac{4} {n_{tk}(s, a)} 
        \left(2 \llnp(n_{tk}(s, a))+ \ln \frac{3 \numS \numA H(2^\numS - 2)}{\delta'}\right)}\\
\leq & \sqrt{\frac{8H^2 \numS} {n_{tk}(s, a)} 
        \left( \llnp(n_{tk}(s, a))+ \frac 1 2\ln \frac{6 \numS \numA H}{\delta'}\right)}.
\end{align}
Further, on $(F^R_k)^c$ we have
\begin{align}
    |\tilde r_k(s, a, t) - r(s, a, t)|
    \leq & |\tilde r_k(s, a, t) - r(s, a, t)| 
+ |\tilde r_k(s, a, t) - \hat r(s, a, t)|\\
\leq & 2\sqrt{\frac{1} {n_{tk}(s, a)} 
        \left(2 \llnp(n_{tk}(s, a))+ \ln \frac{3 \numS \numA H}{\delta'}\right)}
\end{align}
\end{proof}

\begin{lem}[Fixed V Term Confidence Bound]
    On the good event $F^c$ it holds for all $s \in \statespace, a \in \actionspace$ and $t \in [H]$ 
    \begin{align}
        |(\hat P_k(s, a, t) - P(s, a, t))^\top V^{\star}_{t+1}|
    \leq &
    \sqrt{\frac{2H^2}{n_{tk}(s, a)} \left(\llnp n_{tk}(s,a) + \frac 1 2 \ln \frac {3 \numS \numA H} {\delta'}\right)}
    \end{align}
    \label{lem:pdiffV}
\end{lem}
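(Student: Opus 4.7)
The plan is to observe that this lemma is essentially a direct consequence of the definition of the failure event $F^V_k$, combined with a trivial bound on the range of the optimal value function. Since $F^V_k \subseteq F$, working on the good event $F^c$ immediately gives us control over $|(\hat P_k(s,a,t) - P(s,a,t))^\top V^\star_{t+1}|$ via the inequality baked into $F^V_k$'s definition.

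First I would invoke $(F^V_k)^c$ (which holds on $F^c$) to assert, for every $s,a,t$,
\begin{align*}
|(\hat P_k(s, a, t) - P(s, a, t))^\top V^{\star}_{t+1}|
 \leq \sqrt{\frac{\range(V^{\star}_{t+1})^2}{n_{tk}(s, a)} \left(2 \llnp(n_{tk}(s, a))+ \ln \frac{3 \numS \numA H}{\delta'}\right)}.
\end{align*}
Then I would bound $\range(V^\star_{t+1})$. Because rewards lie in $[0,1]$ and $V^\star_{t+1}$ sums at most $H - t \leq H$ of them, we have $V^\star_{t+1}(s) \in [0, H - t]$ for every $s$, so $\range(V^\star_{t+1}) \leq H - t \leq H$ and in particular $\range(V^\star_{t+1})^2 \leq H^2$.

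Substituting this bound into the displayed inequality and factoring $2$ out of the logarithmic term yields
\begin{align*}
|(\hat P_k(s, a, t) - P(s, a, t))^\top V^{\star}_{t+1}|
 \leq \sqrt{\frac{H^2}{n_{tk}(s, a)} \left(2 \llnp(n_{tk}(s, a))+ \ln \frac{3 \numS \numA H}{\delta'}\right)}
 = \sqrt{\frac{2H^2}{n_{tk}(s, a)} \left(\llnp(n_{tk}(s, a))+ \tfrac{1}{2}\ln \frac{3 \numS \numA H}{\delta'}\right)},
\end{align*}
which is the stated bound. There is no real obstacle here; the work was already done in constructing $F^V_k$ with the right range-scaled form, and in verifying $\prob(F^V) \leq 2\delta'$ via the martingale LIL argument of Corollary~\ref{cor:FVbound}. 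This lemma simply specializes the range to its worst-case value $H$ so that downstream lemmas (notably Lemma~\ref{lem:friendlyopt}) can apply a uniform rate bound without carrying the $\range(V^\star)$ factor.
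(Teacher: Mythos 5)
Your proposal is correct and matches the paper's own (very terse) proof: the paper likewise just invokes $(F^V_k)^c$ and absorbs $\range(V^{\star}_{t+1})\leq H$ into the constant, which is exactly your range-specialization step spelled out. Nothing is missing.
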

\begin{proof}
    Since we consider the event $(F^{V}_k)^c$, we can bound
\begin{align}
    |(\hat P_k(s, a, t) - P(s, a, t))^\top V^{\star}_{t+1}|
    \leq &
    \sqrt{\frac{2H^2}{n_{tk}(s, a)} \left(\llnp n_{tk}(s,a) + \frac 1 2 \ln \frac {3 \numS \numA H} {\delta'}\right)}
\end{align}
\end{proof}

\begin{lem}[Lower Order Term]
    Assume $\delta' \leq \frac{3 \numA \numS^2 H}{e^2}$. On the good event $F^c$ on all friendly episodes $k \in E$ except at most
        $\frac{417 \numA \numS^2 H^4}{\epsilon}  \polylog(\numS, \numA, H, 1/\delta, 1/\epsilon).
$ it holds that
    \begin{align}
        \sum_{t=1}^H \sum_{s, a \in L_{tk}} w_{tk}(s, a) |(\hat P_k(s, a, t) - P(s, a, t))^\top (\tilde V^{\pi_k}_{t+1} - V^{\star}_{t+1})| \leq \frac \epsilon 3.
    \end{align}
    \label{lem:lowerorder}
\end{lem}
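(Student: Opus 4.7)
The target sum is a ``second-order'' contribution because the vector $\tilde V^{\pi_k}_{t+1}-V^\star_{t+1}$ is itself a sub-optimality, not an $O(H)$ quantity. On $F^c$ optimism (Lemma~\ref{lem:optimism}) gives $0\le \xi_{t+1}(s'):= \tilde V^{\pi_k}_{t+1}(s')-V^\star_{t+1}(s') \le \tilde V^{\pi_k}_{t+1}(s')-V^{\pi_k}_{t+1}(s')$, and the plan is to expand the right-hand side via the value-difference identity (Lemma~\ref{lem:valuediff}) started at $s_{t+1}=s'$. This produces an upper bound on $\xi_{t+1}(s')$ of the form
\[
\xi_{t+1}(s') \le \sum_{u=t+1}^{H}\sum_{s'',a''} w^{u}_{(t+1)k}(s'',a''\mid s',\pi_k(s',t+1))\, (H-u+1)\,\phi_k(s'',a'',u),
\]
using the planning constraints in Eqs.~\eqref{eqn:prob1}--\eqref{eqn:prob2} to bound each $(\tilde P_k-\hat P_k)$ and $(\tilde r_k-\hat r_k)$ term by $\phi_k$ and the $F^V_k,F^R_k$ events to bound the $(\hat P_k-P)$ and $(\hat r_k-r)$ terms in the same way.

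Next I would apply the component-wise Bernstein-type bound from $F^P_k$: $|(\hat P_k-P)(s'|s,a,t)| \le \sqrt{2P(s'|s,a,t)L/n_{tk}(s,a)} + L/n_{tk}(s,a)$ with $L=2\llnp(n_{tk}(s,a))+\ln(3S^2AH/\delta')$. The $L/n$ tail contributes at most $\sum_{s'} \xi_{t+1}(s')\cdot L/n_{tk}(s,a) \le SHL/n_{tk}(s,a)$, which after summing against $w_{tk}(s,a)$ over $L_{tk}$ is controlled by Lemma~\ref{lem:mainratelemma} with $r=1$, $C=O(SH)$, costing only $O(SAS\cdot H \cdot H/\epsilon)$ friendly episodes. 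For the leading $\sqrt{P\cdot L/n}$ contribution, I would substitute the bound on $\xi_{t+1}(s')$, then use $\sum_{s'} P(s'|s,a,t)\, w^{u}_{(t+1)k}(s'',a''\mid s',\pi_k(s',t+1)) = w^{u}_{tk}(s'',a''\mid s,a)$ together with $\sqrt{P(s'|s,a,t)}\le 1$ and Cauchy--Schwarz in $s'$ (yielding a single $\sqrt{S}$ factor). The resulting expression is a nested sum of the form
\[
\sum_{t=1}^{H}\!\!\sum_{(s,a)\in L_{tk}}\! w_{tk}(s,a)\sqrt{\frac{S L}{n_{tk}(s,a)}}\cdot H\!\sum_{u>t}\sum_{(s'',a'')\in L_{k}^{tu}} w^{u}_{tk}(s'',a''\mid s,a)\sqrt{\frac{L}{n_{uk}(s'',a'')}}\,.
\]

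For the inner double sum I would invoke the conditional rate lemma (Lemma~\ref{lem:ratelemmacond}) with $r=2$, $C=O(L)$, $|T|\le H$, which on friendly episodes bounds the inner sum by $\epsilon_{\mathrm{in}}\sqrt{(\llnp n_{tk}(s,a)+D+1)/n_{tk}(s,a)}$ for any chosen $\epsilon_{\mathrm{in}}$. Plugging this back yields an outer expression of the form $\sum_{t,(s,a)\in L_{tk}} w_{tk}(s,a)\cdot C'\cdot(\llnp n_{tk}(s,a)+D')/n_{tk}(s,a)$, which is exactly the shape required by the unconditional rate lemma (Lemma~\ref{lem:mainratelemma}) with $r=1$. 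Choosing $\epsilon_{\mathrm{in}}$ and the final threshold as constants times $\epsilon/H^{O(1)}$, the unconditional lemma bounds this outer sum by $\epsilon/3$ on all but $O(SAS^2 H^4/\epsilon)\cdot\polylog$ friendly episodes, which is the claimed count (the mild hypothesis $\delta'\le 3SA^2H/e^2$ just ensures the $D$ inside the $\llnp$'s is at least $1$ so that Lemma~\ref{lem:llnpprop} applies). The main obstacle is bookkeeping: one must verify that the marginalization identity for $w^u_{(t+1)k}$ holds with the correct conditioning, that Cauchy--Schwarz (or H\"older) is applied so as to lose only a $\sqrt{S}$ and not an $S$, and that the two applications of the rate lemmas (one conditional, one unconditional) compose with compatible polylog factors so that the final count retains the stated $SAS^2H^4/\epsilon$ scaling with only polylog overhead.
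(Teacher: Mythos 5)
Your overall architecture is close to the paper's (componentwise Bernstein bound from $F^P_k$, splitting the $\sqrt{P\cdot L/n}$ and $L/n$ parts, optimism to pass from $V^{\star}$ to $V^{\pi_k}$, the value-difference identity with conditional weights, and the two rate lemmas), and your treatment of the $L/n$ tail matches the paper. But there are two genuine gaps. First, inside your expansion of $\xi_{t+1}(s')$ you propose to bound the $(\hat P_k - P)(s,a,u)^\top \tilde V_{u+1}$ terms ``in the same way'' via $F^V_k$; this is invalid because $F^V_k$ concentrates $(\hat P_k - P)^\top V^{\star}_{u+1}$ for the \emph{fixed} vector $V^{\star}_{u+1}$, whereas $\tilde V_{u+1}$ is data-dependent. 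You must instead use $F^{L1}_k$ (as in Lemma~\ref{lem:pdiffVsimple}) or $F^P_k$ plus Cauchy--Schwarz (as in Lemma~\ref{lem:Vdiffbound}), which costs an extra $\sqrt{\numS}$; this is fixable and still compatible with the claimed episode count.

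The more serious gap is the step from $\sum_{s'}\sqrt{P(s'|s,a,t)}\,\xi_{t+1}(s')$ to your displayed \emph{linear} nested sum. Cauchy--Schwarz in $s'$ gives $\sqrt{\numS\, P(s,a,t)^\top \xi_{t+1}^2}$, and the marginalization identity $\sum_{s'}P(s'|s,a,t)\,w^{u}_{(t+1)k}(\cdot\mid s',\pi_k(s',t+1)) = w^{u}_{tk}(\cdot\mid s,a)$ applies to the linear average $P^\top\xi_{t+1}$, not to $\sqrt{P^\top\xi_{t+1}^2}$ (by Jensen the latter dominates the former), so your nested expression does not follow. The obvious patch $\xi_{t+1}^2\le H\,\xi_{t+1}$ produces an outer term of order $n_{tk}(s,a)^{-3/4}$, and Lemma~\ref{lem:mainratelemma} then yields an episode count scaling like $\epsilon^{-4/3}$, which misses the claimed $\tilde O(\numS^2\numA H^4/\epsilon)$. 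Controlling the expected square $P(s,a,t)^\top(\tilde V^{\pi_k}_{t+1}-V^{\star}_{t+1})^2$ is exactly the crux, and the paper devotes Lemmas~\ref{lem:Vdiffbound} and \ref{lem:expsqVdiff} to it: a recursive argument bounds each $\tilde V^{\pi_k}_{t+1}(s') - V^{\pi_k}_{t+1}(s')$ pointwise by $c_\epsilon\epsilon + \polylog\cdot n_{t+1,k}(s',\pi_k(s',t+1))^{-1/2}$ (using the conditional rate lemma with $\epsilon$-\emph{independent} thresholds), and only then squares and averages; the additive $c_\epsilon\epsilon$ is what keeps the final count at $1/\epsilon$. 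Relatedly, your suggestion to take $\epsilon_{\mathrm{in}}$ of order $\epsilon/H^{O(1)}$ would make the conditional-rate-lemma cost scale like $1/\epsilon_{\mathrm{in}}^2 = \tilde\Omega(1/\epsilon^2)$; the inner thresholds must be constants (the paper uses $\epsilon'=1$ and $\epsilon'=1/\numS$), with the $\epsilon$-dependence entering only through the outer applications of Lemma~\ref{lem:mainratelemma}.
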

\begin{proof}
\begin{align}
    & \sum_{t=1}^H \sum_{s, a \in L_{tk}} w_{tk}(s, a) |(\hat P_k(s, a, t) - P(s, a, t))^\top (\tilde V^{\pi_k}_{t+1} - V^{\star}_{t+1})|\\
    \leq & 
    \sum_{t=1}^H \sum_{s, a \in L_{tk}} w_{tk}(s, a)\sum_{s'}  \sqrt{\frac{2P(s' | s, a, t)}{n_{tk}(s, a)}\left(2 \llnp(n_{tk}(s,a)) + \ln \frac{3\numS^2 \numA H}{\delta'}\right)} |\tilde V^{\pi_k}_{t+1}(s') - V^{\star}_{t+1}(s')|\\
    & + \sum_{t=1}^H \sum_{s, a \in L_{tk}} w_{tk}(s, a)\sum_{s'}  \frac{1}{n_{tk}(s, a)}\left(2 \llnp(n_{tk}(s,a)) + \ln \frac{3\numS^2 \numA H}{\delta'}\right) |\tilde V^{\pi_k}_{t+1}(s') - V^{\star}_{t+1}(s')|\\
    \leq & 
    \sum_{t=1}^H \sum_{s, a \in L_{tk}} w_{tk}(s, a)\sum_{s'}  \sqrt{\frac{2P(s' | s, a, t)}{n_{tk}(s, a)}\left(2 \llnp(n_{tk}(s,a)) + \ln \frac{3\numS^2 \numA H}{\delta'}\right)\left( \tilde V^{\pi_k}_{t+1}(s') - V^{\star}_{t+1}(s')\right)^2 } \\
    & + \sum_{t=1}^H \sum_{s, a \in L_{tk}} \frac{w_{tk}(s, a) H \numS}{n_{tk}(s, a)}\left(2 \llnp(n_{tk}(s,a)) + \ln \frac{3\numS^2 \numA H}{\delta'}\right)\\
    \leq & 
    \sum_{t=1}^H \sum_{s, a \in L_{tk}} w_{tk}(s, a)\sqrt{\frac{2\numS}{n_{tk}(s, a)}  \left(2 \llnp(n_{tk}(s,a)) + \ln \frac{3\numS^2 \numA H}{\delta'}\right)P(s, a, t)^\top\left( \tilde V^{\pi_k}_{t+1} - V^{\star}_{t+1}\right)^2 } \\
    & + \sum_{t=1}^H \sum_{s, a \in L_{tk}} \frac{w_{tk}(s, a) H \numS}{n_{tk}(s, a)}\left(2 \llnp(n_{tk}(s,a)) + \ln \frac{3\numS^2 \numA H}{\delta'}\right)
\end{align}
The first inequality follows since we only consider outcomes in the event
$(F_k^P)^c$, the second from the fact that value function are in the range $[0,
H]$ and the third is an application of the Cauchy-Schwarz inequality.  Using of
optimism of the algorithm (Lemma~\ref{lem:optimism}), we now bound
$P(s,a,t)^\top \left( \tilde V^{\pi_k}_{t+1} - V^{\star}_{t+1}\right)^2 \leq
P(s,a,t)^\top \left( \tilde V^{\pi_k}_{t+1} - V^{\pi_k}_{t+1}\right)^2$ which
we bound by $c_\epsilon \epsilon + \left( c_\epsilon \epsilon +
\sqrt{\frac{C'^2}{n_{tk}(s,a) \numS} \left( \llnp(n_{tk}(s,a) + \frac 1 2 \ln
\frac {3 \numA \numS^2 H \epsilon^4}{\delta'}\right)} \right)^2 \leq c_\epsilon
\epsilon + ( c_\epsilon \epsilon + \frac{C'}{\sqrt{\numS}} \sqrt{J(s,a,t)})^2$ using
Lemma~\ref{lem:expsqVdiff}.     To keep the notation concise, we use here the
shorthand $J(s,a,t) = \frac{1}{n_{tk}(s, a)}\left( \llnp(n_{tk}(s,a)) + \frac 1
2 \ln \frac{3e^4 \numS^2 \numA H}{\delta'}\right)$.  This bound holds on all
friendly episodes except at most
$
\left( 32 \numA \numS H^2+ 48 \numA \numS^2 H^3+ \numA \numS^2 H^4 + 16 \numA \numS^2\right)  \polylog(\numS, \numA, H, 1/\delta, 1/\epsilon)
$
.
Plugging this into the bound from above, we get the upper bound
\begin{align}
    &\sum_{t=1}^H \sum_{s, a \in L_{tk}} w_{tk}(s, a)\sqrt{4\numS J(s,a,t) 
        \left( c_\epsilon \epsilon + ( c_\epsilon \epsilon + C' \sqrt{J(s,a,t) / \numS})^2 \right) } 
     + \sum_{t=1}^H \sum_{s, a \in L_{tk}} 2 w_{tk}(s, a) H \numS J(s,a,t)\\
     &\leq \sum_{t=1}^H \sum_{s, a \in L_{tk}} w_{tk}(s, a)\sqrt{4\numS J(s,a,t) c_\epsilon \epsilon}
+\sum_{t=1}^H \sum_{s, a \in L_{tk}} w_{tk}(s, a)\sqrt{4\numS J(s,a,t) 
        ( c_\epsilon \epsilon + C' \sqrt{J(s,a,t) / \numS})^2 } \\
        &+ \sum_{t=1}^H \sum_{s, a \in L_{tk}} 2 w_{tk}(s, a) H \numS J(s,a,t)\\
        = & \sum_{t=1}^H \sum_{s, a \in L_{tk}} w_{tk}(s, a) \sqrt{4(c_\epsilon \epsilon + c_\epsilon^2 \epsilon^2) \numS J(s,a,t)}
             + \sum_{t=1}^H \sum_{s, a \in L_{tk}} 2 w_{tk}(s, a)  J(s,a,t) (C' + \numS H),
    \end{align}
where we used $\sqrt{a+b} \leq \sqrt{a} + \sqrt{b}$.
We now bound the first term using Lemma~\ref{lem:mainratelemma} with $r = 2, \epsilon' = \epsilon / 6, D = \frac 1 2 \ln \frac{3 e^4 \numS^2 \numA H}{\delta'}, C = 4 (c_\epsilon \epsilon + c_\epsilon^2 \epsilon^2) \numS$ on
all but $\frac{8 C \numA \numS H^2 }{\epsilon'^2} \polylog(\dots) = \frac{192
c_\epsilon (1 + c_\epsilon \epsilon) \numA \numS^2 H^2 }{\epsilon}
\polylog(\dots)$ friendly episodes by $\epsilon / 6$. 

Applying
Lemma~\ref{lem:mainratelemma} with $r = 1, \epsilon' = \epsilon / 6, D =\frac 1 2 \ln \frac{3 e^4 \numS^2 \numA H}{\delta'}$ and
$C = 2(C' + H\numS)$, we can bound the second term by
$\epsilon / 6$ on all but 
$\frac{ 8 C \numA \numS H}{\epsilon'} \polylog(\dots) = \frac{ 96 \numA \numS (C' + H \numS) H^2 }{\epsilon} \polylog(\dots)$ friendly episodes.
Hence, it holds
    \begin{align}
        \sum_{t=1}^H \sum_{s, a \in L_{tk}} w_{tk}(s, a) |(\hat P_k(s, a, t) - P(s, a, t))^\top (\tilde V^{\pi_k}_{t+1} - V^{\star}_{t+1})| \leq \frac \epsilon 3
    \end{align}
    on all friendly episodes except at most
    \begin{align}
        \bigg(&
\frac{ 96 \numA \numS (C' + H \numS) H^2 }{\epsilon}
+ 
\frac{192
c_\epsilon (1 + c_\epsilon \epsilon) \numA \numS^2 H^2 }{\epsilon}
\\ &+ 
32 \numA \numS H^2+ 48 \numA \numS^2 H^3+ \numA \numS^2 H^4 + 16 \numA \numS^2
\bigg)\polylog(\numS, \numA, H, 1/\delta, 1/\epsilon)
    \end{align}
    episodes. Since $C' = \polylog(\numS, \numA, H, 1/\delta, 1/\epsilon)$, this simplifies to
    \begin{align}
        \bigg(&
        \frac{ 96 \numA \numS}{\epsilon} + \frac{96 \numA \numS^2 H^3 }{\epsilon}
+ 
\frac{64 \numA \numS^2 H^2}{\epsilon}
+ 64 \numA \numS^2 H^2\\
&+
32 \numA \numS H^2+ 48 \numA \numS^2 H^3+ \numA \numS^2 H^4 + 16 \numA \numS^2
\bigg)\polylog(\numS, \numA, H, 1/\delta, 1/\epsilon)\\
\leq & 
((64 + 32 + 48 + 1 + 16)\numA \numS^2 H^4 + \frac{96 + 96 + 64}{\epsilon} \numA \numS^2 H^3) \polylog(\numS, \numA, H, 1/\delta, 1/\epsilon)
    \end{align}
    failure episodes in $E$. We can finally bound the failure episodes by
    \begin{align}
        \frac{417 \numA \numS^2 H^4}{\epsilon}  \polylog(\numS, \numA, H, 1/\delta, 1/\epsilon).
    \end{align}
\end{proof}

\begin{lem}
    On the good event $F^c$ for any $s \in \statespace$, $a \in \actionspace$ and $t \in [H]$ with $\delta' \leq \frac{3 \numA \numS^2 H}{e^2}$ it holds
    \begin{align}
P( s, a, t)^\top
    (\tilde V^{\pi_k}_{t+1} - V^{\pi_k}_{t+1})^2
        \leq
        &
c_\epsilon \epsilon + 
    \left(
        c_\epsilon \epsilon + 
    \sqrt{\frac{1}{n_{tk}(s,a)\numS} \left( \llnp(n_{tk}(s,a) + \frac 1 2 \ln \frac {3 \numA \numS^2 H \epsilon^4}{\delta'}\right)} \right)^2
    \end{align}
    where $C' = 1 + \sqrt{ \frac 1 2 \ln \frac{3 e^2 \numS^2 \numA
H}{\delta'}}$
    on all friendly episodes except for at most
    \begin{align}
\left( 32 \numA \numS H^2+ 48 \numA \numS^2 H^3+ \numA \numS^2 H^4 + 16 \numA \numS^2\right)  \polylog(\numS, \numA, H, 1/\delta, 1/\epsilon)
    \end{align}
    episodes.
    \label{lem:expsqVdiff}
\end{lem}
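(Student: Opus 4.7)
The overall strategy parallels Lemma~\ref{lem:lowerorder}, but applied to the squared value difference instead of a first-order one. First, fix $s'$ and apply the value-difference identity (Lemma~\ref{lem:valuediff}) to express $\tilde V^{\pi_k}_{t+1}(s')-V^{\pi_k}_{t+1}(s')$ as a weighted sum over future $(s'',a'',u)$ with $u\in\{t+1,\ldots,H\}$, weighted by the conditional occupancies $w^{t+1}_{uk}(s'',a''\mid s',\pi_k(s',t+1))$, of the per-step reward and transition errors. On $F^c$ each per-step error is bounded by $O(H\phi_k(s'',a'',u))$ via the algorithm's planning constraint (Lemma~\ref{lem:tildepdiffV}) together with the concentration bounds $F^R$, $F^V$ and $F^{L1}$. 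Partition the $(s'',a'',u)$ summation into pairs with $w^{t+1}_{uk}(s'',a''\mid s',\pi_k(s',t+1))\ge\wminc$ and its complement; by the choice $\wminc=\epsilon c_\epsilon/(H^2S)$ the complement contributes at most $c_\epsilon\epsilon$ uniformly. For the high-visitation part, apply Lemma~\ref{lem:ratelemmacond} with $r=2$, conditioning on $(s',\pi_k(s',t+1))$ at time $t+1$, to obtain on all but polylogarithmically many friendly episodes a bound of the form
\begin{align*}
|\tilde V^{\pi_k}_{t+1}(s')-V^{\pi_k}_{t+1}(s')|\le c_\epsilon\epsilon+\epsilon'\sqrt{\frac{\llnp(n_{t+1,k}(s',a'_{s'}))+D+1}{n_{t+1,k}(s',a'_{s'})}},
\end{align*}
where $a'_{s'}=\pi_k(s',t+1)$.

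\textbf{Square and average.} Squaring and averaging against $P(s,a,t)$, together with $(a+b)^2\le 2a^2+2b^2$, produces the outer $c_\epsilon\epsilon+(c_\epsilon\epsilon+\cdots)^2$ structure in the claimed bound. The key remaining step is to bound $\sum_{s'}P(s'\mid s,a,t)\,(\llnp(n_{t+1,k}(s',a'_{s'}))+D)/n_{t+1,k}(s',a'_{s'})$ in terms of $1/(n_{tk}(s,a)\,S)$. I invoke the friendly lower bound of Lemma~\ref{lem:nicenessep}: whenever $P(s'\mid s,a,t)\ge\wminc$, one has $n_{t+1,k}(s',a'_{s'})\ge\tfrac14 n_{tk}(s,a)\sum_{i<k}w^{t+1}_{ti}(s',a'_{s'}\mid s,a)$. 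Contributions from $s'$ with $P(s'\mid s,a,t)<\wminc$ are absorbed into the $c_\epsilon\epsilon$ term because their total weight is at most $S\wminc\cdot H^2\le c_\epsilon\epsilon$. The $1/S$ factor in the denominator of the final rate is then extracted via Cauchy--Schwarz against the $S$ summands together with $\sum_{s'}P(s'\mid s,a,t)=1$, and by an appropriate choice of $\epsilon'$ in the Conditional Rate Lemma. Summing the per-$s'$ polylogarithmic failure-episode counts over all $s'\in\mathcal S$ and all $t\in[H]$ yields the total excluded friendly-episode count claimed in the statement.

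\textbf{Main obstacle.} The central technical difficulty is producing exactly the $1/S$ factor in the denominator of the rate term, which is what enables Lemma~\ref{lem:lowerorder} to achieve a linear-in-$S$ Uniform-PAC rate rather than the naive $S^{3/2}$ obtained from a straightforward pigeonhole. This forces a careful calibration of $\epsilon'$ in the Conditional Rate Lemma and a careful combination of the friendly lower bound with Cauchy--Schwarz so that the prior-visitation sum $\sum_{i<k}w^{t+1}_{ti}(s',a'_{s'}\mid s,a)$ is exploited at full strength despite the fact that $\pi_i$ may differ from $\pi_k$ at $(s',t+1)$ for many $i<k$. Bookkeeping of polylog and constant factors across the $S\cdot H$ invocations of Lemma~\ref{lem:ratelemmacond}, and verifying that the hypothesis $\delta'\le 3AS^2H/e^2$ is precisely what is needed to absorb the additive constants that appear inside $\llnp(\cdot)+D+1$, are the other places where careful attention is required.
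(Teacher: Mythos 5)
Your proposal follows essentially the same route as the paper's proof: the paper first establishes the pointwise bound $|\tilde V^{\pi_k}_{t+1}(s')-V^{\pi_k}_{t+1}(s')|\le c_\epsilon \epsilon + C'\sqrt{J(s')}$ for each successor $s'$ (Lemma~\ref{lem:Vdiffbound}, which you re-derive inline from Lemma~\ref{lem:valuediff} and the per-step concentration events), then splits successors by $\wminc$ to absorb low-probability states into $c_\epsilon\epsilon$, and finally applies Lemma~\ref{lem:ratelemmacond} conditioned on $(s,a)$ at time $t$ with $T=\{t+1\}$ and $\epsilon'=1/\numS$ (resp.\ $1/\sqrt{\numS}$) to turn $\sum_{s'}P(s'|s,a,t)J(s')$ and $\sum_{s'}P(s'|s,a,t)\sqrt{J(s')}$ into $\frac{1}{n_{tk}(s,a)\numS}$-type rates, using $P(s'|s,a,t)=w_{tk}^{t+1}(s',\pi_k(s',t{+}1)|s,a)$ -- which is exactly your mechanism for the $1/S$ factor. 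Two bookkeeping deviations are worth noting: the paper expands $(c_\epsilon\epsilon+C'\sqrt{J(s')})^2$ exactly and bounds the cross term $\sum_{s'}P(s'|s,a,t)\sqrt{J(s')}$ with a second ($r=2$) application of the conditional rate lemma, whereas your $(a+b)^2\le 2a^2+2b^2$ shortcut would not literally reproduce the stated $c_\epsilon\epsilon+\bigl(c_\epsilon\epsilon+\sqrt{\cdot}\bigr)^2$ form (only a constant-factor relaxation of it); and the paper charges the pointwise-bound failure episodes once, yielding the stated count $\bigl(32\numA\numS H^2+48\numA\numS^2H^3+\numA\numS^2H^4+16\numA\numS^2\bigr)\polylog$, while your plan of summing per-$s'$ (and per-$t$) failure counts would inflate the excluded-episode count by roughly a factor of $\numS H$ relative to the statement.
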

\begin{proof}
    Define $L' = \{s': w_{tk}^{t+1}(s', a' | s, a) > \wminc\}$ and $J(s') =
    \frac{ \llnp n_{t+1k}(s',a') + \frac 1 2 \ln \frac{3 e^2 \numS^2 \numA
    H}{\delta'}}{n_{t+1k}(s',a')}$ where $a' = \pi_k(s', t+1)$ and $C' = 1 + \sqrt{ \frac 1 2 \ln \frac{3 e^2 \numS^2 \numA
H}{\delta'}}$.
    Using Lemma~\ref{lem:Vdiffbound}, we bound
    \begin{align}
 & P( s, a, t)^\top
    (\tilde V^{\pi_k}_{t+1} - V^{\pi_k}_{t+1})^2
    = 
    \sum_{s'} P(s' | s, a, t)
    (\tilde V^{\pi_k}_{t+1}(s') - V^{\pi_k}_{t+1}(s'))^2\\
    \leq & \numS w_{\min}H^2 + 
    \sum_{s' \in L'} P(s' | s, a, t)
    \left(c_\epsilon \epsilon + C'\sqrt{ J(s')}\right)^2\\
\leq & c_\epsilon \epsilon + 
    C'^2\sum_{s' \in L'} P(s' | s, a, t) J(s')
    + c_\epsilon^2 \epsilon^2
    + 2c_\epsilon \epsilon C' \sum_{s' \in L'} P(s' | s, a, t)\sqrt{J(s')}
\end{align}
on all friendly episodes except at most 
$\left( 32 + 48 \numS H + \numS H^2 \right) \numA \numS H^2 \polylog(\numS, \numA, H, 1/\delta, 1/\epsilon)$. 
Define now $L'' = \{(s', a') \, : \, s' \in L', a' = \pi_k(s', t+1)\}$. 
We apply Lemma~\ref{lem:ratelemmacond} with $|T| = \{t+1\}, C = 1, D = \frac 1 2 \ln \frac {3 e^2 \numS^2 \numA H}{\delta'} \geq 1, r = 1$ and $\epsilon' = 1 / \numS$ to
\begin{align}
\sum_{s' \in L'} P(s' | s, a, t) J(s')
=&
    \sum_{s',a' \in L''}  \frac{w_{tk}^{t+1}(s', a' | s, a)}{n_{t+1k}(s', a')}\left(\llnp n_{t+1k}(s',a') + \frac 1 2 \ln \frac{3 e^2 \numS^2 \numA
        H}{\delta'}\right)\\
        \leq & 
        \frac{1}{n_{tk}(s,a) \numS} \left(\llnp(n_{tk}(s,a)) + \frac 1 2 \ln \frac{3 \numS^2 \numA H e^4}{\delta'}\right)
\end{align}
on all but at most $8 \numA \numS^2 \polylog(\numA, \numS, H, 1/\delta, 1/\epsilon)$ friendly episodes.
Similarly, we bound
\begin{align}
    &\sum_{s' \in L'} P(s' | s, a, t) \sqrt{J(s')}\\
=&
    \sum_{s',a' \in L''}w_{tk}^{t+1}(s', a' | s, a)  \sqrt{\frac{1}{n_{t+1k}(s', a')}\left(\llnp n_{t+1k}(s',a') + \frac 1 2 \ln \frac{3 e^2 \numS^2 \numA
    H}{\delta'}\right)}\\
    \leq & \sqrt{
\frac{1}{n_{tk}(s,a) \numS} \left(\llnp(n_{tk}(s,a)) + \frac 1 2 \ln \frac{3 \numS^2 \numA H e^4}{\delta'}\right)}
\end{align}
on all but at most $8 \numA \numS^2 \polylog(\numA, \numS, H, 1/\delta, 1/\epsilon)$ friendly episodes.
Hence on all friendly episodes except those failure episodes, we get 
    \begin{align}
 P( s, a, t)^\top
    (\tilde V^{\pi_k}_{t+1} - V^{\pi_k}_{t+1})^2
    \leq & c_\epsilon \epsilon + 
    \left(
        c_\epsilon \epsilon + 
    \sqrt{\frac{C'^2}{n_{tk}(s,a)\numS} \left( \llnp(n_{tk}(s,a) + \frac 1 2 \ln \frac {3 \numA \numS^2 H \epsilon^4}{\delta'}\right)} \right)^2.
\end{align}
\end{proof}

\begin{lem}
    Consider a fix $s' \in \statespace$ and $t \in [H]$, $\delta' \leq \frac{3 \numA \numS^2 H}{e^2}$ and the good event $F^c$.
    On all but at most 
    \begin{align}
        \left( 32 + 48 \numS H + \numS H^2 \right) \numA \numS H^2 \polylog(\numS, \numA, H, 1/\delta, 1/\epsilon)
    \end{align}
    friendly episodes $E$ it holds that
    \begin{align}
        V^{\pi_k}_{t}(s') - \tilde V^{\pi_k}_{t}(s')
        \leq& c_\epsilon \epsilon + \left( 1 + \sqrt{1 2 \ln \frac{3 e^2 \numS^2 \numA
H}{\delta'}} \right)
\sqrt{\frac{1}{n_{tk}(s',a')} \left(\llnp n_{tk}(s',a') + \frac 1 2 \ln \frac{3 e^2 \numS^2 \numA
H}{\delta'}\right)},
    \end{align}
    where $a' = \pi_k(s',t)$.
    \label{lem:Vdiffbound}
\end{lem}
\begin{proof}
    For any $t$,$s'$ and $a' = \pi_k(s', t)$ we use Lemma~\ref{lem:valuediff} to write the value difference as
\begin{align}
    \tilde V^{\pi_k}_{t}(s') - V^{\pi_k}_{t}(s')
=& \sum_{u=t}^H \sum_{s,a} w_{tk}^u(s, a| s', a') (\tilde P_k(s, a, u) - P(s, a, u))^\top \tilde V_{u+1}\\
    &+ \sum_{u=t}^H \sum_{s,a} w_{tk}^u(s, a| s', a') (\tilde r_k(s,a,u) - r(s,a, u))
\end{align}
    Let $L_k^{ut} = \{ s,a \in \saspace \, : \, w^u_{tk}(s, a | s', a') \geq w_{\min} \}$ be the set of state-action pairs for which the conditional probability of observing is sufficiently large.
    Then we can bound the low-probability differences as
    \begin{align}
        & \sum_{u=t}^H \sum_{s,a \in (L_k^{ut})^c} w_{tk}^u(s, a| s', a') [(\tilde r_k(s,a,u) - r(s,a, u)) + (P(s, a, u) - \tilde P_k(s, a, u))^\top \tilde V_{u+1}]\\
        \leq & \sum_{u=t}^H \sum_{s,a \in (L_k^{ut})^c} w_{\min} H 
        \leq w_{\min} H^2 \numS = c_\epsilon \epsilon.
    \end{align}
    For the other terms with significant conditional probability, we can leverage the fact that we only consider events in $(F_k^{R})^c$ and $(F_k^{P})^c$ to bound
    \begin{align}
        & \sum_{u=t}^H \sum_{s,a \in L_k^{ut}} w_{tk}^u(s, a| s', a') (\tilde r_k(s,a,u) - r(s,a, u)) \\
        \leq & \sum_{u=t}^H \sum_{s,a \in L_k^{ut}} w_{tk}^u(s, a| s', a') \sqrt{\frac{32} {n_{tk}(s, a)} 
        \left(\llnp(n_{tk}(s, a))+ \frac 1 2\ln \frac{3 \numS \numA H}{\delta'}\right)}
    \end{align}
    and 
    \begin{align}
        &\sum_{u=t}^H \sum_{s,a \in L_k^{ut}} w_{tk}^u(s, a| s', a') (P(s, a, u) - \tilde P_k(s, a, u))^\top \tilde V_{u+1}\\
        \leq & 
        \sum_{u=t}^H \sum_{s,a \in L_k^{ut}} w_{tk}^u(s, a| s', a') \sum_{s''} \tilde V_{u+1}(s'') 
            \sqrt{\frac{2 P(s'' | s,a ,u)}{n_{uk}(s, a)} 
                \left(2 \llnp(n_{uk}(s, a))+ \ln \frac{3 \numS^2 \numA H}{\delta'}\right)
    }\\
    & + \sum_{u=t}^H \sum_{s,a \in L_k^{ut}} w_{tk}^u(s, a| s', a') \sum_{s''}  
    \frac{\tilde V_{u+1}(s'')}{n_{uk}(s, a)} 
    \left(2 \llnp(n_{uk}(s, a))+ \ln \frac{3 \numS^2 \numA H}{\delta'}\right)       \\
        \leq&  
        \sum_{u=t}^H \sum_{s,a \in L_k^{ut}} w_{tk}^u(s, a| s', a') 
            \sqrt{\frac{2 \numS H^2}{n_{uk}(s, a)} 
                \left(2 \llnp(n_{uk}(s, a))+ \ln \frac{3 \numS^2 \numA H}{\delta'}\right)
    }\\
    & + \sum_{u=t}^H \sum_{s,a \in L_k^{ut}} w_{tk}^u(s, a| s', a')   
    \frac{\numS H}{n_{uk}(s, a)} 
    \left(2 \llnp(n_{uk}(s, a))+ \ln \frac{3 \numS^2 \numA H}{\delta'}\right)
\end{align}
where we use Cauchy Schwarz for the last inequality.
Combining these individual bounds, we can upper-bound the value difference as
\begin{align}
    & \tilde V^{\pi_k}_{t}(s') - V^{\pi_k}_{t}(s')\\
 \leq & c_\epsilon \epsilon + 
        \sum_{u=t}^H \sum_{s,a \in L_k^{ut}} w_{tk}^u(s, a| s', a') 
        \sqrt{\frac{(4 \sqrt 2 + 2 \sqrt{\numS} H)^2}{n_{uk}(s, a)} 
                \left(\llnp(n_{uk}(s, a))+ \frac 1 2\ln \frac{3 \numS^2 \numA H}{\delta'}\right)
    }\\
    & + \sum_{u=t}^H \sum_{s,a \in L_k^{ut}} w_{tk}^u(s, a| s', a')   
    \frac{2\numS H}{n_{uk}(s, a)} 
    \left(\llnp(n_{uk}(s, a))+ \frac 1 2 \ln \frac{3 \numS^2 \numA H}{\delta'}\right)
    \label{eqn:123a}
\end{align}

We now apply Lemma~\ref{lem:ratelemmacond} with
$r = 2, D = \frac 1 2 \ln \frac {3 \numS^2 \numA H}{\delta'}, C = (4 \sqrt 2 + 2 \sqrt{\numS} H)^2, T=\{t+1, t+2, \dots H\}$
and $\epsilon' = 1$ and get that
the second term above is bounded by 
\begin{align}
    \sqrt{\frac{1}{n_{tk}(s',a')}\left(\llnp n_{tk}(s',a') + \frac 1 2 \ln \frac{3 e^2 \numS^2 \numA
H}{\delta'}\right)}
\end{align}
on all friendly episodes but at most 
\begin{align}
    \frac{8C \numA \numS
H^2}{\epsilon'^2}  \polylog(\numS, \numA, H, 1/\delta, 1/\epsilon)=  
(32 + 16 \sqrt{2\numS} H + \numS H^2 )\numA \numS H^2 \polylog(\numS, \numA, H, 1/\delta, 1/\epsilon)
\end{align}
episodes.
We apply Lemma~\ref{lem:ratelemmacond} again to the final term in Equation~\eqref{eqn:123a} above with $r = 1, D = \frac 1 2 \ln \frac{3 \numS^2 \numA H}{\delta'} \geq 1,T=\{t+1, t+2, \dots H\}, C = 2 \numS H$ and $\epsilon' = 1$.
Then the final term is bounded by $
\frac{1}{n_{tk}(s',a')} \left( \llnp n_{tk}(s',a') + \frac 1 2 \ln \frac{3 e^2 \numS^2 \numA H}{\delta'} \right)$.
on all friendly episodes  but 
\begin{align}
    \frac{8 C \numA \numS
H^2}{\epsilon'^2}  \polylog(\numS, \numA, H, 1/\delta, 1/\epsilon) = 
16 \numA \numS^2 H^3 \polylog(\numS, \numA, H, 1/\delta, 1/\epsilon)
\end{align}
many.
Combining these bounds, we arrive at
    \begin{align}
        & V^{\pi_k}_{t}(s') - \tilde V^{\pi_k}_{t}(s')\\
        \leq& c_\epsilon \epsilon + \sqrt{\frac{1}{n_{tk}(s',a')}
\left(\llnp n_{tk}(s',a') + \frac 1 2 \ln \frac{3 e^2 \numS^2 \numA
H}{\delta'}\right)}\\ &+ 
\frac{1}{n_{tk}(s',a')} \left( \llnp n_{tk}(s',a') + \frac 1 2 \ln \frac{3 e^2 \numS^2 \numA H}{\delta'} \right)\\
\leq & c_\epsilon \epsilon + \left( 1 + \sqrt{ \frac{1}{2} \ln \frac{3 e^2 \numS^2 \numA
H}{\delta'}} \right)
\sqrt{\frac{1}{n_{tk}(s',a')} \left(\llnp n_{tk}(s',a') + \frac 1 2 \ln \frac{3 e^2 \numS^2 \numA
H}{\delta'}\right)},
    \end{align}
    where we bounded $\sqrt{\frac{1}{n_{tk}(s',a')} \left(\llnp n_{tk}(s',a') + \frac 1 2 \ln \frac{3 e^2 \numS^2 \numA
    H}{\delta'}\right)}$ by $\frac 1 2 \ln \frac{3 e^2 \numS^2 \numA H}{\delta'}$ since it is decreasing in $n_{tk}(s',a')$ and we therefore can simply use $n_{tk}(s',a') = 1$ (entire bound holds trivially for $n_{tk}(s',a') = 0$).
\end{proof}

\subsection{Useful Lemmas}

\begin{lem}[Value Difference Lemma]
For any two MDPs $M'$ and $M''$ with rewards $r'$ and $r''$ and transition probabilities $P'$ and $P''$, the difference in values with respect to the same policy $\pi$ can be written as
\begin{align}
V'_i(s) - V''_i(s) = 
\Ex'' \left[\sum_{t=i}^H (r'(s_t, a_t, t) - r''(s_t, a_t, t)) \bigg| s_i = s\right]
+\Ex'' \left[\sum_{t=i}^H (P'(s_t, a_t, t) - P''(s_t, a_t, t))^\top V'_{t+1} \bigg| s_i = s\right]
\end{align}
where $V'_{H+1} = V''_{H+1} = \vec 0$ and the expectation $\Ex'$ is taken w.r.t to $P'$ and $\pi$ and $\Ex''$ w.r.t. $P''$ and $\pi$.
\label{lem:valuediff}
\end{lem}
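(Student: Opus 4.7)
The plan is to prove the value difference identity by backward induction on the time index $i$, running from $H+1$ down to $1$. The base case $i=H+1$ is immediate since $V'_{H+1}=V''_{H+1}=\vec 0$ by definition and the two sums are empty.

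For the inductive step, assume the identity holds at $i+1$ for every state. Starting from the Bellman equations
\begin{align*}
V'_i(s) = r'(s,\pi(s,i),i) + P'(s,\pi(s,i),i)^\top V'_{i+1}, \quad
V''_i(s) = r''(s,\pi(s,i),i) + P''(s,\pi(s,i),i)^\top V''_{i+1},
\end{align*}
I subtract and add and subtract $P''(s,\pi(s,i),i)^\top V'_{i+1}$ to obtain
\begin{align*}
V'_i(s) - V''_i(s)
&= \bigl(r'(s,\pi(s,i),i) - r''(s,\pi(s,i),i)\bigr)
 + \bigl(P'(s,\pi(s,i),i) - P''(s,\pi(s,i),i)\bigr)^\top V'_{i+1} \\
&\quad + P''(s,\pi(s,i),i)^\top \bigl(V'_{i+1} - V''_{i+1}\bigr).
\end{align*}
The last term is exactly $\Ex''[V'_{i+1}(s_{i+1}) - V''_{i+1}(s_{i+1}) \mid s_i = s]$ since the next state under policy $\pi$ in $M''$ is distributed according to $P''(s,\pi(s,i),i)$.

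Applying the inductive hypothesis inside this conditional expectation and then using the tower property of conditional expectation (conditioning on $s_{i+1}$ and then integrating) collapses the one-step expectation with the sum from $t=i+1$ to $H$ into a single expectation of a sum from $t=i+1$ to $H$ under $\Ex''[\,\cdot\mid s_i=s]$. The remaining reward and transition-difference terms at time $t=i$ are deterministic given $s_i=s$ and $a_i = \pi(s,i)$, so they are trivially equal to their conditional expectations under $\Ex''[\,\cdot\mid s_i=s]$. Combining these contributes the $t=i$ summand, completing the sum from $t=i$ to $H$ and finishing the induction.

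There is no real obstacle here; this is a standard telescoping/simulation-lemma style argument. The only care needed is to correctly identify that the expectation is taken under $P''$ (so that the $(P'-P'')^\top V'_{t+1}$ terms, not $(P'-P'')^\top V''_{t+1}$ terms, appear in the final formula), which is exactly what the add-and-subtract choice above enforces.
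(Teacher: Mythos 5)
Your proof is correct and takes essentially the same route as the paper's: backward induction on $i$ with the same add-and-subtract of $P''(s,\pi(s,i),i)^\top V'_{i+1}$ in the Bellman equation, identifying the term $P''^\top(V'_{i+1}-V''_{i+1})$ as $\Ex''[V'_{i+1}(s_{i+1})-V''_{i+1}(s_{i+1})\mid s_i=s]$ and collapsing via the law of total expectation. Your closing remark about why the decomposition yields $(P'-P'')^\top V'_{t+1}$ rather than $(P'-P'')^\top V''_{t+1}$ correctly pinpoints the only place where care is needed.
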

\begin{proof}
    For $i = H+1$ the statement is trivially true. We assume now it holds for $i+1$ and show it holds also for $i$.
Using only this induction hypothesis and basic algebra, we can write
    \begin{align}
& V'_i(s) - V''_i(s)\\
= & \Ex_\pi[ r'(s_i,a_i,i) + {V'_{i+1}}^\top P'(s_i,a_i,i) - r''(s_i,a_i,i) - {V''_{i+1}}^\top P''(s_i,a_i,i) | s_i = s] \\
= & \Ex_\pi[r'(s_i,a_i,i) - r''(s_i,a_i,i) | s_i = s]  
  + \Ex_\pi\left[\sum_{s' \in \statespace} V'_{i+1}(s') (P'(s' |s_i,a_i,i) - P''(s' | s_i,a_i,i)) \bigg| s_i = s\right]\\
  & + \Ex_\pi\left[\sum_{s' \in \statespace} P''(s' | s_i,a_i,i) (V'_{i+1}(s') - V''_{i+1}(s')) \bigg| s_i = s\right]\\
= & \Ex_\pi[r'(s_i,a_i,i) - r''(s_i,a_i,i) | s_i = s]  
  + \Ex_\pi\left[\sum_{s' \in \statespace} V'_{i+1}(s') (P'(s' |s_i,a_i,i) - P''(s' | s_i,a_i,i)) \bigg| s_i = s\right]\\
  & + \Ex''\left[V'_{i+1}(s_{i+1}) - V''_{i+1}(s_{i+1})) \bigg| s_i = s\right]\\
= & \Ex_\pi[r'(s_i,a_i,i) - r''(s_i,a_i,i) | s_i = s]  
  + \Ex_\pi\left[\sum_{s' \in \statespace} V'_{i+1}(s') (P'(s' |s_i,a_i,i) - P''(s' | s_i,a_i,i)) \bigg| s_i = s\right]\\
  & + \Ex''\left[
\Ex'' \left[\sum_{t=i+1}^H (r'(s_t, a_t, t) - r''(s_t, a_t, t)) \bigg| s_{i+1} \right]
+\Ex'' \left[\sum_{t=i+1}^H (P'(s_t, a_t, t) - P''(s_t, a_t, t))^\top V'_{t+1} \bigg| s_{i+1} \right]
 \bigg| s_i = s\right]\\
= & 
\Ex'' \left[\sum_{t=i}^H (r'(s_t, a_t, t) - r''(s_t, a_t, t)) \bigg| s_i = s\right]
+\Ex'' \left[\sum_{t=i}^H (P'(s_t, a_t, t) - P''(s_t, a_t, t))^\top V'_{t+1} \bigg| s_i = s\right]
    \end{align}
where the last equality follows from law of total expectation
\end{proof}
\begin{lem}[Algorithm ensures optimism]
    On the good event $F^c$ it holds that for all episodes $k$, $t \in [H]$, $s \in \statespace$ that
    \begin{align}
        V^{\pi_k}_t(s) \leq V^{\star}_t(s) \leq \tilde V^{\pi_k}_t(s).
    \end{align}
    \label{lem:optimism}
\end{lem}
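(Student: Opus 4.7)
The lower bound $V^{\pi_k}_t(s) \leq V^\star_t(s)$ is immediate from the definition of $V^\star$ as the optimal value function, so the entire work is in the upper bound $V^\star_t(s) \leq \tilde V^{\pi_k}_t(s)$. The plan is the standard optimism-by-backward-induction argument: prove the upper bound simultaneously for all $s$ by induction on $t$ running from $H+1$ down to $1$. The base case $t=H+1$ is trivial since $V^\star_{H+1} = \tilde V_{H+1} \equiv 0$.

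For the inductive step, fix an arbitrary state $s$ and let $a^\star = \pi^\star(s,t)$ be any optimal action in the true MDP, so that $V^\star_t(s) = r(s,a^\star,t) + P(s,a^\star,t)^\top V^\star_{t+1}$. Since $\tilde V_t(s) = \max_a Q(a) \geq Q(a^\star)$, it suffices to show $Q(a^\star) \geq V^\star_t(s)$. I split $Q(a^\star)$ into its reward and next-state components. For the reward term, the rewards lie in $[0,1]$ and on $(F^R_k)^c$ the empirical reward satisfies $|\hat r - r(s,a^\star,t)| \leq \phi(s,a^\star,t)$ (up to the constants matched through $\delta' = \delta/9$), so $\min\{1, \hat r + \phi\} \geq \min\{1, r(s,a^\star,t)\} = r(s,a^\star,t)$. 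For the next-state term, I use the inductive hypothesis $V^\star_{t+1} \leq \tilde V_{t+1}$ pointwise to bound $\hat P_k(s,a^\star,t)^\top \tilde V_{t+1} \geq \hat P_k(s,a^\star,t)^\top V^\star_{t+1}$, and then invoke $(F^V_k)^c$ together with $\range(V^\star_{t+1}) \leq H-t$ to deduce
\begin{align*}
\hat P_k(s,a^\star,t)^\top V^\star_{t+1} + (H-t)\phi(s,a^\star,t) \geq P(s,a^\star,t)^\top V^\star_{t+1}.
\end{align*}
Separately, $\max \tilde V_{t+1} \geq \max V^\star_{t+1} \geq P(s,a^\star,t)^\top V^\star_{t+1}$ by the inductive hypothesis. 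Taking the minimum of these two upper estimates still dominates $P(s,a^\star,t)^\top V^\star_{t+1}$, so adding the reward bound yields $Q(a^\star) \geq r(s,a^\star,t) + P(s,a^\star,t)^\top V^\star_{t+1} = V^\star_t(s)$, completing the induction.

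The only genuinely delicate point is verifying that the confidence radius $\phi(s,a,t)$ built into the planner is at least as wide as the deviations controlled by the failure events $F^V_k$ and $F^R_k$; this is essentially bookkeeping on the $\ln(18SAH/\delta)$ versus $\ln(3SAH/\delta')$ constants, together with the edge case $n(s,a,t)=0$, which is handled by the $\max\{e, n\}$ convention in $\phi$ and the explicit $\max \tilde V_{t+1}$ cap in line~\ref{lin:Qcomp} of Algorithm~\ref{alg:fhalg} (making the bound trivial when no data has been collected). I do not expect any real obstacle beyond this accounting: once the constants line up, the optimism is purely algebraic and the induction closes.
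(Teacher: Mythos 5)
Your proof is correct, but it takes a different (more elementary) route than the paper. The paper's own proof is a two-liner: on $F^c$ the true parameters $(P, r)$ together with $\pi^{\star}$ and $V^{\star}$ form a \emph{feasible} point of the constrained optimization problem characterized in Lemma~\ref{lem:optplaninterpret}, and since \algname computes an optimal solution of that problem, its value dominates; no explicit induction appears in the optimism lemma itself (it is hidden inside the proof of Lemma~\ref{lem:optplaninterpret}). You instead run the backward induction directly on the $Q$-recursion of Line~\ref{lin:Qcomp}, plugging in $a^{\star} = \pi^{\star}(s,t)$ and using $(F^R_k)^c$, $(F^V_k)^c$ with $\range(V^{\star}_{t+1}) \leq H-t$, plus the $\min\{1,\cdot\}$ and $\max \tilde V_{t+1}$ caps. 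What your version buys: it is self-contained (no appeal to the planning-problem characterization) and it directly establishes the pointwise claim $V^{\star}_t(s) \leq \tilde V^{\pi_k}_t(s)$ for \emph{all} $t$ and $s$, which is what the lemma states and what later lemmas (e.g.\ Lemma~\ref{lem:expsqVdiff}) actually use, whereas the paper's proof as written only explicitly concludes $p_0^\top \tilde V^{\pi_k}_1 \geq p_0^\top V^{\star}_1$; you also correctly invoke $F^R_k$, which the paper's proof omits even though feasibility of $r$ needs it. What the paper's version buys: brevity, since the induction is already done once in Lemma~\ref{lem:optplaninterpret}. One caveat on the ``bookkeeping'' you flag: as the paper defines things it does \emph{not} quite line up --- $\phi$ uses $\ln(18 S A H/\delta) = \ln(2 S A H/\delta')$ while $F^V_k$ and $F^R_k$ use $\ln(3 S A H/\delta') = \ln(27 S A H/\delta)$, so the failure-event radii are slightly wider than $\phi$ and the complement of $F$ does not literally certify the planner's constraint; this constant mismatch is inherited from the paper (its own feasibility argument glosses over the same point) and is not a defect specific to your argument, but it does need a harmonization of constants to make either proof airtight.
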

\begin{proof}
The first inequality follows simply from the definition of the optimal value
function $V^{\star}$. 

Since all outcome we consider are in the event $(F^V_k)^c$, we know that the
true transition probabilities $P$, the optimal policy $\pi^{\star}$ and optimal
policy $V^{\star}$ are a feasible solution for the optimistic planning problem in Lemma~\ref{lem:optplaninterpret} that \algname solves.
It therefore follows immediately that $p_0^\top \tilde V^{\pi_k}_1 \geq p_0^\top V^{\star}_1$.
\end{proof}

\section{General Concentration Bounds}
\label{sec:concentrationproofs}
\begin{lem}
Let $X_1,X_2,\ldots$ be a martingale difference sequence adapted to filtration $\set{\mathcal F_t}_{t=1}^\infty$ with $X_t$ conditionally $\sigma^2$-subgaussian
    so that $\Ex[\exp(\lambda(X_t-\mu))|\mathcal F_{t-1}] \leq\exp(\lambda^2 \sigma^2 / 2)$ almost surely for all $\lambda \in \reals$.
Then with $\hat \mu_t = \frac{1}{t} \sum_{i=1}^t X_i$ we have for all $\delta \in (0, 1]$
\begin{align}
    \prob\left(\exists t : \left|\hat \mu_t - \mu\right| \geq \sqrt{\frac{4\sigma^2}{t} \left(2\llnp(t) + \ln \frac{3}{\delta}\right)}\right) \leq 2\delta\,.
\end{align}
    \label{lem:uniformhoeffding}
\end{lem}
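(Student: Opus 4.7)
Let $S_t = \sum_{i=1}^t X_i$. Since $\{X_t\}$ is a martingale difference sequence, $\mu = 0$ and the claim reduces to showing
\[
\prob\!\left(\exists t : |S_t| \geq 2\sigma\sqrt{t\bigl(2\llnp(t) + \ln(3/\delta)\bigr)}\right) \leq 2\delta.
\]
My plan is a standard dyadic peeling (``stitching'') argument built on top of a maximal Azuma--Hoeffding inequality.

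The key ingredient is the exponential supermartingale $M_t^\lambda := \exp\bigl(\lambda S_t - \tfrac12 t\lambda^2\sigma^2\bigr)$, which is nonnegative and satisfies $\Ex[M_t^\lambda \mid \mathcal F_{t-1}] \leq M_{t-1}^\lambda$ by the conditional subgaussian hypothesis. Doob's maximal inequality applied to $M_t^{\pm\lambda}$ together with the usual Chernoff optimisation over $\lambda$ yields the maximal Azuma--Hoeffding bound: for every $n\in\mathbb N$ and $b>0$,
\[
\prob\Bigl(\max_{t\leq n} |S_t| \geq b \Bigr) \leq 2\exp\!\Bigl(-\tfrac{b^2}{2n\sigma^2}\Bigr).
\]

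I would then partition $\mathbb N$ into dyadic blocks $B_k := \{t : 2^k \leq t < 2^{k+1}\}$ for $k \geq 0$ and apply the maximal bound on each block with $n_k = 2^{k+1}$ and threshold $b_k := 2\sigma\sqrt{2^k c_k}$, where $c_k := 2\llnp(2^k) + \ln(3/\delta)$. The failure probability on block $k$ is then at most $2\exp(-c_k) = (2\delta/3)\exp(-2\llnp(2^k))$, and on the good event, for every $t \in B_k$, using $t \geq 2^k = n_k/2$ and monotonicity of $\llnp$,
\[
|S_t| \leq b_k = 2\sigma\sqrt{2^k c_k} \leq 2\sigma\sqrt{t\bigl(2\llnp(t)+\ln(3/\delta)\bigr)}.
\]
A union bound over $k$ together with the fact that $\sum_{k\geq 0}\exp(-2\llnp(2^k))$ is a convergent series (the tail behaves like $\sum 1/(k\ln 2)^2$) closes out the total failure probability at $2\delta$.

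\textbf{Main obstacle.} Conceptually the argument is routine. The piece requiring genuine care is the arithmetic of the constants: the ``$2$'' multiplying $\llnp$, the ``$3$'' inside $\ln(3/\delta)$, and the ``$4$'' under the outer square root must be chosen so that the series $\sum_k \exp(-2\llnp(2^k))$ collapses to a value consistent with the stated $2\delta$. In particular, the small-$k$ blocks, where $\llnp(2^k) = 0$ and each block alone contributes $\Theta(\delta)$, need to be absorbed into the $\ln(3/\delta)$ budget, and one must check numerically that the geometric-tail contribution does not push the total over. If the constants are found to be slightly too tight, a standard fix is to peel on a geometric sequence with base other than $2$, or to replace peeling by a method-of-mixtures argument applied to $M_t^\lambda$ with a prior concentrated near $\lambda \sim 1/\sqrt{t}$; either way, the analytical content remains exactly the Azuma bound plus a union bound over block indices.
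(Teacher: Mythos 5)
Your proposal follows essentially the same route as the paper's proof: define $S_t$, peel over dyadic blocks $[2^k,2^{k+1}]$, apply Doob's maximal inequality to the exponential (sub/super)martingale with the Chernoff-optimal $\lambda$, calibrate the per-block threshold at $2^k$ so that each block and each tail fails with probability at most $(\delta/3)\exp(-2\llnp(2^k))$, and union bound over blocks and the two tails. The constant bookkeeping you flag is handled in the paper exactly as you anticipate --- the base-2 peeling is kept and the series $\sum_{k\ge 0}\min\{1,(k\ln 2)^{-2}\}$ (two unit terms from $k=0,1$ plus a convergent tail) is absorbed into the $\ln(3/\delta)$ budget --- so no change of peeling base or method of mixtures is needed.
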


\begin{proof}
Let $S_t = \sum_{s=1}^t (X_s - \mu)$. Then
\begin{align}
    & \prob\left(\exists t : \hat \mu_t - \mu \geq \sqrt{\frac{4\sigma^2}{t} \left(2\llnp (t) + \ln \frac{3}{\delta}\right)}\right) \\
    \leq& \prob\left(\exists t : S_t \geq \sqrt{4\sigma^2 t \left(2\llnp(t)  + \ln \frac{3}{\delta}\right)}\right) \\
    \leq& \sum_{k=0}^\infty \prob \left(\exists t \in [2^k, 2^{k+1}] : S_t \geq \sqrt{4\sigma^2 t \left(2\llnp(t) + \ln \frac{3}{\delta}\right)} \right) \\
    \leq& \sum_{k=0}^\infty \prob \left(\exists t \leq 2^{k+1} : S_t \geq \sqrt{2 \sigma^2 2^{k+1} \left(2\llnp(2^k) + \ln \frac{3}{\delta}\right)} \right)
\end{align}
We now consider $M_t = \exp(\lambda S_t)$ for $\lambda > 0$ which is a nonnegative sub-martingale and use the short-hand $f = \sqrt{2 \sigma^2 2^{k+1} \left(2\llnp(2^k) + \ln \frac{3}{\delta}\right)}$.
Then by Doob's maximal inequality for nonnegative submartingales
    \begin{align}
        \prob\left( \exists t \leq 2^{k+1} : S_t \geq f\right)
        = \prob\left( \max_{t \leq 2^{k+1}} M_t \geq \exp(\lambda f) \right)
        \leq \frac{ \Ex[M_{2^{k+1}}] }{\exp(\lambda f)}
        \leq \exp\left( 2^{k+1} \frac{\lambda^2 \sigma^2}{2} - \lambda f \right).
    \end{align}
    Choosing the optimal $\lambda = \frac{f}{\sigma^2 2^{k+1}}$ we obtain the bound
    \begin{align}
        \prob\left( \exists t \leq 2^{k+1} : S_t \geq f\right)
        \leq& \exp \left( -\frac{f^2}{2^{k+2} \sigma^2} \right)
        = \exp \left(- 2 \llnp(2^k) - \ln \frac 3 \delta \right)
        = \frac \delta 3 \exp \left(-2 \llnp(2^k) \right)\label{eqn:exp-glem1} \\
        =& \frac \delta 3 \exp \left( - \max\{0, 2\ln \max\{0, \ln 2^k \}\} \right)
        = \frac \delta 3 \min \left\{ 1, (k\ln 2)^{-2} \right\}\\
        \leq & \frac \delta 3 \min \left\{ 1, \frac{1}{k^2 \ln 2} \right\}.
    \end{align}
    Plugging this back in the bound from above, we get
\begin{align}
     \prob\left(\exists t : \hat \mu_t - \mu \geq \sqrt{\frac{4\sigma^2}{t} \left(2\llnp (t) + \ln \frac{3}{\delta}\right)}\right) 
    \leq& \frac{\delta}{3} \sum_{k=0}^\infty \min\set{1, \frac{1}{k^2 \ln(2)}} \\
    =& \delta\ \frac{1}{3}\left( \frac{\pi^2}{6 \ln 2} + 2 - 1 / \ln(2)\right) \leq \delta\,.
    \label{eqn:exp-glem1fin}
\end{align}
For the other side, the argument follows completely analogously with
\begin{align}
    \prob\left( \exists t \leq 2^{k+1} : S_t \leq -f \right)
    =& \prob\left( \exists t \leq 2^{k+1} : -S_t \geq f \right)
\\
    =& \prob\left( \max_{t \leq 2^{k+1}} \exp(- \lambda S_t) \geq \exp(\lambda f) \right)
    \\
    \leq & \frac{ \Ex[\exp(-\lambda S_{2^{k+1}})] }{\exp(\lambda f)}
    \leq \exp\left( 2^{k+1} \frac{\lambda^2 \sigma^2}{2} - \lambda f \right).
\end{align}

\end{proof}

\begin{lem}
Let $X_1,X_2,\ldots$ be a sequence of Bernoulli random variables with bias $\mu \in [0,1]$. 
Then for all $\delta \in (0, 1]$
\begin{align}
    \prob \left(\exists t : \left|\hat \mu_t - \mu\right| 
    \geq \sqrt{\frac{2\mu}{t} \left( 2\llnp(t) + \ln \frac{3}{\delta}\right)} + \frac 1 t \left(2\llnp(t) + \ln \frac 3 \delta \right)
    \right) \leq 2\delta
\end{align}
    \label{lem:uniformbern}
\end{lem}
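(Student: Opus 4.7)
The plan is to mirror the proof of Lemma~\ref{lem:uniformhoeffding}, replacing the sub-Gaussian MGF bound by the Bernstein-type MGF bound that is appropriate for Bernoulli random variables. Let $S_t := \sum_{i=1}^t (X_i - \mu)$ and $L(t) := 2\llnp(t) + \ln(3/\delta)$. A union bound over the two tails reduces the task to showing
\begin{align*}
\prob\!\left(\exists t : S_t \geq \sqrt{2 t \mu L(t)} + L(t)\right) \leq \delta,
\end{align*}
and then handling the symmetric event for $-S_t$ identically. The key MGF input will be $\Ex[e^{\lambda(X_i-\mu)}] \leq \exp(\mu \phi(\lambda))$ with $\phi(\lambda) := e^\lambda - 1 - \lambda$ (from $1+x \leq e^x$), which makes $M_t^\lambda := \exp(\lambda S_t - t\mu\phi(\lambda))$ a nonnegative supermartingale for every $\lambda > 0$.

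I would then peel the time axis into geometric blocks $t \in [a^k, a^{k+1})$ for some base $a > 1$, exactly as in Lemma~\ref{lem:uniformhoeffding}. Since $t \mapsto \sqrt{2t\mu L(t)} + L(t)$ is nondecreasing, within a peel the event is contained in $\{\exists t \leq a^{k+1}: S_t \geq y_k\}$ with $y_k := \sqrt{2 a^k \mu L_k} + L_k$ and $L_k := L(a^k)$. Doob's maximal inequality applied to $M_t^\lambda$ on $[0,a^{k+1}]$ yields
\begin{align*}
\prob\!\left(\exists t \leq a^{k+1} : S_t \geq y_k\right) \leq \exp\!\left(a^{k+1}\mu\phi(\lambda) - \lambda y_k\right),
\end{align*}
and optimizing over $\lambda \in (0,3)$ via the standard Bernstein bound $\phi(\lambda) \leq \lambda^2/(2(1 - \lambda/3))$ gives the Freedman-style tail $\exp\!\left(-y_k^2/(2(V_k + y_k/3))\right)$ with $V_k := a^{k+1}\mu$.

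The last step will be to check that $y_k$ is large enough that this exponent is bounded by $-L_k$, so that the per-peel probability is at most $(\delta/3)\exp(-2\llnp(a^k))$; summing over $k$ is then identical to the geometric $k^{-2}$ computation in Lemma~\ref{lem:uniformhoeffding}, giving the final $\delta$ bound. The hard part will be tuning constants: with the natural peeling base $a=2$ the inequality $y_k^2 \geq 2 L_k (V_k + y_k/3)$ just barely fails because $V_k = 2 a^k \mu$ carries an extra factor of $2$ relative to the term inside the square root of $y_k$. I expect this to be resolved either by choosing a base $a$ only slightly larger than $1$ (with the resulting constant absorbed harmlessly into the geometric sum), or by separately peeling the variance-dominated regime $V_k \gtrsim L_k$ from the range-dominated regime $V_k \lesssim L_k$ and applying a one-shot Bernstein bound in each. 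Either remedy preserves the overall structure, and the remaining computations are routine.
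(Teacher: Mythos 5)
Your plan is structurally the paper's proof of this lemma: dyadic peeling, a maximal inequality for the exponentiated partial sums, the Bernstein/sub-gamma tail from Boucheron et al., a per-block verification that the peeled threshold yields exponent $g_k = 2\llnp(2^k)+\ln(3/\delta)$, and the same $k^{-2}$ summation as in Lemma~\ref{lem:uniformhoeffding}. The genuine gap is that you stop at the decisive per-block verification and defer to two ``remedies,'' neither of which closes it. Shrinking the peeling base toward $1$ cannot work: with the Doob-at-right-endpoint bound the shortfall between the available leading term $\sqrt{2\mu a^k L_k}$ and the needed $\sqrt{2(a^{k+1}\mu)L_k}$ is $(\sqrt{a}-1)\sqrt{2\mu a^k L_k}$, which grows geometrically in $k$, while the only slack it could be absorbed into is the additive $+L_k$, of order $\ln\ln$; so for any fixed $a>1$ the check still fails on all sufficiently late blocks, and letting $a\downarrow 1$ simultaneously wrecks the probability budget, since the number of blocks scales like $1/\ln a$ and the affordable per-block exponent gets divided by $a$, which also dilutes the $\ln(3/\delta)$ term. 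The second remedy is not a remedy either: the regime where the check fails is exactly the variance-dominated one, and a ``one-shot Bernstein bound'' on a block is precisely the Doob-plus-Bernstein step you already wrote, with the same inflation of the variance to the block's right endpoint.

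The ingredient that actually removes that inflation is already in your first display: the compensated supermartingale $M_t^{\lambda}=\exp(\lambda S_t - t\mu\phi(\lambda))$. You define it and then immediately discard the compensation by bounding $t\mu\phi(\lambda)\le a^{k+1}\mu\phi(\lambda)$ before optimizing $\lambda$, which is what reintroduces the right-endpoint variance. Keeping the compensator and using the optional-stopping/maximal argument the paper uses for Lemma~\ref{lem:uniformwnconc} gives, for each fixed $\lambda$, a time-uniform linear boundary $S_t \le \lambda^{-1}g + \lambda^{-1}t\mu\phi(\lambda)$ with failure probability $e^{-g}$; tuning $\lambda_k$ so the line is tangent near the geometric mean of each block makes the boundary track $\sqrt{2\mu t\,g}$ up to a factor $(1+\sqrt{a})^2/(4\sqrt{a})$ (about $1.03$ for $a=2$) instead of $\sqrt{a}$, a loss small enough to be absorbed by the additive term and the constant $3$ inside the logarithm. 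For comparison, the paper finishes with the uncompensated Doob bound plus a direct termwise verification of $f^2\ge 2g(2^{k+1}\mu+f/3)$ for $f=\sqrt{2^{k+1}\mu g}+g$ --- exactly the inequality you say fails. Your worry there is legitimate (that inequality does fail once $2^{k+1}\mu$ exceeds a small constant multiple of $g$, and the paper's expansion appears to drop a factor of $3$ in the $2g\cdot 2^{k+1}\mu$ term), so you cannot simply wave at constant tuning: you need to either carry out the compensated-supermartingale route above or produce a corrected constant-tracking, and your write-up currently contains neither.
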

\begin{proof}

    \begin{align}
        &\prob \left(\exists t : \hat \mu_t - \mu \geq
        \sqrt{\frac{2\mu}{t} \left( 2\llnp(t) + \ln \frac{3}{\delta}\right)} + \frac 1 t \left(2\llnp(t) + \ln \frac 3 \delta \right)
    \right) \\
        = &
        \prob \left(\exists t : S_t \geq
        \sqrt{2\mu t \left( 2\llnp(t) + \ln \frac{3}{\delta}\right)} + 2\llnp(t) + \ln \frac 3 \delta
    \right) \\
        \leq & \sum_{k=0}^\infty
        \prob \left(\exists t \leq 2^{k+1} : S_t \geq 
        \sqrt{2\mu 2^k \left( 2\llnp(2^k) + \ln \frac{3}{\delta}\right)} + 2\llnp(2^k) + \ln \frac 3 \delta
    \right) 
    \end{align}
    Let $g = 2\llnp(2^k) + \ln \frac 3 \delta$ and $f = \sqrt{2^{k+1} \mu g} + g$.
    Further define $S_t = \sum_{i=1}^t X_i - t\mu$ and $M_t = \exp(\lambda S_t)$ which is by construction a nonnegative submartingale. 
Applying Doob's maximal inequality for nonnegative submartingales, we bound
    \begin{align}
        \prob\left( \exists t \leq 2^{k+1} : S_t \geq f\right)
        = \prob\left( \max_{i\leq 2^{k+1}} M_i \geq \exp(\lambda f) \right)
        \leq \frac{ \Ex[M_{2^{k+1}}] }{\exp(\lambda f)}
        = \exp\left( \ln \Ex[M_{2^{k+1}}] - \lambda f \right).
    \end{align}
    Since this holds for all $\lambda \in \reals$, we can bound
    \begin{align}
        \prob\left( \exists t \leq 2^{k+1} : S_t \geq f\right)
        \leq \exp\left( - \sup_{\lambda \in \reals} \left( \lambda f - \ln \Ex[M_{2^{k+1}}]\right)\right)
    \end{align}
    and using Corollary~2.11 by \citet{Boucheron2013} (see also note below proof of Corollary~2.11) bound that by
    \begin{align}
        \exp\left( - \frac{f^2}{2( 2^{k+1}\mu + f/3)}\right) 
    \end{align}
    We now argue that this quantity can be upper-bounded by $\exp(-g)$. This is equivalent to
    \begin{align}
        - \frac{f^2}{2( 2^{k+1}\mu + f/3)} \leq &  -g\\
        f^2 \geq & 2g( 2^{k+1}\mu + f/3) = \frac 2 3 gf + \frac{2^{k+2}}{3} \mu g\\
        g^2 + 2 \sqrt{2^{k+1} \mu g}g +  2^{k+1} \mu g  \geq &  \frac 2 3 g^2 + \frac 2 3 \sqrt{2^{k+1} \mu g} g + \frac{2^{k+2}}{3} \mu g\\
        \frac 1 3 g^2 +  \frac 4 3 \sqrt{2^{k+1} \mu g}g + \frac 1 3 2^{k+1} \mu g  \geq &  0.
    \end{align}
Each line is an equivalent inequality since $g, f \geq 0$ and each term on the left in the final inequality is nonnegative.
Hence, we get $
        \prob\left( \exists t \leq 2^{k+1} : S_t \geq f\right) \leq \exp(-g)$.
    Following now the arguments from the proof of Lemma~\ref{lem:uniformhoeffding} in Equations~\eqref{eqn:exp-glem1}--\eqref{eqn:exp-glem1fin}, we obtain that
    \begin{align}
        \prob \left(\exists t : \hat \mu_t - \mu \geq
        \sqrt{\frac{2\mu}{t} \left( 2\llnp(t) + \ln \frac{3}{\delta}\right)} + \frac 1 t \left(2\llnp(t) + \ln \frac 3 \delta \right)
    \right) \leq \delta.
    \end{align}
For the other direction, we proceed analogously to above and arrive at
    \begin{align}
        \prob\left( \exists t \leq 2^{k+1} : -S_t \geq f\right)
        \leq \exp\left( - \sup_{\lambda \in \reals} \left( -\lambda f - \ln \Ex[M_{2^{k+1}}]\right)\right)
    \end{align}
which we bound similarly to above by
    \begin{align}
        \exp\left( - \frac{f^2}{2( 2^{k+1}\mu - f/3)}\right) \leq
        \exp\left( - \frac{f^2}{2( 2^{k+1}\mu + f/3)}\right) \leq \exp(-g).
    \end{align}
\end{proof}

\begin{lem}[Uniform L1-Deviation Bound for Empirical Distribution]
    Let $X_1, X_2, \dots$ be a sequence of i.i.d. categorical variables on $[U]$ with distribution $P$.
Then for all $\delta \in (0, 1]$
\begin{align}
\prob\left( \exists t \, : \, \|\hat P_t - P\|_1 \geq \sqrt{\frac{4}{t}\left( 2 \llnp(t) + \ln\frac{3 (2^U - 2)}{\delta} \right)}\right) \leq \delta
\end{align}
where $\hat P_t$ is the empirical distribution based on samples $X_1 \dots X_t$.
\label{lem:l1deviation}
\end{lem}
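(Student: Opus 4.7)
The strategy is to reduce the uniform L1 bound to a union of uniform Hoeffding bounds over subsets, exactly mirroring the standard (fixed-$t$) Weissman argument, but invoking Lemma~\ref{lem:uniformhoeffding} in place of the classical Hoeffding inequality. The key identity I will use is the variational representation
\begin{align*}
\|\hat P_t - P\|_1 \;=\; 2\max_{A \subsetneq [U],\,A \neq \emptyset} \bigl(\hat P_t(A) - P(A)\bigr),
\end{align*}
which holds because the positive and negative parts of the signed measure $\hat P_t - P$ sum to zero, and the extremal set $A^\star = \{i : \hat P_t(i) > P(i)\}$ is neither empty nor everything in the nontrivial case (when it is trivial, the bound is zero and holds automatically). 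Note that the maximum is one-sided: the corresponding complement subset captures deviations in the other direction.

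Next, I would fix an arbitrary non-trivial subset $A \subseteq [U]$ and define $Y_i^A = \indicator{X_i \in A} - P(A)$. This is an i.i.d.\ (hence martingale-difference) sequence bounded in $[-1,1]$, so by Hoeffding's lemma it is conditionally $\sigma^2$-subgaussian with $\sigma^2 = 1/4$. Applying the one-sided half of Lemma~\ref{lem:uniformhoeffding} (the proof of that lemma bounds each tail separately by $\delta$, yielding the stated $2\delta$ only after a union bound) with parameter $\delta' = \delta/(2^U - 2)$ gives
\begin{align*}
\prob\!\left(\exists t \,:\, \hat P_t(A) - P(A) \geq \sqrt{\frac{1}{t}\!\left(2\llnp(t) + \ln\tfrac{3(2^U-2)}{\delta}\right)}\right) \leq \frac{\delta}{2^U-2}.
\end{align*}

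A union bound over the $2^U - 2$ non-trivial subsets $A$ then gives
\begin{align*}
\prob\!\left(\exists t, A \,:\, \hat P_t(A) - P(A) \geq \sqrt{\frac{1}{t}\!\left(2\llnp(t) + \ln\tfrac{3(2^U-2)}{\delta}\right)}\right) \leq \delta,
\end{align*}
and combining with the variational identity yields $\|\hat P_t - P\|_1 \leq 2\sqrt{(1/t)(2\llnp(t) + \ln(3(2^U-2)/\delta))} = \sqrt{(4/t)(2\llnp(t) + \ln(3(2^U-2)/\delta))}$ simultaneously for all $t$, with probability at least $1-\delta$. The only step that requires any care is bookkeeping the factor of $2$: running Lemma~\ref{lem:uniformhoeffding} two-sided would cost an extra factor of $2$ in the logarithm, whereas using the variational identity (rather than $\max_A |\hat P_t(A)-P(A)|$) avoids this and produces exactly the constant $3(2^U-2)/\delta$ stated in the lemma. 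There is no substantive obstacle beyond this accounting; the rest is a direct invocation of already-established results.
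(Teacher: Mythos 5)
Your proposal is correct and follows essentially the same route as the paper: the variational identity $\|\hat P_t - P\|_1 = 2\max_{A}(\hat P_t(A)-P(A))$, a union bound over the $2^U-2$ non-trivial subsets, and a one-sided time-uniform subgaussian deviation bound with $\sigma^2 = 1/4$ per subset (the paper simply re-runs the dyadic peeling and Doob maximal-inequality argument of Lemma~\ref{lem:uniformhoeffding} inline rather than citing it as a black box), yielding exactly the stated constant. One small wording fix: the subgaussian constant $\sigma^2=1/4$ comes from $\indicator{X_i\in A}-P(A)$ lying in the length-one interval $[-P(A),\,1-P(A)]$, not from being ``bounded in $[-1,1]$'' (which via Hoeffding's lemma would only give $\sigma^2=1$).
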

\begin{proof}
We use the identity $\|Q - P\|_1 = 2 \max_{B \subseteq \mathcal B} Q(B) - P(B)$ which
holds for all distributions $P, Q$ defined on the finite set $\mathcal B$ to bound
\begin{align}
& \prob\left( \exists t \, : \, \|\hat P_t - P\|_1 \geq \sqrt{\frac{4}{t}\left( 2 \llnp(t) + \ln\frac{3 (2^U - 2)}{\delta} \right)}\right)\\
= & \prob\left( \max_{t, B \subseteq [U]} \hat P_t(B) - P(B) \geq \frac 1 2 \sqrt{\frac{4}{t}\left( 2 \llnp(t) + \ln\frac{3 (2^U - 2)}{\delta} \right)}\right)\\
\leq & \sum_{B \subseteq [U]} \prob\left( \max_{t} \hat P_t(B) - P(B) \geq \sqrt{\frac{1}{t}\left( 2 \llnp(t) + \ln\frac{3 (2^U - 2)}{\delta} \right)}\right).
\end{align}
Define now $S_t = \sum_{i=1}^t \indicator{X_1 \in B} - tP(B)$ which is a martingale sequence. Then the last line above is equivalent to
\begin{align}
& \sum_{B \subseteq [U]} \prob\left( \max_{t} S_t \geq \sqrt{t\left( 2 \llnp(t) + \ln\frac{3 (2^U - 2)}{\delta} \right)}\right)\\
\leq & 
 \sum_{B \subseteq [U]} \prob\left( \max_{k \in \mathbb N, t \in [2^{k}, 2^{k+1}]} S_t \geq \sqrt{t\left( 2 \llnp(t) + \ln\frac{3 (2^U - 2)}{\delta} \right)}\right)\\
\leq & \sum_{B \subseteq [U]} \sum_{k=0}^\infty \prob\left( \max_{t \in [2^{k}, 2^{k+1}]} S_t \geq \sqrt{t\left( 2 \llnp(t) + \ln\frac{3 (2^U - 2)}{\delta} \right)}\right)\\
\leq & \sum_{B \subseteq [U]} \sum_{k=0}^\infty \prob\left( \max_{t \leq 2^{k+1}} S_t \geq \sqrt{2^k \left( 2 \llnp(2^k) + \ln\frac{3 (2^U - 2)}{\delta} \right)}\right)\\
     = & \sum_{B \subseteq [U]} \sum_{k=0}^\infty \prob\left( \max_{t \leq 2^{k+1}} \exp(\lambda S_t) \geq \exp(\lambda f)\right)\\
    = & \sum_{B \subseteq [U], B \neq \emptyset, B \neq [U]} \sum_{k=0}^\infty \prob\left( \max_{t \leq 2^{k+1}} \exp(\lambda S_t) \geq \exp(\lambda f)\right)
\end{align}
where
    $f = \sqrt{2^k \left( 2 \llnp(2^k) + \ln\frac{3 (2^U - 2)}{\delta} \right)}$ and $\lambda \in \reals$ and the last equality follows from the fact that for $B = \emptyset$ and $B = [U]$ the difference between the distributions has to be $0$.
Since $\indicator{X_1 \in B} - tP(B)$ is a centered Bernoulli variable it
is $1/2$-subgaussian and so $S_t$ satisfies $\Ex[\exp(\lambda S_t)] \leq
\exp(\lambda^2 t / 8)]$. Since $S_t$ is a martingale, $\exp(\lambda S_t)$
is a nonnegative sub-martingale and we can apply the maximal inequality to
bound
\begin{align}
    \prob\left( \max_{t \leq 2^{k+1}} \exp(\lambda S_t) \geq \exp(\lambda f)\right)
\leq \exp\left(\frac 1 8 \lambda^2 2^{k+1} - \lambda f\right).
\end{align}
Choosing $\lambda = \frac{4f}{2^{k+1}}$,  we get
$\prob\left( \max_{t \leq 2^{k+1}} \exp(\lambda S_t) \geq \exp(\lambda f)\right) \leq 
\exp\left(-\frac{f^2}{2^k}\right)$.
Hence, using the same steps as in the proof of Lemma~\ref{lem:uniformhoeffding}, we get
$\prob\left( \max_{t \leq 2^{k+1}} \exp(\lambda S_t) \geq \exp(\lambda f) \right)
\leq \frac {\delta}{3(2^{[U]} -2)}\min\left\{1, \frac 1 {k^2 \ln 2} \right\}$ and then
\begin{align}
    & \prob\left( \exists t \, : \, \|\hat P_t - P\|_1 \geq \sqrt{\frac{4}{t}\left( 2 \llnp(t) + \ln\frac{3 (2^U - 2)}{\delta} \right)}\right)\\
    \leq & \sum_{B \subseteq [U], B \neq \emptyset, B \neq [U]} \frac{\delta}{3(2^{[U]} - 2)} \sum_{k=0}^\infty \min\left\{1, \frac 1 {k^2 \ln 2} \right\}
    \leq  \sum_{B \subseteq [U], B \neq \emptyset, B \neq [U]} \frac{\delta}{2^{[U]} - 2} 
    = \delta.
\end{align}
\end{proof}

\begin{lem}
    Let $\mathcal F_i$ for $i=1\dots$ be a filtration and $X_1, \dots X_n$ be a sequence of Bernoulli random variables with $\prob(X_i = 1 | \mathcal F_{i-1}) = P_i$ with $P_i$ being $\mathcal F_{i-1}$-measurable and $X_i$  being $\mathcal F_{i}$ measurable.
    It holds that
    \begin{align}
        \prob \left(\exists n : \,\, \sum_{t=1}^n X_t < \sum_{t=1}^n P_t / 2 - W  \right) \leq e^{- W}
    \end{align}
    \label{lem:uniformwnconc}
\end{lem}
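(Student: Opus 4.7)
The plan is a standard exponential supermartingale (Chernoff) argument combined with Ville's inequality, tuned so that the constant $\tfrac{1}{2}$ appearing in the statement falls out cleanly. First, I would rewrite the event of interest as
\begin{align*}
\Big\{\exists n:\ \sum_{t=1}^n X_t < \tfrac12\sum_{t=1}^n P_t - W\Big\}
= \Big\{\exists n:\ \sum_{t=1}^n (P_t - X_t) > \tfrac12 \sum_{t=1}^n P_t + W\Big\}.
\end{align*}
This is a lower-deviation event for a sum of conditionally Bernoulli variables, which is the natural setting for a multiplicative Chernoff bound via a nonnegative supermartingale.

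Next I would construct the supermartingale. Using $1+x\leq e^x$, for any $\lambda>0$ we get $\Ex[e^{-\lambda X_t}\mid \mathcal F_{t-1}] = 1-P_t+P_t e^{-\lambda} \leq \exp\bigl(-P_t(1-e^{-\lambda})\bigr)$, hence
\begin{align*}
\Ex\!\left[\exp\bigl(\lambda(P_t-X_t) - P_t(\lambda - 1 + e^{-\lambda})\bigr)\,\Big|\,\mathcal F_{t-1}\right] \leq 1.
\end{align*}
Therefore $M_n := \exp\bigl(\lambda\sum_{t=1}^n (P_t-X_t) - (\lambda - 1 + e^{-\lambda})\sum_{t=1}^n P_t\bigr)$ is a nonnegative supermartingale with $M_0 = 1$. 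The crucial choice is $\lambda = 1$, giving the constant $\lambda-1+e^{-\lambda} = 1/e < 1/2$, which is exactly what is needed to match the $\tfrac12$ factor in the statement.

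Finally, I would apply Ville's maximal inequality for nonnegative supermartingales: $\prob(\exists n:\ M_n \geq e^W) \leq \Ex[M_0]\cdot e^{-W} = e^{-W}$. On the deviation event from the first step,
\begin{align*}
\ln M_n = \sum_{t=1}^n (P_t-X_t) - \tfrac{1}{e}\sum_{t=1}^n P_t > \Bigl(\tfrac12-\tfrac{1}{e}\Bigr)\sum_{t=1}^n P_t + W \geq W,
\end{align*}
so $M_n > e^W$ and the claimed bound follows. There is no real obstacle beyond choosing $\lambda=1$ so that the loose slack $1/e \leq 1/2$ absorbs the $-P_t(\lambda-1+e^{-\lambda})$ correction; any other $\lambda\geq 1$ with $\lambda/2 \geq \lambda-1+e^{-\lambda}$ would give a (weaker) bound of the form $e^{-\lambda W}$, and $\lambda=1$ is the minimal such choice that still matches $e^{-W}$.
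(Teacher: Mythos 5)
Your proof is correct and follows essentially the same route as the paper: an exponential supermartingale at $\lambda=1$ combined with the maximal inequality for nonnegative supermartingales, which the paper re-derives via optional stopping and Fatou's lemma rather than citing Ville's inequality. The only cosmetic difference is the conditional moment bound: you use the exact Bernoulli MGF with $1+x\le e^x$, giving a correction $P_t/e$ whose slack against $P_t/2$ you then absorb, whereas the paper invokes $\Ex[e^{-\lambda(X_t-P_t)}\mid\mathcal F_{t-1}]\le e^{\lambda^2 P_t/2}$ so that the $P_t/2$ in the statement appears exactly.
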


\begin{proof}
    $P_t - X_t$ is a Martingale difference sequence with respect to the filtration $\mathcal F_t$. 
    Since $X_t$ is nonnegative and has finite second moment, we have for any $\lambda > 0$ that
    $\Ex\left[e^{-\lambda (X_t - P_t)} | \mathcal F_{t-1} \right] \leq e^{\lambda^2 P_t / 2}$ (Exercise 2.9, \citet{Boucheron2013}). 
    Hence, we have
    \begin{align}
        \Ex\left[ e^{\lambda(P_t - X_t) - \lambda^2 P_t / 2} | \mathcal F_{t-1} \right] \leq 1
    \end{align}
    and by setting $\lambda = 1$, we see that 
    \begin{align}
        M_n = e^{\sum_{t=1}^n (-X_t + P_t / 2)}
    \end{align}
    is a supermartingale.
    It hence holds by Markov's inequality
    \begin{align}
        \prob \left( \sum_{t=1}^n (-X_t + P_t / 2) \geq W \right)
        = 
        \prob \left( M_n \geq e^W \right)
        \leq e^{-W} \Ex[M_n] \leq e^{-W}
    \end{align}
    wich gives us the derised result 
    \begin{align}
        \prob \left( \sum_{t=1}^n X_t \leq \sum_{t=1}^n P_t / 2 - W \right)
        \leq e^{-W}
    \end{align}
    for a fixed $n$.
    We define now the stopping time $\tau = \min\{ t \in \mathbb N \, : \, M_t > e^W \}$ and the sequence $\tau_n = \min\{ t \in \mathbb N \, : \, M_t > e^W \vee t \geq n \}$.
    Applying the convergence theorem for nonnegative supermartingales
    (Theorem~5.2.9 in \citet{Durrett2010}), we get that $\lim_{t \rightarrow
    \infty} M_t$ is well-defined almost surely. Therefore, $M_\tau$ is
    well-defined even when $\tau = \infty$.
    By the optional stopping theorem for nonnegative supermartingales (Theorem
    5.7.6 by \citet{Durrett2010}), we have $\Ex[M_{\tau_n}] \leq \Ex[M_0] \leq
    1$ for all $n$ and applying Fatou's lemma, we obtain 
    $\Ex[M_\tau] = \Ex[\lim_{n \rightarrow \infty} M_{\tau_n}] \leq \lim \inf_{n \rightarrow \infty} \Ex[M_{\tau_n}] \leq 1$. 
    Using Markov's inequality, we can finally bound
    \begin{align}
        \prob\left( \exists n:\,\, \sum_{t=1}^n X_t < \frac 1 2 \sum_{t=1}^n P_t - W \right)
        \leq
        \prob ( \tau < \infty) \leq \prob ( M_{\tau} > e^W) 
        \leq e^{-W} \Ex[M_{\tau}] \leq e^{-W}.
    \end{align}
\end{proof}

\end{document}